\newcommand{\best}[2]{\textcolor{red}{$\mathbf{#1}${\tiny$\scriptscriptstyle\pm$#2}}}
\newcommand{\scnd}[2]{\textcolor{blue}{$\mathbf{#1}${\tiny$\scriptscriptstyle\pm$#2}}}
\newcommand{\ctlayer}{\textsc{CT-Layer}\xspace}
\newcommand{\gaplayer}{\textsc{GAP-Layer}\xspace}
\newcommand\mydots{\hbox to 1em{.\hss.\hss.}}
\newcommand{\stdt}[1]{{\tiny$\scriptscriptstyle\pm$#1}}
\DeclareMathOperator{\mlp}{MLP}
\newcommand{\defeq}{\mathrel{\mathop:}=}
\newtheorem{theorem}{Theorem}
\newtheorem{definition}{Definition}
\newtheorem{proposition}{Proposition}
\title[DiffWire: Inductive Graph Rewiring via the Lov\'asz Bound]{DiffWire: Inductive Graph Rewiring via the Lov\'asz Bound}
\author[A. Arnaiz-Rodriguez et al.]{%
  Adrian Arnaiz-Rodriguez \\
  ELLIS Alicante\\
  \texttt{adrian@ellisalicante.org} \\
  \And
  Ahmed Begga \\
  University of Alicante \\
  \AND
  Francisco Escolano \\
  ELLIS Alicante\\
  \texttt{sco@ellisalicante.org} \\
  \And
  Nuria Oliver \\
  ELLIS Alicante\\
  \texttt{nuria@ellisalicante.org} \\
}
\begin{document}

\maketitle

\begin{abstract}
Graph Neural Networks (GNNs) have been shown to achieve competitive results to tackle graph-related tasks, such as node and graph classification, link prediction and node and graph clustering in a variety of domains. Most GNNs use a message passing framework and hence are called MPNNs. Despite their promising results, MPNNs have been reported to suffer from over-smoothing, over-squashing and under-reaching. Graph rewiring and graph pooling have been proposed in the literature as solutions to address these limitations. However, most state-of-the-art graph rewiring methods fail to preserve the global topology of the graph, are neither differentiable nor inductive, and require the tuning of hyper-parameters. In this paper, we propose \textsc{DiffWire}, a novel framework for graph rewiring in MPNNs that is principled, fully differentiable and parameter-free by leveraging the Lovász bound. The proposed approach provides a unified theory for graph rewiring by proposing two new, complementary layers in MPNNs: \ctlayer, a layer that learns the commute times and uses them as a relevance function for edge re-weighting; and \gaplayer, a layer to optimize the spectral gap, depending on the nature of the network and the task at hand. We empirically validate the value of each of these layers separately with benchmark datasets for graph classification. We also perform preliminary studies on the use of \ctlayer for homophilic and heterophilic node classification tasks. \textsc{DiffWire} brings together the learnability of commute times to related definitions of curvature, opening the door to creating more expressive MPNNs. 
\end{abstract}

\section{Introduction}

Graph Neural Networks (GNNs) \cite{gori2005, scarselli2008graph} are a class of deep learning models applied to graph structured data. They have been shown to achieve state-of-the-art results in many graph-related tasks, such as node and graph classification \cite{kipf2017semi, gilmer2017message}, link prediction \cite{kipf2016variational} and node and graph clustering \cite{cao2016deep, tian2014learning}, and in a variety of domains, including image or molecular structure classification, recommender systems and social influence prediction \cite{Wu2021survey}.

Most GNNs use a message passing framework and thus are referred to as Message Passing Neural Networks (MPNNs)~\cite{gilmer2017message} . In these networks, every node in each layer receives a message from its adjacent neighbors. All the incoming messages at each node are then aggregated and used to update the node's representation via a learnable non-linear function --which is typically implemented by means of a neural network. The final node representations (called node embeddings) are used to perform the graph-related task at hand (e.g. graph classification). MPNNs are extensible, simple and have proven to yield competitive empirical results. Examples of MPNNs include GCN \cite{kipf2017semi}, GAT \cite{velickovic2018gat}, GATv2 \cite{brody2021attentive}, GIN \cite{xu2018how} and GraphSAGE \cite{hamilton2017inductive}. However, they typically use transductive learning, i.e. the model observes both the training and testing data during the training phase, which might limit their applicability to graph classification tasks. 

However, MPNNs also have important limitations due to the inherent complexity of graphs. Despite such complexity, the literature has reported best results when MPNNs have a small number of layers, because networks with many layers tend to suffer from \emph{over-smoothing} \cite{li2018insights} and \emph{over-squashing} \cite{alon2021on}. However, this models fail to capture information that depends on the entire structure of the graph~\cite{Lovasz1996} and prevent the information flow to reach distant nodes. This phenomenon is called \emph{under-reaching} \cite{Barcelo2020expresiveness} and occurs when the MPNN's depth is smaller than the graph's diameter.

\emph{Over-smoothing}~\cite{hoang2021revisiting, Oono2020, Wu2021survey,zhou2018review} takes place when the embeddings of nodes that belong to different classes become indistinguishable. It tends to occur in MPNNs with many layers that are used to tackle short-range tasks, i.e. tasks where a node's correct prediction mostly depends on its local neighborhood.
Given this local dependency, it makes intuitive sense that adding layers to the network would not help the network's performance. 

Conversely, long-range tasks require as many layers in the network as the range of the interaction between the nodes. However, as the number of layers in the network increases, the number of nodes feeding into  each of the node's receptive field also increases exponentially, leading to \emph{over-squashing}~\cite{alon2021on,topping2022understanding}: the information flowing from the receptive field composed of many nodes is compressed in fixed-length node vectors, and hence the graph fails to correctly propagate the messages coming from distant nodes. Thus, over-squashing emerges due to the distortion of information flowing from distant nodes due to graph bottlenecks that emerge when the number of $k$-hop neighbors grows exponentially with $k$.



Graph pooling and \emph{graph rewiring} have been proposed in the literature as solutions to address these limitations \cite{alon2021on}. Given that the main infrastructure for message passing in MPNNs are the edges in the graph, and given that many of these edges might be noisy or inadequate for the downstream task~\cite{velivckovic2022message}, graph rewiring aims to identify such edges and edit them. 

Many graph rewiring methods rely on edge sampling strategies: first, the edges are assigned new weights according to a \emph{relevance function} and then they are re-sampled according to the new weights to retain the most relevant edges (i.e. those with larger weights). Edge relevance might be computed in different ways, including randomly~\cite{rong2020dropedge}, based on similarity~\cite{kazi2022dgm} or on the edge's curvature~\cite{topping2022understanding}. 

Due to the diversity of possible graphs and tasks to be performed with those graphs, optimal graph rewiring should include a \emph{variety of strategies} that are suited not only to the task at hand but also to the nature and structure of the graph. 

\paragraph{Motivation.} State-of-the-art edge sampling strategies have three significant \textbf{limitations}. First, most of the proposed methods \textbf{fail to preserve the global topology of the graph}. Second, most graph rewiring methods are neither \textbf{differentiable} nor \textbf{inductive}~\cite{topping2022understanding}. Third, relevance functions that depend on a diffusion measure (typically in the spectral domain) are \textbf{not parameter-free}, which adds a layer of complexity in the models. In this paper, we address these three limitations. 

\paragraph{Contributions and Outline.} The main contribution of this work is to propose a theoretical framework called \textsc{DiffWire} for graph rewiring in GNNs that is principled, differentiable, inductive, and parameter-free by leveraging the Lov\'{a}sz bound~\cite{Lovasz1996} given by Eq. \ref{eqn:lovasz}. This bound is a mathematical expression of the relationship between the \emph{commute times} (\emph{effective resistance distance}) and the network's \emph{spectral gap}. Given an unseen test graph, \textsc{DiffWire} predicts the optimal graph structure for the task at hand without any parameter tuning. Given the recently reported connection between commute times and curvature~\cite{Devrient2022}, and between curvature and the spectral gap~\cite{topping2022understanding}, the proposed framework provides a unified theory linking these concepts. Our aim is to leverage diffusion and curvature theories to propose a new approach for graph rewiring that preserves the graph's structure.

We first propose using the CT as a relevance function for edge re-weighting. Moreover, we develop a differentiable, parameter-free layer  in the GNN (\ctlayer) to learn the CT. Second, we propose an alternative graph rewiring approach by adding a layer in the network (\gaplayer) that optimizes the spectral gap according to the nature of the network and the task at hand. Finally, we empirically validate the proposed layers with state-of-the-art benchmark datasets in a graph classification task. We test our approach on a graph classification task to emphasize the inductive nature of \textsc{DiffWire}: the layers in the GNN (\ctlayer or \gaplayer) are trained to predict the CTs embedding or minimize the spectral gap for unseen graphs, respectively. This approach gives a great advantage when compared to SoTA methods that require optimizing the parameters of the models for each graph. \ctlayer and \gaplayer learn the weights during training to predict the optimal changes in the topology of any unseen graph in test time. Finally, we also perform preliminary node classification experiments in heterophilic and homophilic graphs using \ctlayer.

The paper is organized as follows: Section~\ref{sec:relatedwork} provides a summary of the most relevant related literature. Our core technical contribution is described in Section~\ref{sec:diffwire}, followed by our experimental evaluation and discussion in Section~\ref{sec:experiments}. Finally, Section~\ref{sec:futurework} is devoted to conclusions and an outline of our future lines of research.

\section{Related Work}
\label{sec:relatedwork}

In this section we provide an overview of the most relevant works that have been proposed in the literature to tackle the challenges of over-smoothing, over-squashing and under-reaching in MPNNs by means of graph rewiring and pooling.

\textbf{Graph Rewiring in MPNNs.} \emph{Rewiring} is a process of changing the graph's structure to control the information flow and hence improve the ability of the network to perform the task at hand (e.g. node or graph classification, link prediction...). Several approaches have been proposed in the literature for graph rewiring, such as connectivity diffusion \cite{klicpera2019diffusion} or evolution \cite{topping2022understanding}, adding new bridge-nodes \cite{battaglia2018relational} and multi-hop filters \cite{frasca2020sign}, and neighborhood \cite{hamilton2017inductive}, node \cite{papp2021dropgnn} and edge \cite{rong2020dropedge} sampling.


Edge sampling methods sample the graph's edges based on their weights or relevance, which might be computed in different ways. \citet{rong2020dropedge} show that randomly dropping edges during training improves the performance of GNNs. \citet{klicpera2019diffusion}, define edge relevance according to the coefficients of a parameterized diffusion process over the graph and then edges are selected using a threshold rule. For \citet{kazi2022dgm}, edge relevance is given by the similarity between the nodes' attributes. 
In addition, a reinforcement learning process rewards edges leading to a correct classification and penalizes the rest.  

Edge sampling-based rewiring has been proposed to tackle over-smoothing and over-squashing in MPNNs. 
Over-smoothing may be relieved by removing inter-class edges~\cite{Chen2020relievingoversmooth}. However, this strategy is only valid when the graph is homophilic, i.e. connected nodes tend to share similar attributes. Otherwise, removing these edges could lead to over-squashing~\cite{topping2022understanding} if their removal obstructs the message passing between distant nodes belonging to the same class (heterophily). 
Increasing the size of the bottlenecks of the graph via rewiring has been shown to improve node classification performance in heterophilic graphs, but not in homophilic graphs~\cite{topping2022understanding}. 
Recently, \citet{topping2022understanding} propose an edge relevance function given by the edge curvature to mitigate over-squashing. They identify the bottleneck of the graph by computing the Ricci curvature of the edges. Next, they remove edges with high curvature and add edges around minimal curvature edges.

\textbf{Graph Structure Learning (GSL).} GSL methods~\cite{zhu2021gsl} aim to learn an optimized graph structure and its corresponding representations \emph{at the same time}. \textsc{DiffWire} could be seen from the perspective of GSL: \ctlayer, as a metric-based, neural approach, and \gaplayer, as a direct-neural approach to optimize the structure of the graph to the task at hand.

\textbf{Graph Pooling.} \emph{Pooling} layers simplify the original graph by compressing it into a smaller graph or a vector via pooling operators, which range from simple \cite{mesquita2020rethinking} to more sophisticated approaches, such as  DiffPool~\cite{ying2018hierarchical} and MinCut pool~\cite{bianchi2020mincutpool}. 
Although graph pooling methods do not consider the edge representations, there is a clear relationship between pooling methods and rewiring since both of them try to quantify the flow of information through the graph's bottleneck.

\textbf{Positional Encodings (PEs)} A Positional Encoding is a feature that describes the global or local position of the nodes in the graph. These features are related to random walk measures and the Laplacian's eigenvectors~\cite{dwivedi2022recipe}. Commute Times embeddings (CTEs) may be considered an expressive form of PEs due to their spectral properties, i.e. their relation with the shortest path, the spectral gap or Cheeger constant. \citet{velingker2022affinity} recently proposed use the CTEs as PE or commute times (CT) as edge feature. They pre-compute the CTEs and CT and add it as node or edge features to improve the structural expressiveness of the GNN. PEs are typically pre-computed and then used to build more expressive graph architectures, either by concatenating them to the node features or by building transformer models~\cite{dwivedi2021generalization, lim2022sign}. 
Our work is related to PEs as \ctlayer learns the original PEs from the input $\mathbf{X}$ and the adjacency matrix $\mathbf{A}$ instead of pre-computing and potentially modifying them, as previous works do~\cite{li2020distance, dwivedi2021generalization, lim2022sign, velingker2022affinity}. Thus, \ctlayer may be seen as a method to automatically learn the PEs for graph rewiring.

\section{Proposed Approach: \textsc{DiffWire} for Inductive Graph Rewiring}
\label{sec:diffwire}

\textsc{DiffWire} provides a unified theory for graph rewiring by proposing
two new, complementary layers in MPNNs: first, \ctlayer, a layer that learns
the commute times and uses them as a relevance function for edge re-weighting;
and second, \gaplayer, a layer to optimize the spectral gap, depending on the
nature of the network and the task at hand.

\begin{figure}
\captionsetup{singlelinecheck=off}
\begin{center}
	\includegraphics[width=0.85\linewidth]{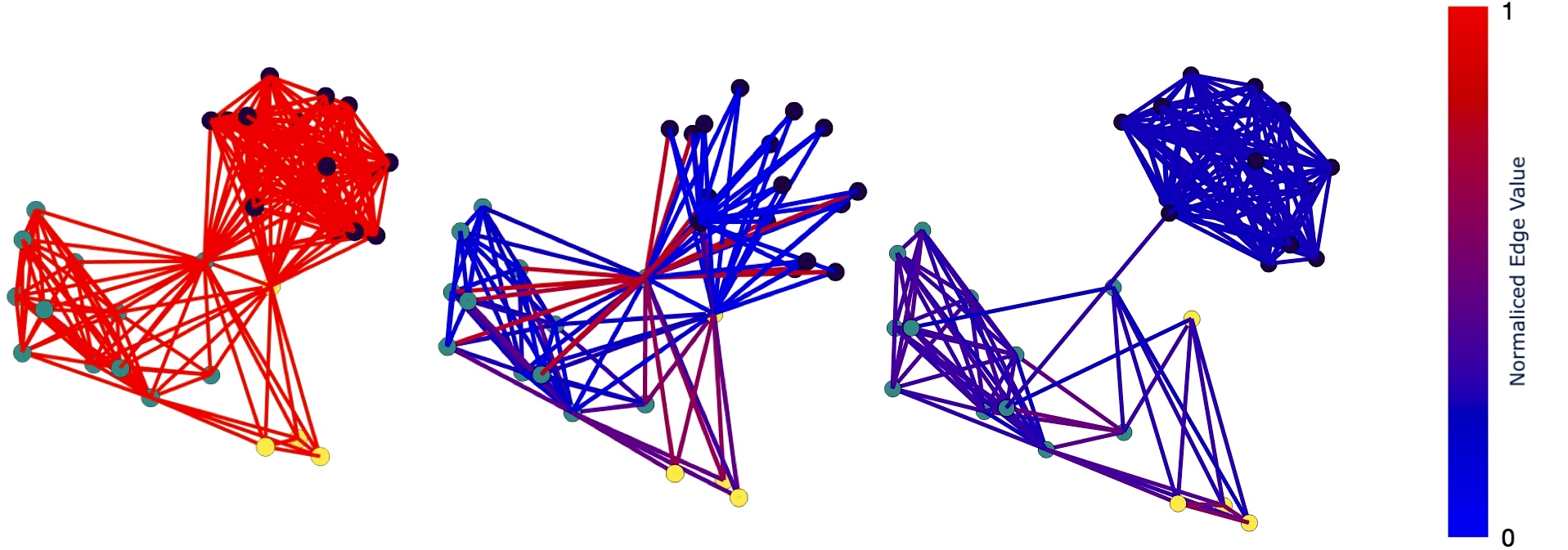}
	\caption{\centering \textsc{DiffWire}. Left: Original graph from COLLAB (test set). Center: Rewired graph after \ctlayer. Right: Rewired graph after \gaplayer. Colors indicate the strength of the edges.}
\label{fig:rewiring}
\end{center}
\end{figure}

In this section, we present the theoretical foundations for the definitions of \ctlayer and \gaplayer. First, we introduce the bound that our approach is based on: The Lov\'{a}sz bound. Table \ref{tab:notation} in \ref{subsec:App.ctlayer} summarizes the notation used in the paper.

\subsection{The Lov\'{a}sz Bound}
The Lov\'{a}sz bound, given by Eq.~\ref{eqn:lovasz}, was derived by Lov\'{a}sz in ~\cite{Lovasz1996} as a means of linking the spectrum governing a random walk in an undirected graph  $G=(V,E)$ with the \emph{hitting time} $H_{uv}$ between any two nodes $u$ and $v$ of the graph. $H_{uv}$ is the expected number of steps needed to reach (or hit) $v$ from $u$; $H_{vu}$ is defined analogously. The sum of both hitting times between the two nodes, $v$ and $u$, is the \emph{commute time} $CT_{uv} = H_{uv} + H_{vu}$. Thus, $CT_{uv}$ is the expected number of steps needed to hit $v$ from $u$ and go back to $u$. According to the Lov\'{a}sz bound:

\begin{equation}
\label{eqn:lovasz}
    \left| \frac{1}{vol(G)}CT_{uv}-\left(\frac{1}{d_u} + \frac{1}{d_v}\right)\right|\le \frac{1}{\lambda'_2}\frac{2}{d_{min}}
\end{equation}

where $\lambda'_2\ge 0$ is the \emph{spectral gap}, i.e. the first non-zero eigenvalue of ${\cal L} = \mathbf{I}-\mathbf{D}^{-1/2}\mathbf{A}\mathbf{D}^{-1/2}$ (normalized Laplacian~\cite{Chung1997}, where $\mathbf{D}$ is the degree matrix and $\mathbf{A}$, the adjacency matrix); $vol(G)$ is the volume of the graph (sum of degrees); $d_u$ and $d_v$ are the degrees of nodes $u$ and $v$, respectively; and $d_{min}$ is the minimum degree of the graph.

The term $CT_{uv}/vol(G)$ in Eq.~\ref{eqn:lovasz} is referred to as the \emph{effective resistance}, $R_{uv}$, between nodes $u$ and $v$. The bound states that the effective resistance between two nodes in the graph converges to or diverges from ($1/d_u + 1/d_v$), depending on whether the graph's spectral gap diverges from or tends to zero. The larger the spectral gap, the closer $CT_{uv}/vol(G)$ will be to $\frac{1}{d_u} + \frac{1}{d_v}$ and hence the less informative the commute times will be.

We propose two novel GNNs layers based on each side of the inequality in Eq.~\ref{eqn:lovasz}: \ctlayer, focuses on the left-hand side, and \gaplayer, on the right-hand side. The use of each layer depends on the nature of the network and the task at hand. In a graph classification task (our focus), \ctlayer is expected to yield good results when the graph's spectral gap is small; conversely, \gaplayer would be the layer of choice in graphs with large spectral gap.

The Lov\'{a}sz bound was later refined by \citet{vonluxburg14a}. App.~\ref{subsubsec:vonLuxburg} presents this bound along with its relationship with $R_{uv}$ as a global measure of node similarity.
Once we have defined both sides of the Lov\'{a}sz bound, we proceed to describe their implications for graph rewiring.

\subsection{\textsc{CT-Layer}: Commute Times for Graph Rewiring}
\label{subsec:CTLAYER}
We focus first on the left-hand side of the Lov\'{a}sz bound which concerns the effective resistances $CT_{uv}/vol(G) = R_{uv}$ (or commute times)\footnote{We use commute times and effective resistances interchangeably as per their use in the literature} between any two nodes in the graph.

\paragraph{Spectral Sparsification leads to Commute Times.} 
Graph sparsification in undirected graphs may be formulated as finding a graph $H=(V,E')$ that is \emph{spectrally similar} to the original graph $G=(V,E)$ with $E'\subset E$. Thus, the spectra of their Laplacians, $\mathbf{L}_G$ and $\mathbf{L}_H$ should be similar.

\begin{theorem}[Spielman and Srivastava~\cite{Spielman2011resistance}]\label{sparsify}
Let {\tt Sparsify}(G, $q$) --> G' be a sampling algorithm of graph $G=(V,E)$, where edges $e\in E$ are sampled with probability $q\propto R_e$ (proportional to the effective resistance). For $n=|V|$ sufficiently large and $1/\sqrt{n}< \epsilon\le 1$, $O(n\log n/\epsilon^2)$ samples are needed to satisfy
 $   \forall \mathbf{x}\in\mathbb{R}^n:\; (1-\epsilon)\mathbf{x}^T\mathbf{L}_G\mathbf{x}\le\mathbf{x}^T\mathbf{L}_{G'}\mathbf{x}\le (1+\epsilon)\mathbf{x}^T\mathbf{L}_G\mathbf{x}\;$, 
with probability $\geq 1/2$.
\end{theorem}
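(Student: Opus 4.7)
The plan is to reduce the quadratic-form preservation statement to an operator-norm concentration inequality on a carefully chosen matrix, and then invoke a matrix concentration result. First, I would factor the Laplacian as $\mathbf{L}_G = \mathbf{B}^T \mathbf{W} \mathbf{B}$, where $\mathbf{B}\in\mathbb{R}^{|E|\times n}$ is the signed edge-vertex incidence matrix and $\mathbf{W}$ is the diagonal matrix of edge weights. The sampling procedure produces $\mathbf{L}_{G'} = \mathbf{B}^T \widetilde{\mathbf{W}} \mathbf{B}$ where $\widetilde{\mathbf{W}}$ is a random reweighting that keeps each edge $e$ with probability $p_e \propto R_e$ and rescales by $1/p_e$ to remain unbiased in expectation.

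The key reformulation is to introduce the matrix $\mathbf{\Pi} \defeq \mathbf{W}^{1/2}\mathbf{B}\mathbf{L}_G^+\mathbf{B}^T \mathbf{W}^{1/2}$, which is the orthogonal projection onto the image of $\mathbf{W}^{1/2}\mathbf{B}$. A short computation shows that its diagonal entries satisfy $\mathbf{\Pi}(e,e) = w_e R_e$, which is precisely the quantity that controls the ``importance'' of each edge and motivates sampling with probability $\propto R_e$. Moreover, the desired inequality $(1-\epsilon)\mathbf{x}^T\mathbf{L}_G\mathbf{x} \le \mathbf{x}^T\mathbf{L}_{G'}\mathbf{x} \le (1+\epsilon)\mathbf{x}^T\mathbf{L}_G\mathbf{x}$ for all $\mathbf{x}\in\mathbb{R}^n$ is equivalent, after the change of variable $\mathbf{y} = \mathbf{L}_G^{1/2}\mathbf{x}$ restricted to the image of $\mathbf{L}_G^{1/2}$, to showing $\|\mathbf{\Pi}\widetilde{\mathbf{S}}\mathbf{\Pi} - \mathbf{\Pi}\|_{\mathrm{op}} \le \epsilon$, where $\widetilde{\mathbf{S}}$ encodes the random subsampling of edges.

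The main obstacle is the concentration step: expressing $\mathbf{\Pi}\widetilde{\mathbf{S}}\mathbf{\Pi}$ as a sum of independent rank-one random matrices $\mathbf{Y}_i = \frac{1}{q\, p_{e_i}} \mathbf{\Pi}\mathbf{e}_{e_i}\mathbf{e}_{e_i}^T\mathbf{\Pi}$ (where $q$ is the number of samples), one can apply a matrix Chernoff/Bernstein bound, or equivalently the Rudelson-type inequality for sums of rank-one projections, to the centered variables $\mathbf{Y}_i - \mathbb{E}[\mathbf{Y}_i]$. The per-sample norm bound follows because $\|\mathbf{\Pi}\mathbf{e}_{e}\|^2 = \mathbf{\Pi}(e,e) = w_e R_e$ and the sampling probabilities are chosen exactly proportional to this quantity, so each $\mathbf{Y}_i$ has operator norm at most $\mathrm{tr}(\mathbf{\Pi})/q = (n-1)/q$. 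Plugging this into the matrix concentration inequality yields that $q = O(n\log n / \epsilon^2)$ samples suffice for $\|\mathbf{\Pi}\widetilde{\mathbf{S}}\mathbf{\Pi} - \mathbf{\Pi}\|_{\mathrm{op}} \le \epsilon$ with probability at least $1/2$ (indeed, with high probability by boosting $\log n$), which is precisely the content of the theorem. The final translation back to quadratic forms on $\mathbf{L}_G$ is routine once the projection bound is in hand.
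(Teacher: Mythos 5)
Your proposal is correct, and it reconstructs the genuine Spielman--Srivastava argument rather than anything the paper actually carries out: Theorem~\ref{sparsify} is imported as a cited result, and the closest the paper comes to a proof is Proposition~\ref{prop1} in Appendix~A, which performs exactly your first reduction --- writing $\mathbf{L}_G=\sum_e \mathbf{b}_e\mathbf{b}_e^T$, conjugating by $\mathbf{L}_G^{+/2}$ to obtain the isotropy condition $\mathbf{I}_n \preccurlyeq \sum_e s_e\mathbf{v}_e\mathbf{v}_e^T \preccurlyeq \Gamma\mathbf{I}_n$ with $\mathbf{v}_e=\mathbf{L}_G^{+/2}\mathbf{b}_e$, and observing the leverage-score identity $\|\mathbf{v}_e\|^2=R_e$, which is your $\mathbf{\Pi}(e,e)=w_eR_e$ in the unweighted case. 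The divergence is in what is done with that reduction. You complete the probabilistic argument: you write $\mathbf{\Pi}\widetilde{\mathbf{S}}\mathbf{\Pi}$ as a sum of independent rank-one matrices whose per-sample operator norm is $\mathrm{tr}(\mathbf{\Pi})/q=(n-1)/q$ precisely because the sampling probabilities match the leverage scores, and you invoke a Rudelson/matrix-Chernoff bound to obtain $q=O(n\log n/\epsilon^2)$; this is the step that actually delivers the sample complexity and the success probability $\ge 1/2$ in the statement. The paper instead stops at $\|\mathbf{v}_e\|^2=R_e$, using it only to justify the choice of sampling distribution, and then pivots to a deterministic greedy budget construction (Algorithm~\ref{algoGreedy}) in the spirit of Batson--Spielman--Srivastava; no concentration bound is ever established there. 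So your route proves the theorem as stated, while the paper's route proves less but buys a deterministic, directly implementable selection rule, which is what its rewiring layer needs. One point worth a sentence in your write-up: the equivalence between the two-sided quadratic-form bound and $\left\|\mathbf{\Pi}\widetilde{\mathbf{S}}\mathbf{\Pi}-\mathbf{\Pi}\right\|_{\mathrm{op}}\le\epsilon$ uses the standard fact that for symmetric $\mathbf{M}$ and $\mathbf{A}$ with $\mathbf{A}\mathbf{A}^T=\mathbf{\Pi}$, the nonzero spectra of $\mathbf{A}^T\mathbf{M}\mathbf{A}$ and $\mathbf{\Pi}\mathbf{M}\mathbf{\Pi}$ coincide; you gesture at this with the change of variables $\mathbf{y}=\mathbf{L}_G^{1/2}\mathbf{x}$, but it deserves to be stated explicitly since $\mathbf{L}_G$ is singular and everything must be restricted to $\mathrm{im}(\mathbf{L}_G)$.
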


The above theorem has a simple explanation in terms of Dirichlet energies, ${\cal E}(\mathbf{x})$. The Laplacian $\mathbf{L}=\mathbf{D}-\mathbf{A} \succcurlyeq 0$, i.e. it is positive semi-definite (all its eigenvalues are non-negative). Then, if we consider $\mathbf{x}:V\rightarrow \mathbb{R}$ as a real-valued function of the $n$ nodes of $G=(V,E)$, we have that ${\cal E}(\mathbf{x})\defeq\mathbf{x}^T\mathbf{L}_G\mathbf{x}=\sum_{e=(u,v)\in E}(\mathbf{x}_u - \mathbf{x}_v)^2\ge 0$ for any $\mathbf{x}$. In particular, the eigenvectors $\mathbf{f}\defeq \{\mathbf{f}_i:\mathbf{L}\mathbf{f}_i=\lambda_i\mathbf{f}_i\}$ are the set of special functions that minimize the energies ${\cal E}(\mathbf{f}_i)$, i.e. they are the mutually orthogonal and normalized functions with the minimal variabilities  achievable by the topology of $G$. Therefore, any minimal variability of $G'$ is bounded by $(1\pm \epsilon)$ times that of $G$ if we sample enough  edges with probability $q\propto R_e$. In addition, $\lambda_i=\frac{{\cal E}(\mathbf{f}_i)}{\mathbf{f}_i^T\mathbf{f}_i}$. 

This first result implies that edge sampling based on commute times is a principled way to rewire a graph while preserving its original structure and it is bounded by the Dirichlet energies. Next, we present what a commute times embedding is and how it can be spectrally computed.

\paragraph{Commute Times Embedding (CTE)} The choice of effective resistances in Theorem~\ref{sparsify} is explained by the fact that $R_{uv}$ can be computed from $R_{uv}=(\mathbf{e}_u - \mathbf{e}_v)^T\mathbf{L}^+(\mathbf{e}_u - \mathbf{e}_v)$, where $\mathbf{e}_u$ is the unit vector with a unit value at $u$ and zero elsewhere. $\mathbf{L}^+=\sum_{i\ge 2}\lambda_i^{-1}\mathbf{f}_i\mathbf{f}_i^T$, where $\mathbf{f}_i,\lambda_i$ are the eigenvectors and eigenvalues of $\mathbf{L}$, is the pseudo-inverse or Green's function of $G=(V,E)$ if it is connected.
The Green's function leads to envision $R_{uv}$ (and therefore $CT_{uv}$) as \emph{metrics} relating pairs of nodes of $G$. 
As a result, the CTE will preserve the commute times distance in a Euclidean space. Note that this latent space of the nodes can not only be described spectrally but also in a \emph{parameter free}-manner, which is not the case for other spectral embeddings, such as heat kernel or diffusion maps as they rely on a time parameter $t$. More precisely, the embedding matrix $\mathbf{Z}$ whose columns contain the nodes' commute times embeddings is spectrally given by: 
\begin{equation}\label{CTE}
    \mathbf{Z} \defeq \sqrt{vol(G)}\Lambda^{-1/2}\mathbf{F}^T = \sqrt{vol(G)}\Lambda'^{-1/2}\mathbf{G}^T\mathbf{D}^{-1/2} \;
\end{equation}
where $\Lambda$ is the diagonal matrix of the unnormalized Laplacian $\mathbf{L}$ eigenvalues and $\mathbf{F}$ is the matrix of their associated eigenvectors. Similarly, $\Lambda'$ contains the eigenvalues of the normalized Laplacian ${\cal L}$ and $\mathbf{G}$ the eigenvectors. We have $\mathbf{F} = \mathbf{G}\mathbf{D}^{-1/2}$ or $\mathbf{f}_i = \mathbf{g}_i\mathbf{D}^{-1/2}$, where $\mathbf{D}$ is the degree matrix.  


Finally, the commute times are given by the Euclidean distances between the embeddings $CT_{uv} = \|\mathbf{z}_u - \mathbf{z}_v \|^2$. The spectral calculation of commute times distances is given by:
\begin{equation}\label{CTmatriz}
    R_{uv}=\frac{CT_{uv}}{vol(G)} =  \frac{\|\mathbf{z}_u - \mathbf{z}_v \|^2}{vol(G)} = \sum_{i=2}^n\frac{1}{\lambda_i}\left(\mathbf{f}_i(u)-\mathbf{f}_i(v)\right)^2 =   \sum_{i=2}^n\frac{1}{\lambda'_i}\left(\frac{\mathbf{g}_i(u)}{\sqrt{d_u}}-\frac{\mathbf{g}_i(v)}{\sqrt{d_v}}\right)^2\; 
\end{equation}

\paragraph{Commute Times as an Optimization Problem.} 
In this section, we demonstrate how the CTs may be computed as an optimization problem by means of a differentiable layer in a GNN. Constraining neighboring nodes to have a similar embedding leads to 
\begin{equation}\label{optimalCT}
    \mathbf{Z} = \arg\min_{\mathbf{Z}^T\mathbf{Z}=\mathbf{I}}\frac{ \sum_{u,v}\|\mathbf{z}_u - \mathbf{z}_v \|^2\mathbf{A}_{uv}}{\sum_{u,v}\mathbf{Z}^2_{uv}d_u} =\frac{\sum_{(u,v)\in E}\|\mathbf{z}_u - \mathbf{z}_v\|^2}{\sum_{u,v}\mathbf{Z}^2_{uv}d_u} = \frac{Tr[\mathbf{Z}^T\mathbf{L}\mathbf{Z}]}{Tr[\mathbf{Z}^T\mathbf{D}\mathbf{Z}]}\;,
\end{equation}
which reveals that CTs embeddings result from a Laplacian regularization down-weighted by the degree. As a result, \emph{frontier} nodes or hubs --i.e. nodes with inter-community edges-- which tend to have larger degrees than those lying inside their respective communities will be embedded far away from their neighbors, increasing the \emph{distance} between communities. Note that the above \emph{quotient of traces} formulation is easily differentiable and different from  $Tr[\frac{\mathbf{Z}^T\mathbf{L}\mathbf{Z}}{\mathbf{Z}^T\mathbf{D}\mathbf{Z}}]$  proposed in~\cite{Qiu07CTembedding}. 


With the above elements we define \ctlayer, the first rewiring layer proposed in this paper. See Figure~\ref{fig:ctlayer} for a graphical representation of the layer.
\\

\begin{definition}[CT-Layer]\label{CT-Layer} Given the matrix $\mathbf{X}_{n\times F}$ encoding the features of the nodes after any message passing (MP) layer, $\mathbf{Z}_{n\times O(n)}=\tanh(\mlp(\mathbf{X}))$ learns the association $\mathbf{X}\rightarrow \mathbf{Z}$ while $\mathbf{Z}$ is optimized according to the loss $L_{CT} = \frac{Tr[\mathbf{Z}^T\mathbf{L}\mathbf{Z}]}{Tr[\mathbf{Z}^T\mathbf{D}\mathbf{Z}]} + \left\|\frac{\mathbf{Z}^T\mathbf{Z}}{\|\mathbf{Z}^T\mathbf{Z}\|_F} - \mathbf{I}_n\right\|_F$. This results in the following \emph{resistance diffusion} $\mathbf{T}^{CT} = \mathbf{R}(\mathbf{Z})\odot \mathbf{A}$, i.e. the Hadamard product between the resistance distance and the adjacency matrix, providing as input to the subsequent MP layer a learnt convolution matrix. We set $\mathbf{R}(\mathbf{Z})$ to the pairwise Euclidean distances of the node embeddings in $\mathbf{Z}$ divided by $vol(G)$.
\end{definition}

Thus, \ctlayer learns the CTs and rewires an input graph according to them: the edges with maximal resistance will tend to be the most important edges so as to preserve the topology of the graph.

\begin{figure}[ht]
\captionsetup{singlelinecheck=off}
\begin{center}
	\includegraphics[width=0.76\linewidth]{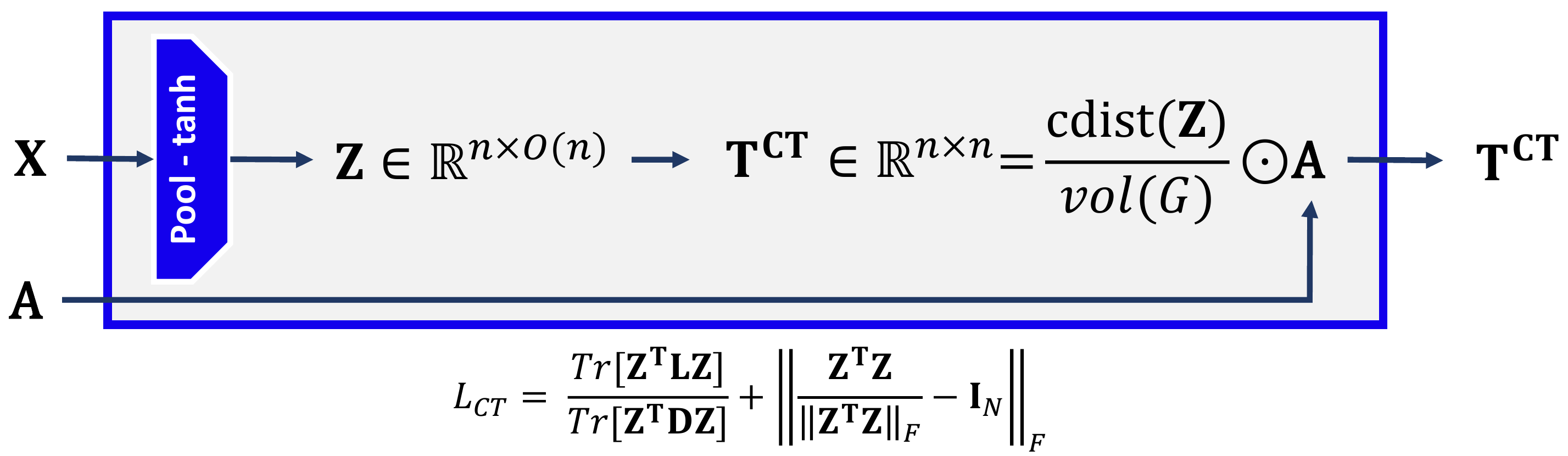}
    \caption{\centering Detailed depiction of \ctlayer, where \texttt{cdist} refers to the matrix of pairwise Euclidean distances between the node embeddings in $\mathbf{Z}$.}
    \label{fig:ctlayer}
\end{center}
\end{figure}

Below, we present the relationship between the CTs and the graph's bottleneck and curvature.

\paragraph{$\mathbf{T}^{CT}$ and Graph Bottlenecks.} Beyond the principled sparsification of $\mathbf{T}^{CT}$ (Theorem~\ref{sparsify}), this layer rewires the graph $G=(E,V)$ in such a way that edges with maximal resistance will tend to be the most critical to preserve the topology of the graph. More precisely, although $\sum_{e\in E}R_{e}=n-1$, the bulk of the resistance distribution will be located at graph bottlenecks, if they exist. Otherwise, their magnitude is upper-bounded and the distribution becomes more uniform. 

Graph bottlenecks are controlled by the \emph{graph's conductance} or Cheeger constant, $h_G=min_{S\subseteq V}h_S$, where: $h_S=\frac{|\partial S|}{\min (vol(S), vol(\bar{S}))}$, $\partial S=\{e=(u,v): u\in S, v\in \bar{S}\}$ and $vol(S)=\sum_{u\in S}d_u$. 

The interplay between the graph's conductance and effective resistances is given by: 

\begin{theorem}[\citet{Alev18}]\label{gapcontrol} 
Given a graph $G=(V,E)$, a subset $S\subseteq V$ with $vol(S)\le vol(G)/2$, 
\begin{equation}
    h_S\ge \frac{c}{vol(S)^{1/2-\epsilon}} \Longleftrightarrow |\partial S|\ge c\cdot vol(S)^{1/2-\epsilon}, 
\end{equation}
for some constant $c$ and $\epsilon\in [0, 1/2]$. Then, $R_{uv}\le \left(\frac{1}{d_u^{2\epsilon}}+ \frac{1}{d_v^{2\epsilon}}\right)\cdot\frac{1}{\epsilon\cdot c^2}$ for any pair $u,v$.
\end{theorem}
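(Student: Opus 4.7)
}

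The left-hand equivalence is essentially definitional. Because $vol(S)\le vol(G)/2\le vol(\bar{S})$, the minimum in the denominator of $h_S$ is attained at $vol(S)$, so $h_S=|\partial S|/vol(S)$. Multiplying both sides of $h_S\ge c/vol(S)^{1/2-\epsilon}$ by $vol(S)$ yields the stated lower bound on $|\partial S|$ (reading the exponent consistently with the definition of $h_S$). I would state this as a one-line reduction and move on.

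The core of the work is the resistance bound. The plan is to invoke Thomson's principle, $R_{uv}=\min_{\theta}\mathcal{E}(\theta)=\min_\theta\sum_e\theta_e^2$ over unit $u\!\to\! v$ flows, and construct an explicit low-energy flow using the expansion hypothesis. I would first reduce to a single endpoint: because a unit $u\!\to\! \pi$ flow added to a unit $\pi\!\to\! v$ flow is a unit $u\!\to\! v$ flow, and $(a+b)^2\le 2a^2+2b^2$, one obtains $R_{uv}\le 2(R_{u,\pi}+R_{v,\pi})$, where $\pi_w=d_w/vol(G)$ is the stationary measure. It therefore suffices to prove $R_{u,\pi}\le \tfrac{1}{2\epsilon c^2 d_u^{2\epsilon}}$ and the analogous bound for $v$.

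For $R_{u,\pi}$ I would construct a layered flow. Build nested level sets $\{u\}=B_0\subsetneq B_1\subsetneq\cdots\subsetneq B_K$ with $vol(B_k)=\min(2^k d_u,\, vol(G)/2)$, so that volumes roughly double until capped. The flow absorbs mass proportional to degree inside $B_{k-1}$ and ships the remaining $1-vol(B_{k-1})/vol(G)\le 1$ units through $\partial B_{k-1}$, distributed as uniformly as possible across its edges. The expansion hypothesis then lower bounds each boundary by $|\partial B_{k-1}|\ge c\cdot vol(B_{k-1})^{1/2+\epsilon}\ge c\,(2^{k-1}d_u)^{1/2+\epsilon}$, so the per-edge flow at layer $k$ is at most $1/|\partial B_{k-1}|$. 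Summing the per-layer energy contributions as a geometric series in $k$ yields a bound of the form $\frac{C}{c^2 d_u^{2\epsilon}}\cdot\sum_{k\ge 0} 2^{-2\epsilon k}$; the telescoping factor $\sum_k 2^{-2\epsilon k}=1/(1-2^{-2\epsilon})\asymp 1/(2\epsilon\ln 2)$ as $\epsilon\to 0^+$ is precisely what produces the $1/\epsilon$ blow-up in the statement. Adding the analogous $v$-bound and applying $R_{uv}\le 2(R_{u,\pi}+R_{v,\pi})$ completes the proof.

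The main obstacle is producing a legal flow whose per-edge load really matches the naive $1/|\partial B_{k-1}|$ intuition: an edge $e$ generally belongs to $\partial B_j$ for several consecutive $j$, so the per-layer contributions cannot just be summed as independent energies on disjoint edges. The cleanest remedy, which I would pursue, is to specify a single value $\theta_e$ per edge, arising from layer-$k$ edges being used only when routing mass from the inner shell of $B_{k-1}$ into $B_k\setminus B_{k-1}$, and to verify flow conservation vertex by vertex. A secondary bookkeeping issue is calibrating the level sizes so that the exponent $1/2+\epsilon$ in the boundary bound combines with the geometric doubling to give the precise $d_u^{2\epsilon}$ factor — this is where the doubling rule $vol(B_k)=2\cdot vol(B_{k-1})$ is critical rather than, say, arithmetic growth.
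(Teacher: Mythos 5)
This theorem is not proved in the paper: it is imported verbatim from \citet{Alev18} (it is, in essence, their expander-to-resistance bound), so there is no in-paper argument to compare yours against. That said, your sketch reconstructs the proof strategy of the original reference correctly: the one-line reduction of the equivalence via $h_S=|\partial S|/vol(S)$ (and you are right that the exponent on the right-hand side of the displayed equivalence should read $1/2+\epsilon$ for consistency --- the paper's statement carries a typo here), followed by Thomson's principle, the splitting $R_{uv}\le 2(R_{u,\pi}+R_{v,\pi})$ through the stationary measure, and a layered flow over volume-doubling level sets whose boundary sizes are controlled by the expansion hypothesis, with the geometric series $\sum_k 2^{-2\epsilon k}\asymp 1/\epsilon$ producing the $1/\epsilon$ factor. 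Three caveats. First, as you implicitly assume, the expansion hypothesis must hold for \emph{every} $S$ with $vol(S)\le vol(G)/2$ (it is applied to each $B_k$), not for a single subset as the paper's phrasing suggests. Second, the flow-legality issue you flag --- an edge lying in $\partial B_j$ for several consecutive $j$ --- is the genuinely delicate step, and your proposed remedy (assign each edge a single value $\theta_e$ by routing only from the inner shell of $B_{k-1}$ into $B_k\setminus B_{k-1}$, then verify conservation) is the right one. Third, this construction yields the stated bound only up to an absolute multiplicative constant; the original theorem is stated with $O(\cdot)$, and the clean constant $\frac{1}{\epsilon c^2}$ in the paper's restatement should not be expected to fall out of the geometric series exactly. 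None of these affect the soundness of the approach.
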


According to this theorem, the larger the graph's bottleneck, the tighter the bound on $R_{uv}$ are. Moreover, $\max(R_{uv})\le 1/h_S^2$, i.e., the resistance is bounded by the square of the bottleneck. 

This bound partially explains the rewiring of the graph in Figure~\ref{fig:rewiring}-center. As seen in the Figure~\ref{fig:rewiring}-center, rewiring using \ctlayer sparsifies the graph and assigns larger weights to the edges located in the graph's bottleneck. The interplay between Theorem~\ref{gapcontrol} and Theorem~\ref{sparsify} is described in App.~\ref{subsec:App.ctlayer}. 

Recent work has proposed using curvature for graph rewiring. We outline below the relationship between CTs and curvature.

\paragraph{Effective Resistances and Curvature.}
\label{paragraph:curvature}
 \citet{topping2022understanding} propose an approach for graph rewiring, where the relevance function is given by the Ricci curvature. However, this measure is non-differentiable. More recent definitions of curvature~\cite{Devrient2022} have been formulated based on resistance distances that would be differentiable using our approach. The resistance curvature of an edge $e=(u,v)$ is $\kappa_{uv}\defeq 2(p_u + p_v)/R_{uv}$ where $p_u\defeq 1 - \frac{1}{2}\sum_{u\sim w}R_{uv}$ is the node's curvature. Relevant properties of the edge resistance curvature are discussed in App.~\ref{subsubsec:curvature}, along with a related Theorem proposed in~\citet{Devrient2022}.

\subsection{\textsc{GAP-Layer}: Spectral Gap Optimization for Graph Rewiring}
The right-hand side of the Lov\'{a}sz bound in Eq.~\ref{eqn:lovasz} relies on the graph's spectral gap $\lambda'_2$, such that the larger the spectral gap, the closer the commute times would be to their non-informative regime. Note that the spectral gap is typically large in commonly observed graphs --such as communities in social networks which may be bridged by many edges~\cite{SBM17}-- and, hence, in these cases it would be desirable to rewire the adjacency matrix $\mathbf{A}$ so that $\lambda'_2$ is minimized.

In this section, we explain how to rewire the graph’s adjacency matrix A to minimize the spectral gap. We propose using the gradient of $\lambda_2$ wrt each component of $\tilde{\mathbf{A}}$. Then, we can compute these gradient either using Laplacians ($\mathbf{L}$, with Fiedler $\lambda_2$) or normalized Laplacians ($ \mathbf{{\cal L}}$, with Fiedler $\lambda_2'$). We also present an approximation of the Fiedler vectors needed to compute those gradients, and propose computing them as a GNN Layer called the \gaplayer. A detailed schematic of \gaplayer is shown in Figure~\ref{fig:gaplayer}.

\textbf{Rewiring using a Ratio-cut (Rcut) Approximation.} We propose to rewire the adjacency matrix, $\mathbf{A}$, so that $\lambda_2$ is minimized. We consider a matrix $\tilde{\mathbf{A}}$ close to $\mathbf{A}$ that satisfies   $\tilde{\mathbf{L}}\mathbf{f}_2=\lambda_2\mathbf{f_2}$, where $\mathbf{f}_2$ is the solution to the ratio-cut relaxation~\cite{hein09}. Following~\cite{kang2019derivative}, the gradient of $\lambda_2$ wrt each component of $\tilde{\mathbf{A}}$ is given by
\begin{equation}\label{derL}
\nabla_{\tilde{\mathbf{A}}}\lambda_2 \defeq Tr\left[\left(\nabla_{\tilde{\mathbf{L}}}\lambda_2\right)^T\cdot\nabla_{\tilde{\mathbf{A}}}\tilde{\mathbf{L}}\right] = \textrm{diag}(\mathbf{f}_2\mathbf{f}_2^T)\mathbf{1}\mathbf{1}^T -  \mathbf{f}_2\mathbf{f}_2^T\;
\end{equation}
where $\mathbf{1}$ is the vector of $n$ ones; and $[\nabla_{\tilde{\mathbf{A}}}\lambda_2]_{ij}$ is the gradient of $\lambda_2$ wrt  $\tilde{\mathbf{A}}_{uv}$. The driving force of this gradient relies on the  correlation $\mathbf{f}_2\mathbf{f}_2^T$. Using this gradient to minimize $\lambda_2$ results in breaking the graph's bottleneck while preserving simultaneously the inter-cluster structure. We delve into this matter in App.~\ref{subsec:App.gap}.

\paragraph{Rewiring using a Normalized-cut (Ncut) Approximation.} Similarly, considering now $\lambda'_2$ for rewiring leads to  
\begin{eqnarray}\label{derNL}
    \nabla_{\tilde{\mathbf{A}}}\lambda'_2 \defeq Tr\left[\left(\nabla_{\tilde{\cal L}}\lambda_2\right)^T\cdot\nabla_{\tilde{\mathbf{A}}}\tilde{\cal L}\right] &=& \nonumber\\
    \mathbf{d}'\left\{\mathbf{g}_2^T\tilde{\mathbf{A}}^T\tilde{\mathbf{D}}^{-1/2}\mathbf{g}_2\right\}\mathbf{1}^T + \mathbf{d}'\left\{\mathbf{g}_2^T\tilde{\mathbf{A}}\tilde{\mathbf{D}}^{-1/2}\mathbf{g}_2\right\}\mathbf{1}^T &+& \tilde{\mathbf{D}}^{-1/2}\mathbf{g}_2\mathbf{g}_2^T\tilde{\mathbf{D}}^{-1/2}\;
\end{eqnarray}
where $\mathbf{d}'$ is a $n\times 1$ vector including derivatives of degree wrt adjacency and related terms. This gradient relies on the Fiedler vector $\mathbf{g}_2$ (the solution to the normalized-cut relaxation), and on the incoming and outgoing one-hop random walks. This approximation breaks the bottleneck while preserving the global topology of the graph (Figure~\ref{fig:rewiring}-left). Proof and details are included in App.~\ref{subsec:App.gap}. 

We present next an approximation of the Fiedler vector, followed by a proposed new layer in the GNN called the \gaplayer to learn how to minimize the spectral gap of the graph. 

\paragraph{Approximating the Fiedler vector.} Given that  $\mathbf{g}_2=\tilde{\mathbf{D}}^{1/2}\mathbf{f}_2$, we can obtain the normalized-cut gradient in terms of $\mathbf{f}_2$. From~\cite{hoang2021revisiting} we have that 
\begin{eqnarray}\label{Fiedlerapp}
  \mathbf{f}_2(u) =  \left\{\begin{array}{cl}
       +1/\sqrt{n}  & \text{if}\;\; u\;\; \text{belongs to the first cluster}\\
       -1/\sqrt{n}  & \text{if}\;\; u\;\; \text{belongs to the second cluster}\\
    \end{array} + O\left(\frac{\log n}{n}\right)\right.
\end{eqnarray}

\begin{figure}[ht]
\begin{center}	\includegraphics[width=0.76\linewidth]{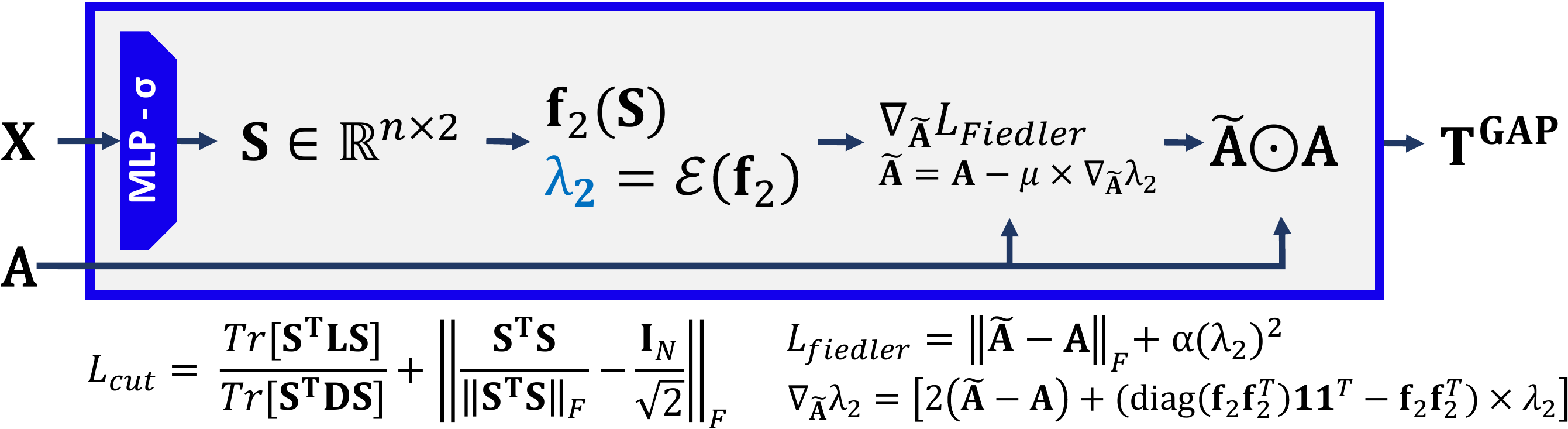}
	\caption{\centering \gaplayer(Rcut). For \gaplayer(Ncut), substitute $\nabla_{\tilde{\mathbf{A}}}\lambda_2$ by Eq.~\ref{derNL}}
	\label{fig:gaplayer}
\end{center}
\end{figure}

\begin{definition}[GAP-Layer]\label{GAP-Layer} Given the matrix $\mathbf{X}_{n\times F}$ encoding the features of the nodes after any message passing (MP) layer, $\mathbf{S}_{n\times 2}=\textrm{Softmax}(\mlp(\mathbf{X}))$ learns the association $\mathbf{X}\rightarrow \mathbf{S}$ while $\mathbf{S}$ is optimized according to the loss  $L_{Cut} = -\frac{Tr[\mathbf{S}^T\mathbf{A}\mathbf{S}]}{Tr[\mathbf{S}^T\mathbf{D}\mathbf{S}]} + \left\|\frac{\mathbf{S}^T\mathbf{S}}{\|\mathbf{S}^T\mathbf{S}\|_F} - \frac{\mathbf{I}_n}{\sqrt{2}}\right\|_F$. Then the Fiedler vector $\mathbf{f}_2$ is approximated by appyling a softmaxed version of Eq.~\ref{Fiedlerapp} and considering the loss $L_{Fiedler} = \|\tilde{\mathbf{A}}-\mathbf{A}\|_F + \alpha(\lambda^{\ast}_2)^2$, where $\lambda^{\ast}_2=\lambda_2$ if we use the ratio-cut approximation (and gradient) and $\lambda^{\ast}_2=\lambda'_2$ if we use the normalized-cut approximation and gradient. This returns $\tilde{\mathbf{A}}$ and the GAP diffusion $\mathbf{T}^{GAP} = \tilde{\mathbf{A}}(\mathbf{S}) \odot \mathbf{A}$ results from minimizing $L_{GAP}\defeq L_{Cut} + L_{Fiedler}$.
\end{definition}

\section{Experiments and Discussion}
 \label{sec:experiments}

\subsection{Graph Classification}
In this section, we study the properties and performance of \ctlayer and \gaplayer in a graph classification task with several benchmark datasets. To illustrate the merits of our approach, we compare \ctlayer and \gaplayer with 3 state-of-the-art diffusion and curvature-based graph rewiring methods. Note that the aim of the evaluation is to shed light on the properties of both layers and illustrate their inductive performance, not to perform a benchmark comparison with all previously proposed graph rewiring methods.

\begin{figure}[ht]
\captionsetup{singlelinecheck=off}
\begin{center}
	\begin{subfigure}[t]{0.4\textwidth}
	\includegraphics[width=\linewidth]{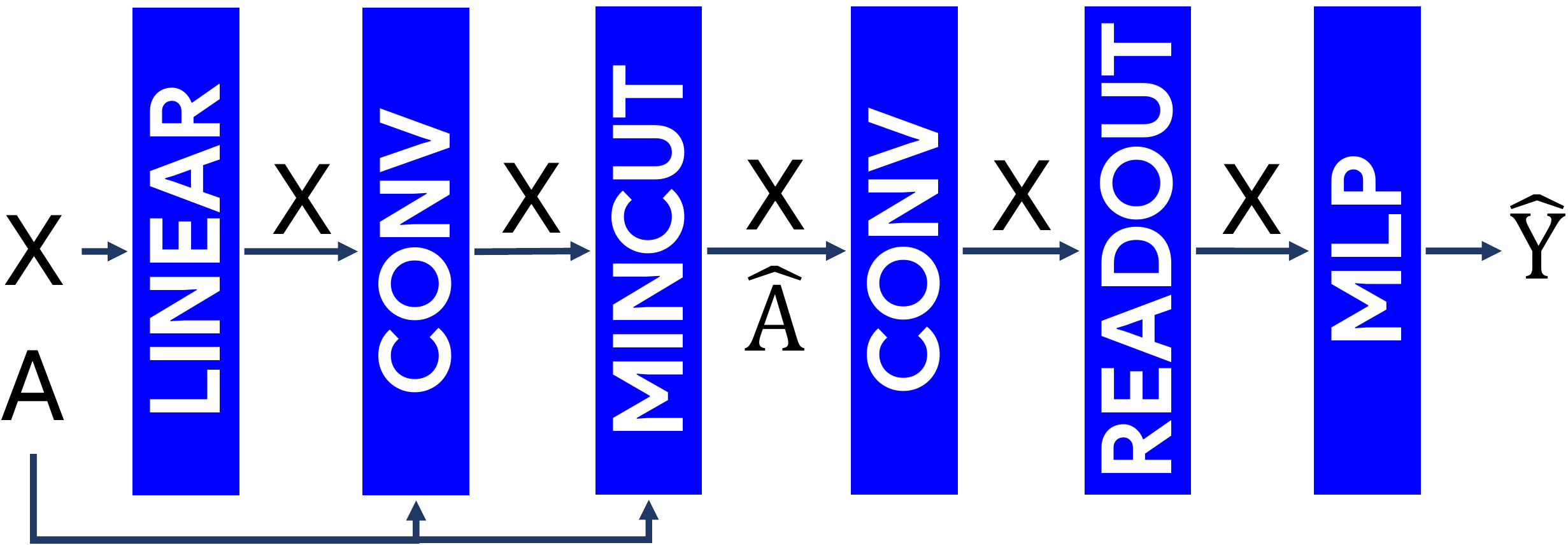}
	\caption{\centering{\footnotesize \textsc{MinCut} baseline}}
    \label{fig:GNN-mincut}
    \end{subfigure}\hspace{10mm}
	\begin{subfigure}[t]{0.4\textwidth}
	\includegraphics[width=\linewidth]{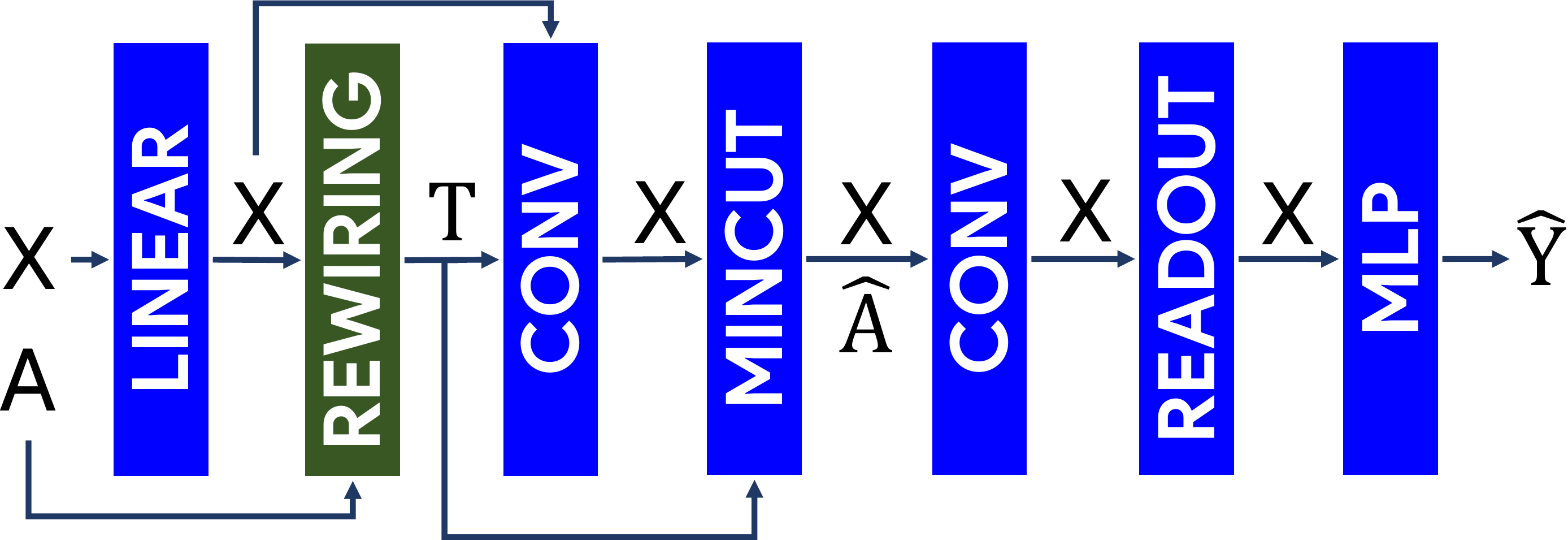}
	\caption{\centering{\footnotesize \ctlayer or \gaplayer}}
    \label{fig:GNN-rew}
    \end{subfigure}
\caption{GNN models used in the experiments. Left: MinCut Baseline model. Right: \ctlayer or \gaplayer models, depending on what method is used for rewiring.}
\label{fig:GNN}
\end{center}
\end{figure}

\textbf{Baselines:}. The first baseline architecture is based on \textbf{\textsc{MinCut} Pool}~\cite{bianchi2020mincutpool} and it is shown in Figure~\ref{fig:GNN-mincut}. It is the base GNN that we use for graph classification without rewiring. \textsc{MinCut} Pool layer learns  $(\mathbf{A}_{n\times n}, \mathbf{X}_{n\times F}) \to (\mathbf{A'}_{k\times k}, \mathbf{X}_{k\times F})$, being $k<n$ the new number of node clusters. The first baseline strategy using graph rewiring is $k$-NN graphs~\cite{preparata2012computational}, where weights of the edges are computed based on feature similarity. The next two baselines are graph rewiring methods that belong to the same family of methods as \textsc{DiffWire}, i.e. methods based on diffusion and curvature, namely \textbf{\textsc{DIGL}} (PPR)~\cite{klicpera2019diffusion} and \textbf{SDRF}~\cite{topping2022understanding}. 
DIGL is a diffusion-based preprocessing method within the family of metric-based GSL approaches. We set the teleporting probability $\alpha=0.001$ and $\epsilon$ is set to keep the same average degree for each graph. Once preprocessed with \textsc{DIGL}, the graphs are provided as input to the MinCut Pool (Baseline1) arquitecture. 
The third baseline model is SDRF, which performs curvature-based rewiring. SDRF is also a preprocessing method which has 3 parameters that are highly graph-dependent. We set these parameters to $\tau=20$ and $C^+=0$ for all experiments as per ~\cite{topping2022understanding}. The number of iterations is estimated dynamically according to $0.7*|V|$ for each graph. 

Both DIGL and SDRF aim to preserve the global topology of the graph but require optimizing their parameters for each input graph via hyper-parameter search. In a graph classification task, this search is $O(n^3)$ per graph. Details about the parameter tuning in these methods can be found in App.~\ref{subsubsec:parameters}. 

To shed light on the performance and properties of \ctlayer and \gaplayer, we add the corresponding layer in between $\text{Linear}(\mathbf{X}) \xrightarrow{\ast} \text{Conv1}(\mathbf{A},\mathbf{X})$. We build 3 different models: \ctlayer, \gaplayer(Rcut), \gaplayer(Ncut), depending on the layer used. For \ctlayer, we learn $\mathbf{T}^{CT}$ which is used as a convolution matrix afterwards. For \gaplayer, we learn $\mathbf{T}^{GAP}$ either using the Rcut or the Ncut approximations. A schematic of the architectures is shown in Figure~\ref{fig:GNN-rew} and in App.~\ref{subsubsec:GNN}.


As shown in Table~\ref{tab:results}, we use in our experiments common benchmark datasets for graph classification. We select datasets both with features and featureless, in which case we use the degree as the node features. These datasets are diverse regarding the topology of their networks: \textsc{Reddit-B}, \textsc{Imdb-B} and \textsc{Collab} contain truncate scale-free graphs (social networks), whereas \textsc{Mutag} and \textsc{Proteins} contain graphs from biology or chemistry. In addition, we use two synthetic datasets with 2 classes: Erd\"os-R\'enyi with $p_1 \in [0.3, 0.5]$ and $p_2\in[0.4, 0.8]$ and Stochastic block model (SBM) with parameters $p_1=0.8$, $p_2=0.5$, $q_1 \in [0.1, 0.15]$ and $q_2 \in [0.01, 0.1]$. More details about the datasets in App.~\ref{subsubsec:datasetstatistics}. In addition, Table \ref{tab:results} reports average accuracies and standard deviation on 10 random data splits, using 85/15 stratified train-test split, training during 60 epochs and reporting the results of the last epoch for each random run. We use Pytorch Geometric~\cite{pyg} and the code is available in a public repository\footnote{\url{https://github.com/AdrianArnaiz/DiffWire}}.

\begin{table}
    \caption{Experimental results on common graph classification benchmarks. \textcolor{red}{ \textbf{Red}} denotes the best model row-wise and \textcolor{blue}{\textbf{Blue}} marks the runner-up. `*' means degree as node feature. }
    \label{tab:results}
    \begin{adjustwidth}{-3in}{-3in}
        \centering
        {\scriptsize
        \bgroup
            \def\arraystretch{1.2}
            \begin{tabular}{lrrrrrrr} 
                \toprule
                                & MinCutPool         & $k$-NN             & DIGL              & SDRF                & \ctlayer            & \gaplayer(R)    & \gaplayer(N) \\
                \midrule
                REDDIT-B*       & $66.53$\stdt{4.4}  & $64.40$\stdt{3.8}  & $76.02$\stdt{4.3}  & $65.3$\stdt{7.7}   & \best{78.45}{4.5}   & \scnd{77.63}{4.9}  & $76.00$\stdt{5.3}\\      
                IMDB-B*         & $60.75$\stdt{7.0}  & $55.20$\stdt{4.3}  & $59.35$\stdt{7.7}  & $59.2$\stdt{6.9}   & \scnd{69.84}{4.6}   & \best{69.93}{3.3}  & $68.80$\stdt{3.1}\\
                COLLAB*         & $58.00$\stdt{6.2}  & $58.33$\stdt{11}   & $57.51$\stdt{5.9}  & $56.60$\stdt{10}   & \best{69.87}{2.4}   & $64.47$\stdt{4.0}  & \scnd{65.89}{4.9}\\
                MUTAG           & $84.21$\stdt{6.3}  & \best{87.58}{4.1}  & $85.00$\stdt{5.6}  & $82.4$\stdt{6.8}   & \best{87.58}{4.4}   & \scnd{86.90}{4.0}  & \scnd{86.90}{4.0}\\
                PROTEINS        & $74.84$\stdt{2.3}  & \best{76.76}{2.5}  & $74.49$\stdt{2.8}  & $74.4$\stdt{2.7}   & \scnd{75.38}{2.9}   & $75.03$\stdt{3.0}  & \scnd{75.34}{2.1}\\
                SBM*            & $53.00$\stdt{9.9}  & $50.00$\stdt{0.0}  & $56.93$\stdt{12}   & $54.1$\stdt{7.1}   & $81.40$\stdt{11}    & \scnd{90.80}{7.0}  & \best{92.26}{2.9}\\
                Erd\"os-R\'enyi*& $81.86$\stdt{6.2}  & $63.40$\stdt{3.9}  & \scnd{81.93}{6.3}  & $73.6$\stdt{9.1}   & $79.06$\stdt{9.8}   & $79.26$\stdt{10}   & \best{82.26}{3.2}\\
                \bottomrule
            \end{tabular}
        \egroup
        }
    \end{adjustwidth} 
\end{table}

The experiments support our hypothesis that rewiring based on \ctlayer and \gaplayer improves the performance of the baselines on graph classification. 
Since both layers are differentiable, they learn how to inductively rewire unseen graphs. The improvements are significant in graphs where social components arise (\textsc{RedditB, ImdbB, Collab}), i.e. graphs with small world properties and power-law degree distributions with a topology based on hubs and authorities. These are graphs where bottlenecks arise easily and our approach is able to properly rewire the graphs. However, the improvements observed in planar or grid networks (\textsc{Mutag} and \textsc{Proteins}) are more limited: the bottleneck does not seem to be critical for the graph classification task. 

Moreover, \ctlayer and \gaplayer perform better in graphs with featureless nodes than graphs with node features because it is able to leverage the information encoded in the topology of the graphs. Note that in attribute-based graphs, the weights of the attributes typically overwrite the graph's structure in the classification task, whereas in graphs without node features, the information is encoded in the graph's structure. Thus, $k$-NN rewiring outperforms every other rewiring method in graph classification where graphs has node features. 

App.~\ref{subsubsec:latentspace} contains an in-depth analysis of the comparison between the spectral node CT embeddings (CTEs) given by Equation~\ref{CTE}, and the learned node CTEs as predicted by \ctlayer. We find that the CTEs that are learned in \ctlayer are able to better preserve the original topology of the graph while shifting the distribution of the effective resistances of the edges towards an asymmetric distribution where few edges have very large weights and a majority of edges have low weights. 

In addition, App.~\ref{subsubsec:latentspace} also includes the analysis of the graphs latent space of the readout layer produced by each model. Finally, we analyze the performance of the proposed layers in graphs with different structural properties in App.~\ref{subsubsec:correlation}. We analyze the correlation between accuracy, the graph’s assortativity, and the graph’s bottleneck ($\lambda_2$).

\textbf{\ctlayer vs \gaplayer.} The datasets explored in this paper are characterized by mild bottlenecks from the perspective of the Lov\'{a}sz bound. For completion, we have included two synthetic datasets (Stochastic Block Model and Erd\"os-R\'enyi) where the Lov\'{a}sz bound is very restrictive. As a result, \ctlayer is outperformed by \gaplayer in \textsc{SBM}. Note that the results on the synthetic datasets suffer from large variability. As a general rule of thumb, the smaller the graph's bottleneck (defined as the ratio between the number of inter-community edges and the number of intra-community edges), the more useful the \ctlayer is because the rewired graph will be sparsified in the communities but will preserve the edges in the gap. Conversely, the larger the bottleneck, the more useful the GAP-Layer is.

\subsection{Node Classification using \ctlayer}

\ctlayer and \gaplayer are mainly designed to perform graph classification tasks. However, we identify two potential areas to apply \ctlayer for node classification.

First, the new $\mathbf{T^{CT}}$ diffusion matrix learned by \ctlayer gives more importance to edges that connect different communities, i.e., edges that connect distant nodes in the graph. This behaviour of \ctlayer is aligned to solve long-range and \emph{heterophilic} node classification tasks using fewer number of layers and thus avoiding under-reaching, over-smoothing and over-squashing.

Second, there is an increasingly interest in the community in using PEs in the nodes to develope more expressive GNN. PEs tend to help in node classification in \emph{homophilic} graphs, as nearby nodes will be assigned similar PEs. However, the main limitation is that PEs are usually pre-computed before the GNN training due to their high computational cost. \ctlayer provides a solution to this problem, as it \emph{learns} to predict the commute times embedding ($\mathbf{Z}$) of a given graph (see Figure~\ref{fig:ctlayer} and definition~\ref{CT-Layer}). Hence, \ctlayer is able to learn and predict PEs from $\mathbf{X}$ and $\mathbf{A}$ inside a GNN without needing to pre-compute them.

We empirically validate \ctlayer in a node classification task on benchmark homophilic (Cora, Pubmed and Citeseer) and heterophilic (Cornell, Actor and Wisconsin) graphs. The results are depicted in Table~\ref{tab:nodeClassification} comparing three models: (1) the baseline model consists of a 1-layer-GCN; (2) \emph{model 1} is a 1-layer-GCN where the CTEs are concatenated to the node features as PEs ($\mathbf{X}\parallel\mathbf{Z}$); (3) Finally, \emph{model 2} is a 1-layer-GCN where $\mathbf{T^{CT}}$ is used as a diffusion matrix ($\mathbf{A}=\mathbf{T^{CT}}$). More details can be found in App.~\ref{subsubsec:nodeclf}.

As seen in the Table, the proposed models outperform the baseline GCN model: using CTEs as features (model 1) yields competitive results in homophilic graphs whereas using $\mathbf{T^{CT}}$ as a matrix for message passing (model 2) performs well in heterophilic graphs. Note that in our experiments the CTEs are learned by \ctlayer instead of being pre-computed. A promising direction of future work would be to explore how to combine these two approaches (model 1 and model 2) to leverage the best of each of the methods on a wide range of graphs for node classification tasks.

\begin{table}[h]
\caption{Results in node classification}
\label{tab:nodeClassification}
\centering{
{\footnotesize
\begin{tabular}{lrccr}
\toprule
Dataset   & GCN (baseline)      & \emph{model 1:} & \emph{model 2:} \\ 
          &                     & $\mathbf{X}\parallel\mathbf{Z}$       &  $\mathbf{A}=\mathbf{T^{CT}}$             & Homophily \\ \midrule
Cora      & $82.01$\stdt{0.8}   & \best{83.66}{0.6}                     & $67.96$\stdt{0.8}                        & \textit{81.0\%} \\ 
Pubmed    & $81.61$\stdt{0.3}   & \best{86.07}{0.1}                     & $68.19$\stdt{0.7}                        & \textit{80.0\%} \\ 
Citeser   & $70.81$\stdt{0.5}   & \best{72.26}{0.5}                     & $66.71$\stdt{0.6}                        & \textit{73.6\%} \\ 
Cornell   & $59.19$\stdt{3.5}   & $58.02$\stdt{3.7}                     & \best{69.04}{2.2}                         & \textit{30.5\%} \\ 
Actor     & $29.59$\stdt{0.4}   & $29.35$\stdt{0.4}                     & \best{31.98}{0.3}                         & \textit{21.9\%} \\ 
Wisconsin & $68.05$\stdt{6.2}   & $69.25$\stdt{5.1}                     & \best{79.05}{2.1}                         & \textit{19.6\%} \\ 
\bottomrule
\end{tabular}}}
\end{table}

\section{Conclusion and Future Work}
\label{sec:futurework}
In this paper, we have proposed \textsc{DiffWire}, a unified framework for graph rewiring that links the two components of the Lov\'{a}sz bound: CTs and the spectral gap. We have presented two novel, fully differentiable and inductive rewiring layers: \ctlayer and \gaplayer. We have empirically evaluated these layers on benchmark datasets for graph classification with competitive results when compared to SoTA baselines, specially in graphs where the the nodes have no attributes and have small-world properties. We have also performed preliminary experiments in a node classification task, showing that using the CT Embeddings and the CT distances benefit GNN architectures in homophilic and heterophilic graphs, respectively.

In future work, we plan to test the proposed approach in other graph-related tasks and intend to apply \textsc{DiffWire} to large-scale graphs and real-world applications, particularly in social networks, which have unique topology, statistics and direct implications in society. 

\section{Acknowledgments}
A. Arnaiz-Rodriguez and N. Oliver are supported by a nominal grant received at the ELLIS Unit Alicante Foundation from the Regional Government of Valencia in Spain (Convenio Singular signed with Generalitat Valenciana, Conselleria d’Innovació, Universitats, Ciència i Societat Digital, Dirección General para el Avance de la Sociedad Digital). A. Arnaiz-Rodriguez is also funded by a grant by the Banc Sabadell Foundation. F. Escolano is funded by the project  RTI2018-096223-B-I00 of the Spanish Government.

{
\small
\bibliographystyle{unsrtnat}
\bibliography{reference}

\begin{thebibliography}{54}
\providecommand{\natexlab}[1]{#1}
\providecommand{\url}[1]{\texttt{#1}}
\expandafter\ifx\csname urlstyle\endcsname\relax
  \providecommand{\doi}[1]{doi: #1}\else
  \providecommand{\doi}{doi: \begingroup \urlstyle{rm}\Url}\fi

\bibitem[Gori et~al.(2005)Gori, Monfardini, and Scarselli]{gori2005}
Marco Gori, Gabriele Monfardini, and Franco Scarselli.
\newblock A new model for learning in graph domains.
\newblock In \emph{Proceedings. 2005 IEEE international joint conference on
  neural networks}, volume~2, pages 729--734, 2005.
\newblock URL \url{https://ieeexplore.ieee.org/document/1555942}.

\bibitem[Scarselli et~al.(2008)Scarselli, Gori, Tsoi, Hagenbuchner, and
  Monfardini]{scarselli2008graph}
Franco Scarselli, Marco Gori, Ah~Chung Tsoi, Markus Hagenbuchner, and Gabriele
  Monfardini.
\newblock The graph neural network model.
\newblock \emph{IEEE transactions on neural networks}, 20\penalty0
  (1):\penalty0 61--80, 2008.
\newblock URL \url{https://ieeexplore.ieee.org/document/4700287}.

\bibitem[Kipf and Welling(2017)]{kipf2017semi}
Thomas~N. Kipf and Max Welling.
\newblock Semi-supervised classification with graph convolutional networks.
\newblock In \emph{International Conference on Learning Representations
  (ICLR)}, 2017.
\newblock URL \url{https://openreview.net/forum?id=SJU4ayYgl}.

\bibitem[Gilmer et~al.(2017)Gilmer, Schoenholz, Riley, Vinyals, and
  Dahl]{gilmer2017message}
Justin Gilmer, Samuel~S. Schoenholz, Patrick~F. Riley, Oriol Vinyals, and
  George~E. Dahl.
\newblock Neural message passing for quantum chemistry.
\newblock In \emph{Proceedings of the 34th International Conference on Machine
  Learning}, ICML, page 1263–1272, 2017.

\bibitem[Kipf and Welling(2016)]{kipf2016variational}
Thomas~N Kipf and Max Welling.
\newblock Variational graph auto-encoders.
\newblock \emph{In NeurIPS Workshop on Bayesian Deep Learning}, 2016.
\newblock URL \url{http://bayesiandeeplearning.org/2016/papers/BDL_16.pdf}.

\bibitem[Cao et~al.(2016)Cao, Lu, and Xu]{cao2016deep}
Shaosheng Cao, Wei Lu, and Qiongkai Xu.
\newblock Deep neural networks for learning graph representations.
\newblock In \emph{Proceedings of the AAAI Conference on Artificial
  Intelligence}, volume~30, 2016.
\newblock URL \url{https://ojs.aaai.org/index.php/AAAI/article/view/10179}.

\bibitem[Tian et~al.(2014)Tian, Gao, Cui, Chen, and Liu]{tian2014learning}
Fei Tian, Bin Gao, Qing Cui, Enhong Chen, and Tie-Yan Liu.
\newblock Learning deep representations for graph clustering.
\newblock In \emph{Proceedings of the AAAI Conference on Artificial
  Intelligence}, 2014.
\newblock URL \url{https://ojs.aaai.org/index.php/AAAI/article/view/8916}.

\bibitem[Wu et~al.(2021)Wu, Pan, Chen, Long, Zhang, and Yu]{Wu2021survey}
Zonghan Wu, Shirui Pan, Fengwen Chen, Guodong Long, Chengqi Zhang, and
  Philip~S. Yu.
\newblock A comprehensive survey on graph neural networks.
\newblock \emph{IEEE Transactions on Neural Networks and Learning Systems},
  32\penalty0 (1):\penalty0 4--24, 2021.
\newblock URL \url{https://ieeexplore.ieee.org/document/9046288}.

\bibitem[Veli{\v{c}}kovi{\'{c}} et~al.(2018)Veli{\v{c}}kovi{\'{c}}, Cucurull,
  Casanova, Romero, Li{\`{o}}, and Bengio]{velickovic2018gat}
Petar Veli{\v{c}}kovi{\'{c}}, Guillem Cucurull, Arantxa Casanova, Adriana
  Romero, Pietro Li{\`{o}}, and Yoshua Bengio.
\newblock {Graph Attention Networks}.
\newblock \emph{International Conference on Learning Representations}, 2018.
\newblock URL \url{https://openreview.net/forum?id=rJXMpikCZ}.

\bibitem[Brody et~al.(2022)Brody, Alon, and Yahav]{brody2021attentive}
Shaked Brody, Uri Alon, and Eran Yahav.
\newblock How attentive are graph attention networks?
\newblock In \emph{International Conference on Learning Representations}, 2022.
\newblock URL \url{https://openreview.net/forum?id=F72ximsx7C1}.

\bibitem[Xu et~al.(2019)Xu, Hu, Leskovec, and Jegelka]{xu2018how}
Keyulu Xu, Weihua Hu, Jure Leskovec, and Stefanie Jegelka.
\newblock How powerful are graph neural networks?
\newblock In \emph{International Conference on Learning Representations}, 2019.
\newblock URL \url{https://openreview.net/forum?id=ryGs6iA5Km}.

\bibitem[Hamilton et~al.(2017)Hamilton, Ying, and
  Leskovec]{hamilton2017inductive}
Will Hamilton, Zhitao Ying, and Jure Leskovec.
\newblock Inductive representation learning on large graphs.
\newblock In \emph{Advances in Neural Information Processing Systems}, 2017.
\newblock URL
  \url{https://proceedings.neurips.cc/paper/2017/file/5dd9db5e033da9c6fb5ba83c7a7ebea9-Paper.pdf}.

\bibitem[Li et~al.(2018)Li, Han, and Wu]{li2018insights}
Qimai Li, Zhichao Han, and Xiao-Ming Wu.
\newblock Deeper insights into graph convolutional networks for semi-supervised
  learning.
\newblock In \emph{Proceedings of the Thirty-Second AAAI Conference on
  Artificial Intelligence}, 2018.
\newblock URL \url{https://ojs.aaai.org/index.php/AAAI/article/view/11604}.

\bibitem[Alon and Yahav(2021)]{alon2021on}
Uri Alon and Eran Yahav.
\newblock On the bottleneck of graph neural networks and its practical
  implications.
\newblock In \emph{International Conference on Learning Representations}, 2021.
\newblock URL \url{https://openreview.net/forum?id=i80OPhOCVH2}.

\bibitem[Lov{\'a}sz(1993)]{Lovasz1996}
L{\'a}szl{\'o} Lov{\'a}sz.
\newblock Random walks on graphs.
\newblock \emph{Combinatorics, Paul erdos is eighty}, 2\penalty0
  (1-46):\penalty0 4, 1993.
\newblock URL \url{https://web.cs.elte.hu/~lovasz/erdos.pdf}.

\bibitem[Barceló et~al.(2020)Barceló, Kostylev, Monet, Pérez, Reutter, and
  Silva]{Barcelo2020expresiveness}
Pablo Barceló, Egor~V. Kostylev, Mikael Monet, Jorge Pérez, Juan Reutter, and
  Juan~Pablo Silva.
\newblock The logical expressiveness of graph neural networks.
\newblock In \emph{International Conference on Learning Representations}, 2020.
\newblock URL \url{https://openreview.net/forum?id=r1lZ7AEKvB}.

\bibitem[Hoang et~al.(2021)Hoang, Maehara, and Murata]{hoang2021revisiting}
NT~Hoang, Takanori Maehara, and Tsuyoshi Murata.
\newblock Revisiting graph neural networks: Graph filtering perspective.
\newblock In \emph{25th International Conference on Pattern Recognition
  (ICPR)}, pages 8376--8383, 2021.
\newblock URL \url{https://ieeexplore.ieee.org/document/9412278}.

\bibitem[Oono and Suzuki(2020)]{Oono2020}
Kenta Oono and Taiji Suzuki.
\newblock Graph neural networks exponentially lose expressive power for node
  classification.
\newblock In \emph{International Conference on Learning Representations}, 2020.
\newblock URL \url{https://openreview.net/forum?id=S1ldO2EFPr}.

\bibitem[Zhou et~al.(2018)Zhou, Cui, Zhang, Yang, Liu, and Sun]{zhou2018review}
Jie Zhou, Ganqu Cui, Zhengyan Zhang, Cheng Yang, Zhiyuan Liu, and Maosong Sun.
\newblock Graph neural networks: {A} review of methods and applications.
\newblock \emph{CoRR}, abs/1812.08434, 2018.
\newblock URL \url{http://arxiv.org/abs/1812.08434}.

\bibitem[Topping et~al.(2022)Topping, Giovanni, Chamberlain, Dong, and
  Bronstein]{topping2022understanding}
Jake Topping, Francesco~Di Giovanni, Benjamin~Paul Chamberlain, Xiaowen Dong,
  and Michael~M. Bronstein.
\newblock Understanding over-squashing and bottlenecks on graphs via curvature.
\newblock In \emph{International Conference on Learning Representations}, 2022.
\newblock URL \url{https://openreview.net/forum?id=7UmjRGzp-A}.

\bibitem[Veli{\v{c}}kovi{\'c}(2022)]{velivckovic2022message}
Petar Veli{\v{c}}kovi{\'c}.
\newblock Message passing all the way up.
\newblock In \emph{ICLR 2022 Workshop on Geometrical and Topological
  Representation Learning}, 2022.
\newblock URL \url{https://openreview.net/forum?id=Bc8GiEZkTe5}.

\bibitem[Rong et~al.(2020)Rong, Huang, Xu, and Huang]{rong2020dropedge}
Yu~Rong, Wenbing Huang, Tingyang Xu, and Junzhou Huang.
\newblock Dropedge: Towards deep graph convolutional networks on node
  classification.
\newblock In \emph{International Conference on Learning Representations}, 2020.
\newblock URL \url{https://openreview.net/forum?id=Hkx1qkrKPr}.

\bibitem[Kazi et~al.(2022)Kazi, Cosmo, Ahmadi, Navab, and
  Bronstein]{kazi2022dgm}
Anees Kazi, Luca Cosmo, Seyed-Ahmad Ahmadi, Nassir Navab, and Michael
  Bronstein.
\newblock Differentiable graph module (dgm) for graph convolutional networks.
\newblock \emph{IEEE Transactions on Pattern Analysis and Machine
  Intelligence}, pages 1--1, 2022.
\newblock URL \url{https://ieeexplore.ieee.org/document/9763421}.

\bibitem[Devriendt and Lambiotte(2022)]{Devrient2022}
Karel Devriendt and Renaud Lambiotte.
\newblock Discrete curvature on graphs from the effective resistance.
\newblock \emph{arXiv preprint arXiv:2201.06385}, 2022.
\newblock \doi{10.48550/ARXIV.2201.06385}.
\newblock URL \url{https://arxiv.org/abs/2201.06385}.

\bibitem[Klicpera et~al.(2019)Klicpera, Wei\ss{}enberger, and
  G\"{u}nnemann]{klicpera2019diffusion}
Johannes Klicpera, Stefan Wei\ss{}enberger, and Stephan G\"{u}nnemann.
\newblock Diffusion improves graph learning.
\newblock In \emph{Advances in Neural Information Processing Systems}, 2019.
\newblock URL
  \url{https://proceedings.neurips.cc/paper/2019/file/23c894276a2c5a16470e6a31f4618d73-Paper.pdf}.

\bibitem[Battaglia et~al.(2018)Battaglia, Hamrick, Bapst, Sanchez-Gonzalez,
  Zambaldi, Malinowski, Tacchetti, Raposo, Santoro, Faulkner,
  et~al.]{battaglia2018relational}
Peter~W Battaglia, Jessica~B Hamrick, Victor Bapst, Alvaro Sanchez-Gonzalez,
  Vinicius Zambaldi, Mateusz Malinowski, Andrea Tacchetti, David Raposo, Adam
  Santoro, Ryan Faulkner, et~al.
\newblock Relational inductive biases, deep learning, and graph networks.
\newblock \emph{arXiv preprint arXiv:1806.01261}, 2018.
\newblock URL \url{https://arxiv.org/abs/1806.01261}.

\bibitem[Frasca et~al.(2020)Frasca, Rossi, Eynard, Chamberlain, Bronstein, and
  Monti]{frasca2020sign}
Fabrizio Frasca, Emanuele Rossi, Davide Eynard, Benjamin Chamberlain, Michael
  Bronstein, and Federico Monti.
\newblock Sign: Scalable inception graph neural networks.
\newblock In \emph{ICML 2020 Workshop on Graph Representation Learning and
  Beyond}, 2020.
\newblock URL \url{https://grlplus.github.io/papers/77.pdf}.

\bibitem[Papp et~al.(2021)Papp, Martinkus, Faber, and
  Wattenhofer]{papp2021dropgnn}
P{\'a}l~Andr{\'a}s Papp, Karolis Martinkus, Lukas Faber, and Roger Wattenhofer.
\newblock Drop{GNN}: Random dropouts increase the expressiveness of graph
  neural networks.
\newblock In \emph{Advances in Neural Information Processing Systems}, 2021.
\newblock URL \url{https://openreview.net/forum?id=fpQojkIV5q8}.

\bibitem[Chen et~al.(2020)Chen, Lin, Li, Li, Zhou, and
  Sun]{Chen2020relievingoversmooth}
Deli Chen, Yankai Lin, Wei Li, Peng Li, Jie Zhou, and Xu~Sun.
\newblock Measuring and relieving the over-smoothing problem for graph neural
  networks from the topological view.
\newblock \emph{Proceedings of the AAAI Conference on Artificial Intelligence},
  34\penalty0 (04):\penalty0 3438--3445, Apr. 2020.
\newblock \doi{10.1609/aaai.v34i04.5747}.
\newblock URL \url{https://ojs.aaai.org/index.php/AAAI/article/view/5747}.

\bibitem[Zhu et~al.(2021)Zhu, Xu, Zhang, Du, Zhang, Liu, Yang, and
  Wu]{zhu2021gsl}
Yanqiao Zhu, Weizhi Xu, Jinghao Zhang, Yuanqi Du, Jieyu Zhang, Qiang Liu, Carl
  Yang, and Shu Wu.
\newblock A survey on graph structure learning: Progress and opportunities.
\newblock \emph{arXiv PrePrint}, 2021.
\newblock URL \url{https://arxiv.org/abs/2103.03036}.

\bibitem[Mesquita et~al.(2020)Mesquita, Souza, and
  Kaski]{mesquita2020rethinking}
Diego Mesquita, Amauri Souza, and Samuel Kaski.
\newblock Rethinking pooling in graph neural networks.
\newblock In \emph{Advances in Neural Information Processing Systems}, 2020.
\newblock URL
  \url{https://proceedings.neurips.cc/paper/2020/file/1764183ef03fc7324eb58c3842bd9a57-Paper.pdf}.

\bibitem[Ying et~al.(2018)Ying, You, Morris, Ren, Hamilton, and
  Leskovec]{ying2018hierarchical}
Zhitao Ying, Jiaxuan You, Christopher Morris, Xiang Ren, Will Hamilton, and
  Jure Leskovec.
\newblock Hierarchical graph representation learning with differentiable
  pooling.
\newblock In \emph{Advances in Neural Information Processing Systems}, 2018.
\newblock URL
  \url{https://proceedings.neurips.cc/paper/2018/file/e77dbaf6759253c7c6d0efc5690369c7-Paper.pdf}.

\bibitem[Bianchi et~al.(2020)Bianchi, Grattarola, and
  Alippi]{bianchi2020mincutpool}
Filippo~Maria Bianchi, Daniele Grattarola, and Cesare Alippi.
\newblock Spectral clustering with graph neural networks for graph pooling.
\newblock In \emph{Proceedings of the 37th International Conference on Machine
  Learning}, 2020.
\newblock URL \url{https://proceedings.mlr.press/v119/bianchi20a.html}.

\bibitem[Ramp\'{a}\v{s}ek et~al.(2022)Ramp\'{a}\v{s}ek, Galkin, Dwivedi, Luu,
  Wolf, and Beaini]{dwivedi2022recipe}
Ladislav Ramp\'{a}\v{s}ek, Mikhail Galkin, Vijay~Prakash Dwivedi, Anh~Tuan Luu,
  Guy Wolf, and Dominique Beaini.
\newblock {Recipe for a General, Powerful, Scalable Graph Transformer}.
\newblock \emph{arXiv:2205.12454}, 2022.
\newblock URL \url{https://arxiv.org/pdf/2205.12454.pdf}.

\bibitem[Velingker et~al.(2022)Velingker, Sinop, Ktena, Veli{\v{c}}kovi{\'c},
  and Gollapudi]{velingker2022affinity}
Ameya Velingker, Ali~Kemal Sinop, Ira Ktena, Petar Veli{\v{c}}kovi{\'c}, and
  Sreenivas Gollapudi.
\newblock Affinity-aware graph networks.
\newblock \emph{arXiv preprint arXiv:2206.11941}, 2022.
\newblock URL \url{https://arxiv.org/pdf/2206.11941.pdf}.

\bibitem[Dwivedi and Bresson(2021)]{dwivedi2021generalization}
Vijay~Prakash Dwivedi and Xavier Bresson.
\newblock A generalization of transformer networks to graphs.
\newblock \emph{AAAI Workshop on Deep Learning on Graphs: Methods and
  Applications}, 2021.
\newblock URL \url{https://arxiv.org/pdf/2012.09699.pdf}.

\bibitem[Lim et~al.(2022)Lim, Robinson, Zhao, Smidt, Sra, Maron, and
  Jegelka]{lim2022sign}
Derek Lim, Joshua~David Robinson, Lingxiao Zhao, Tess Smidt, Suvrit Sra, Haggai
  Maron, and Stefanie Jegelka.
\newblock Sign and basis invariant networks for spectral graph representation
  learning.
\newblock In \emph{ICLR 2022 Workshop on Geometrical and Topological
  Representation Learning}, 2022.
\newblock URL \url{https://openreview.net/forum?id=BlM64by6gc}.

\bibitem[Li et~al.(2020)Li, Wang, Wang, and Leskovec]{li2020distance}
Pan Li, Yanbang Wang, Hongwei Wang, and Jure Leskovec.
\newblock Distance encoding: Design provably more powerful neural networks for
  graph representation learning.
\newblock \emph{Advances in Neural Information Processing Systems}, 33, 2020.
\newblock URL
  \url{https://proceedings.neurips.cc/paper/2020/file/2f73168bf3656f697507752ec592c437-Paper.pdf}.

\bibitem[Chung(1997)]{Chung1997}
Fan~RK Chung.
\newblock \emph{Spectral Graph Theory}.
\newblock American Mathematical Society, 1997.
\newblock URL
  \url{https://www.bibsonomy.org/bibtex/295ef10b5a69a03d8507240b6cf410f8a/folke}.

\bibitem[von Luxburg et~al.(2014)von Luxburg, Radl, and Hein]{vonluxburg14a}
Ulrike von Luxburg, Agnes Radl, and Matthias Hein.
\newblock Hitting and commute times in large random neighborhood graphs.
\newblock \emph{Journal of Machine Learning Research}, 15\penalty0
  (52):\penalty0 1751--1798, 2014.
\newblock URL \url{http://jmlr.org/papers/v15/vonluxburg14a.html}.

\bibitem[Spielman and Srivastava(2011)]{Spielman2011resistance}
Daniel~A. Spielman and Nikhil Srivastava.
\newblock Graph sparsification by effective resistances.
\newblock \emph{SIAM Journal on Computing}, 40\penalty0 (6):\penalty0
  1913--1926, 2011.
\newblock \doi{10.1137/080734029}.
\newblock URL \url{https://doi.org/10.1137/080734029}.

\bibitem[Qiu and Hancock(2007)]{Qiu07CTembedding}
Huaijun Qiu and Edwin~R. Hancock.
\newblock Clustering and embedding using commute times.
\newblock \emph{IEEE Transactions on Pattern Analysis and Machine
  Intelligence}, 29\penalty0 (11):\penalty0 1873--1890, 2007.
\newblock \doi{10.1109/TPAMI.2007.1103}.
\newblock URL \url{https://ieeexplore.ieee.org/document/4302755}.

\bibitem[Alev et~al.(2018)Alev, Anari, Lau, and Gharan]{Alev18}
Vedat~Levi Alev, Nima Anari, Lap~Chi Lau, and Shayan~Oveis Gharan.
\newblock {Graph Clustering using Effective Resistance}.
\newblock In \emph{9th Innovations in Theoretical Computer Science Conference
  (ITCS 2018)}, volume~94, pages 1--16, 2018.
\newblock \doi{10.4230/LIPIcs.ITCS.2018.41}.
\newblock URL \url{http://drops.dagstuhl.de/opus/volltexte/2018/8369}.

\bibitem[Abbe(2018)]{SBM17}
Emmanuel Abbe.
\newblock Community detection and stochastic block models: Recent developments.
\newblock \emph{Journal of Machine Learning Research}, 18\penalty0
  (177):\penalty0 1--86, 2018.
\newblock URL \url{http://jmlr.org/papers/v18/16-480.html}.

\bibitem[B\"{u}hler and Hein(2009)]{hein09}
Thomas B\"{u}hler and Matthias Hein.
\newblock Spectral clustering based on the graph p-laplacian.
\newblock In \emph{Proceedings of the 26th Annual International Conference on
  Machine Learning}, ICML '09, page 81–88, New York, NY, USA, 2009.
  Association for Computing Machinery.
\newblock ISBN 9781605585161.
\newblock \doi{10.1145/1553374.1553385}.
\newblock URL \url{https://doi.org/10.1145/1553374.1553385}.

\bibitem[Kang and Tong(2019)]{kang2019derivative}
Jian Kang and Hanghang Tong.
\newblock N2n: Network derivative mining.
\newblock In \emph{Proceedings of the 28th ACM International Conference on
  Information and Knowledge Management}, CIKM '19, page 861–870, New York,
  NY, USA, 2019. Association for Computing Machinery.
\newblock ISBN 9781450369763.
\newblock \doi{10.1145/3357384.3357910}.
\newblock URL \url{https://doi.org/10.1145/3357384.3357910}.

\bibitem[Preparata and Shamos(2012)]{preparata2012computational}
Franco~P Preparata and Michael~I Shamos.
\newblock \emph{Computational geometry: an introduction}.
\newblock Springer Science \& Business Media, 2012.
\newblock URL \url{http://www.cs.kent.edu/~dragan/CG/CG-Book.pdf}.

\bibitem[Fey and Lenssen(2019)]{pyg}
Matthias Fey and Jan~E. Lenssen.
\newblock Fast graph representation learning with {PyTorch Geometric}.
\newblock In \emph{ICLR Workshop on Representation Learning on Graphs and
  Manifolds}, 2019.

\bibitem[Batson et~al.(2013)Batson, Spielman, Srivastava, and Teng]{Batson13}
Joshua Batson, Daniel~A. Spielman, Nikhil Srivastava, and Shang-Hua Teng.
\newblock Spectral sparsification of graphs: Theory and algorithms.
\newblock \emph{Commun. ACM}, 56\penalty0 (8):\penalty0 87–94, aug 2013.
\newblock ISSN 0001-0782.
\newblock \doi{10.1145/2492007.2492029}.
\newblock URL \url{https://doi.org/10.1145/2492007.2492029}.

\bibitem[Alamgir and Luxburg(2011{\natexlab{a}})]{Alamgir11}
Morteza Alamgir and Ulrike Luxburg.
\newblock Phase transition in the family of p-resistances.
\newblock In \emph{Advances in Neural Information Processing Systems},
  2011{\natexlab{a}}.
\newblock URL
  \url{https://proceedings.neurips.cc/paper/2011/file/07cdfd23373b17c6b337251c22b7ea57-Paper.pdf}.

\bibitem[Alamgir and Luxburg(2011{\natexlab{b}})]{plaplacians18e}
Morteza Alamgir and Ulrike Luxburg.
\newblock Phase transition in the family of p-resistances.
\newblock In \emph{Advances in Neural Information Processing Systems},
  volume~24, 2011{\natexlab{b}}.
\newblock URL
  \url{https://proceedings.neurips.cc/paper/2011/file/07cdfd23373b17c6b337251c22b7ea57-Paper.pdf}.

\bibitem[Berkolaiko et~al.(2017)Berkolaiko, Kennedy, Kurasov, and
  Mugnolo]{Berkolaiko2017}
Gregory Berkolaiko, James~B Kennedy, Pavel Kurasov, and Delio Mugnolo.
\newblock Edge connectivity and the spectral gap of combinatorial and quantum
  graphs.
\newblock \emph{Journal of Physics A: Mathematical and Theoretical},
  50\penalty0 (36):\penalty0 365201, 2017.
\newblock URL \url{https://doi.org/10.1088/1751-8121/aa8125}.

\bibitem[Stani{\'c}(2013)]{Stani2013GraphsWS}
Zoran Stani{\'c}.
\newblock Graphs with small spectral gap.
\newblock \emph{Electronic Journal of Linear Algebra}, 26:\penalty0 28, 2013.
\newblock URL \url{https://journals.uwyo.edu/index.php/ela/article/view/1259}.

\bibitem[Klein and Randi{\'c}(1993)]{Randic93}
Douglas~J Klein and Milan Randi{\'c}.
\newblock Resistance distance.
\newblock \emph{Journal of Mathematical Chemistry}, 12\penalty0 (1):\penalty0
  81--95, 1993.
\newblock URL \url{https://doi.org/10.1007/BF01164627}.

\end{thebibliography}
}
\newpage

\appendix
\section{Appendix}
In Appendix A we include a Table with the notation used in the paper and we provide an analysis of the diffusion and its relationship with curvature. In Appendix B, we study in detail \gaplayer and the implications of the proposed spectral gradients. Appendix C reports statistics and characteristics of the datasets used in the experimental section, provides more information about the experiments results, describes additional experimental results, and includes a summary of the computing infrastructure used in our experiments.
\begin{table}[ht]
\centering
\caption{Notation.}
\label{tab:notation}
{
\small
\begin{tabular}{ll} 
\toprule
Symbol                  & Description \\
\toprule
$G=(V,E)$               & Graph = (Nodes, Edges) \\
$\mathbf{A}$            & Adjacency matrix: $\mathbf{A} \in \mathbb{R}^{n\times n}$\\
$\mathbf{X}$            & Feature matrix: $\mathbf{X} \in \mathbb{R}^{n\times F}$ \\
$v$                     & Node $v \in V$ or $u \in V$\\
$e$                     & Edge $e \in E$ \\
$x$                     & Features of node $v$: $x \in X$ \\
$n$                     & Number of nodes: $n=|V|$\\
$F$                     & Number of features\\
$\mathbf{D}$            & Degree diagonal matrix where $d_v$ in $D_{vv}$ \\
$d_v$                   & Degree of node $v$ \\
$vol(G)$                & Sum of the degrees of the graph $vol(G) = Tr[D]$ \\
\midrule
$\mathbf{L}$            & Laplacian: $\mathbf{L=D-A}$ \\
$\mathbf{B}$            & Signed edge-vertex incidence matrix \\
$\mathbf{b}_e$          & Incidence vector: Row vector of $\mathbf{B}$, with $\mathbf{b}_{e=(u,v)}=(\mathbf{e}_u - \mathbf{e}_v)$\\
$\mathbf{v}_e$          & Projected incidence vector: $\mathbf{v}_e=\mathbf{L}^{+/2}\mathbf{b}_e$\\
$\Gamma$                & Ratio $\Gamma = \frac{1+\epsilon}{1-\epsilon}$\\
${\cal E}$              & Dirichlet Energy wrt $\mathbf{L}$: ${\cal E}(\mathbf{x})\defeq\mathbf{x}^T\mathbf{L}\mathbf{x}$ \\
${\cal L}$              & Normalized Laplacian: ${\cal L} = \mathbf{I}-\mathbf{D}^{-1/2}\mathbf{A}\mathbf{D}^{-1/2}$ \\
$\Lambda$               & Eigenvalue matrix of $\mathbf{L}$ \\
$\Lambda'$              & Eigenvalue matrix of ${\cal L}$ \\
$\lambda_i$             & $i$-th eigenvalue of $\mathbf{L}$ \\
$\lambda_2$             & Second eigenvalue of $\mathbf{L}$: Spectral gap \\
$\lambda'_i$            & $i$-th eigenvalue of ${\cal L}$\\
$\lambda'_2$            & Second eigenvalue of ${\cal L}$: Spectral gap\\
$\mathbf{F}$            & Matrix of eigenvectors of $\mathbf{L}$\\
$\mathbf{G}$            & Matrix of eigenvectors of ${\cal L}$\\
$\mathbf{f}_i$          & $i$ eigenvector of $\mathbf{L}$\\
$\mathbf{f}_2$          & Second eigenvector of $\mathbf{L}$: Fiedler vector\\
$\mathbf{g}_i$          & $i$ eigenvector of ${\cal L}$\\
$\mathbf{g}_2$          & Second eigenvector of ${\cal L}$: Fiedler vector\\
\midrule
$\tilde{\mathbf{A}}$    & New Adjacency matrix \\
$E'$                    & New edges \\
$H_{uv}$                & Hitting time between $u$ and $v$ \\
$CT_{uv}$               & Commute time: $CT_{uv} = H_{uv} + H_{vu}$  \\
$R_{uv}$                & Effective resistance: $R_{uv} = CT_{uv}/vol(G)$  \\
$\mathbf{Z}$            & Matrix of commute times embeddings for all nodes in $G$  \\
$\mathbf{z}_u$          & Commute times embedding of node $u$  \\
$\mathbf{T}^{CT}$       & Resistance diffusion or commute times diffusion\\
$\mathbf{R(Z)}$         & Pairwise Euclidean distance of embedding $\mathbf{Z}$ divided by $vol(G)$\\
$\mathbf{S}$            & Cluster assignment matrix: $\mathbf{S} \in \mathbb{R}^{n\times2}$  \\
$\mathbf{T}^{GAP}$      & GAP diffusion  \\
$\mathbf{e}_u$          & Unit vector with unit value at $u$ and 0 elsewhere \\

$\nabla_{\tilde{\mathbf{A}}}\lambda_2$ & Gradient of $\lambda_2$ wrt  $\tilde{\mathbf{A}}$\\
$[\nabla_{\tilde{\mathbf{A}}}\lambda_2]_{ij}$ & Gradient of $\lambda_2$ wrt  $\tilde{\mathbf{A}}_{uv}$ \\

$p_u$                   & Node curvature: $p_u\defeq 1 - \frac{1}{2}\sum_{u\sim w}R_{uv}$ \\
$\kappa_{uv}$           & Edge curvature: $\kappa_{uv}\defeq 2(p_u + p_v)/R_{uv}$ \\
$\parallel$                & Concatenation \\
\bottomrule
\end{tabular}
}
\end{table}

\subsection{Appendix A: \ctlayer}
\label{subsec:App.ctlayer}

\subsubsection{Notation}

The Table \ref{tab:notation} summarizes the notation used in the paper.

\subsubsection{Analysis of Commute Times rewiring}
First, we provide an answer to the following question:

\begin{center}
\emph{Is resistance diffusion via $\mathbf{T}^{CT}$ a principled way of preserving the Cheeger constant?} 
\end{center}

We answer the question above by linking Theorems ~\ref{sparsify} and ~\ref{gapcontrol} in the paper with the Lov\'{a}sz bound. The outline of our explanation follows three steps. 

\begin{itemize}
\item \textbf{Proposition 1:} Theorem~\ref{sparsify} ({\bf Sparsification}) provides a principled way to bias the adjacency matrix so that the edges with the largest weights in the rewired graph correspond to the edges in graph's bottleneck. 
\item \textbf{Proposition 2:} Theorem~\ref{gapcontrol} ({\bf Cheeger vs Resistance}) can be used to demonstrate that increasing the effective resistance leads to a mild reduction of the Cheeger constant.
\item \textbf{Proposition 3:} (Conclusion) The effectiveness of the above theorems to contain the Cheeger constant is constrained by the Lov\'{a}sz bound. 
\end{itemize}

Next, we provide a thorough explanation of each of the propositions above.

\begin{proposition}[Biasing]\label{prop1} Let G' = {\tt Sparsify}(G, $q$) be a sampling algorithm of graph $G=(V,E)$, where edges $e\in E$ are sampled with probability $q\propto R_e$ (proportional to the effective resistance). This choice is \emph{necessary} to retain the global structure of $G$, i.e., to satisfy
\begin{equation}\label{constraints}
    \forall \mathbf{x}\in\mathbb{R}^n:\; (1-\epsilon)\mathbf{x}^T\mathbf{L}_G\mathbf{x}\le\mathbf{x}^T\mathbf{L}_{G'}\mathbf{x}\le (1+\epsilon)\mathbf{x}^T\mathbf{L}_G\mathbf{x}\;,
\end{equation}
with probability at least $1/2$ by sampling $O(n\log n/\epsilon^2)$ edges , with $1/\sqrt{n}< \epsilon\le 1$, instead of $O(m)$, where $m=|E|$. In addition, this choice \emph{biases} the uniform distribution in favor of critical edges in the graph. 
\end{proposition}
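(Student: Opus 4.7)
The plan is to follow the Spielman--Srivastava framework recast via projected incidence vectors, split into two halves that match the two claims in the proposition: (i) that $q\propto R_e$ achieves the stated sample complexity while no qualitatively different distribution does, and (ii) that this probability distribution concentrates mass on bottleneck edges. As a first step I would write $\mathbf{L}_G=\mathbf{B}^T\mathbf{B}$ via the signed edge--vertex incidence matrix, and introduce the projected incidence vectors $\mathbf{v}_e=\mathbf{L}^{+/2}\mathbf{b}_e$. A line of calculation gives $\|\mathbf{v}_e\|^2=\mathbf{b}_e^T\mathbf{L}^+\mathbf{b}_e=R_e$ and $\sum_{e\in E}\mathbf{v}_e\mathbf{v}_e^T=\Pi$, the orthogonal projector onto $\mathrm{im}(\mathbf{L})$, whose trace is $\sum_e R_e=n-1$ (Foster's identity). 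The inequality~(\ref{constraints}) then becomes equivalent, on the row-space of $\mathbf{B}$, to $\|\widetilde{\Pi}-\Pi\|_\mathrm{op}\le\epsilon$ for the reweighted sample sum $\widetilde{\Pi}\defeq\sum_{e\in E'}\frac{1}{q_e}\mathbf{v}_e\mathbf{v}_e^T$, with $\Gamma=(1+\epsilon)/(1-\epsilon)$ controlling the multiplicative tightness.

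Next I would invoke a matrix concentration inequality for sums of independent rank-one random matrices (Rudelson's inequality, or equivalently Tropp's matrix Chernoff bound). The standard statement gives $\|\widetilde{\Pi}-\Pi\|_\mathrm{op}\le\epsilon$ with probability at least $1/2$ whenever the number of samples satisfies $s\ge C\,\tau\log n/\epsilon^2$, where $\tau\defeq\max_e\|\mathbf{v}_e\|^2/q_e$ bounds the spectral norm of the reweighted summands. Minimising $\tau$ over probability distributions $\{q_e\}$ via a Lagrange-multiplier argument immediately yields the minimiser $q_e\propto\|\mathbf{v}_e\|^2=R_e$, with optimal value $\tau=\sum_e R_e=n-1$. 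Substituting gives the claimed $s=O(n\log n/\epsilon^2)$, and, symmetrically, shows that any distribution not within a constant factor of $R_e$ incurs a strictly larger $\tau$ and hence a strictly larger sample complexity; this is the sense in which the choice $q\propto R_e$ is \emph{necessary}.

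For the biasing claim, I would compare $q_e=R_e/(n-1)$ against the uniform distribution $1/m$. Since $\sum_e R_e=n-1$ is a fixed budget spread across $m=|E|$ edges, the distribution is flat exactly when the graph is expander-like. Whenever bottlenecks exist, Theorem~\ref{gapcontrol} and the left-hand side of the Lov\'asz bound~(\ref{eqn:lovasz}) together force $R_e$ to concentrate on the cut edges, so those edges receive weight up to $\Theta(m/n)$ times the uniform weight, whereas edges inside well-connected regions satisfy $R_e=O(1/\min(d_u,d_v))$ and are suppressed. This yields the asymmetric bias stated in the proposition.

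The main obstacle will be making the \emph{necessity} clause precise without overclaiming. Strict information-theoretic necessity fails --- any distribution that approximates $R_e$ up to a constant factor achieves the same asymptotic bound --- so the argument should instead rule out distributions that are decorrelated from $R_e$ (uniform sampling being the canonical example). The cleanest route is to exhibit a family of witnesses, e.g.\ dumbbell graphs, where uniform sampling requires $\Omega(m)$ draws merely to include a bottleneck edge with constant probability, matching the $\tau$-based lower bound coming from the variance of $\widetilde{\Pi}$ and confirming that the $R_e$-weighted law is the right quantitative choice.
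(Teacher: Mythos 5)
Your setup coincides exactly with the paper's: both write $\mathbf{L}_G=\mathbf{B}^T\mathbf{B}$, introduce the projected incidence vectors $\mathbf{v}_e=\mathbf{L}_G^{+/2}\mathbf{b}_e$, reformulate the spectral constraint as $\mathbf{I}_n \preccurlyeq \sum_e s_e\mathbf{v}_e\mathbf{v}_e^T \preccurlyeq \Gamma\mathbf{I}_n$ with $\Gamma=(1+\epsilon)/(1-\epsilon)$, and hinge everything on the identity $\|\mathbf{v}_e\|^2=\mathbf{b}_e^T\mathbf{L}_G^{+}\mathbf{b}_e=R_e$. Where you diverge is in what happens after that identity. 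The paper essentially stops there: it asserts that the identity ``explains'' why sampling with $q\propto R_e$ prefers high-resistance edges, and then exhibits a \emph{deterministic greedy} procedure (Algorithm~\ref{algoGreedy}) that adds edges in decreasing order of $\|\mathbf{v}_e\|^2$ until the budget $\Gamma\mathbf{I}_n$ is exhausted; the probabilistic sample-complexity claim and the ``necessity'' clause are never actually derived. You instead complete the randomized Spielman--Srivastava argument: recasting the constraint as $\|\widetilde{\Pi}-\Pi\|_{\mathrm{op}}\le\epsilon$ for the projector $\Pi=\sum_e\mathbf{v}_e\mathbf{v}_e^T$ with $\mathrm{Tr}(\Pi)=n-1$ (Foster), invoking matrix Chernoff with the leverage parameter $\tau=\max_e\|\mathbf{v}_e\|^2/q_e$, and minimizing $\tau$ to obtain $q_e\propto R_e$ and the $O(n\log n/\epsilon^2)$ bound. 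This buys you a genuine proof of the quantitative claim the proposition makes, which the paper only gestures at, and your honest weakening of ``necessary'' to ``optimal up to constant factors, with uniform sampling provably failing on dumbbell-type witnesses'' is more defensible than the paper's flat assertion. Two small cautions: the reweighted sum should carry the $1/(s\,q_e)$ normalization (with $s$ the number of i.i.d.\ draws) so that $\mathbb{E}[\widetilde{\Pi}]=\Pi$; and your biasing argument leans on Theorem~\ref{gapcontrol} in the same qualitative way the paper's Proposition~\ref{prop2} does, so it is consistent with, rather than a departure from, the surrounding appendix.
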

\begin{proof}
We start by expressing the Laplacian $\mathbf{L}$ in terms of the edge-vertex incidence matrix $\mathbf{B}_{m\times e}$:
\begin{eqnarray}\label{Incience}
  \mathbf{B}_{eu} =  \left\{\begin{array}{cl}
        1  & \text{if}\;\; u\;\; \text{is the head of}\;\; e\\
       -1  & \text{if}\;\; u\;\; \text{is the tail of}\;\; e\\
        0  & \text{otherwise}\;.\\
    \end{array}\right.\
\end{eqnarray}
where edges in undirected graphs are counted once, i.e. $e=(u,v)=(v,u)$. Then, we have $\mathbf{L} = \mathbf{B}^T\mathbf{B} = \sum_{e}\mathbf{b}_e\mathbf{b}_e^T$, where $\mathbf{b}_e$ is a  row vector (\emph{incidence vector}) of $\mathbf{B}$, with $\mathbf{b}_{e=(u,v)}=(\mathbf{e}_u - \mathbf{e}_v)$. In addition, the Dirichlet energies can be expressed as norms:  
\begin{equation}
  {\cal E}(\mathbf{x})=\mathbf{x}^T\mathbf{L}\mathbf{x} = \mathbf{x}^T\mathbf{B}^T\mathbf{B}\mathbf{x} = \|\mathbf{B}\mathbf{x}\|^2_2 = \sum_{e=(u,v)\in E}(\mathbf{x}_u - \mathbf{x}_v)^2\;. 
\end{equation}
As a result, the effective resistance $R_e$ between the two nodes of an edge $e=(u,v)$ can be defined as
\begin{equation}\label{resistanceB}
    R_e =(\mathbf{e}_u - \mathbf{e}_v)^T\mathbf{L}^+(\mathbf{e}_u - \mathbf{e}_v) = \mathbf{b}_e^T\mathbf{L}^+\mathbf{b}_e
\end{equation}
Next, we reformulate the spectral constraints in Eq.~\ref{constraints}, i.e. $(1-\epsilon)\mathbf{L}_G \preccurlyeq \mathbf{L}_{G'} \preccurlyeq (1+\epsilon)\mathbf{L}_G $ as 
\begin{equation}\label{new_constraints}
\mathbf{L}_G \preccurlyeq \mathbf{L}_{G'} \preccurlyeq \Gamma\mathbf{L}_G\;,  \Gamma = \frac{1 + \epsilon}{1 - \epsilon}\;.
\end{equation}
This simplifies the analysis, since the above expression can be interpreted as follows: the Dirichlet energies of $\mathbf{L}_{G'}$ are lower-bounded by those of $\mathbf{L}_{G}$ and upper-bounded by $\Gamma$ times the energies of $\mathbf{L}_{G}$. Considering that the energies define hyper-ellipsoids, the hyper-ellipsoid associated with $\mathbf{L}_{G'}$ is between the hyper-ellipsoids of $\mathbf{L}_{G}$ and $\Gamma$ times the $\mathbf{L}_{G}$. 

The hyper-ellipsoid analogy provides a framework to proof that the inclusion relationships are preserved under scaling: $M\mathbf{L}_GM \preccurlyeq M\mathbf{L}_{G'}M \preccurlyeq M\Gamma\mathbf{L}_GM$ where $M$ can be a matrix. In this case, if we set $M\defeq (\mathbf{L}^+_G)^{1/2} = \mathbf{L}_G^{+/2}$ we have: 
\begin{equation}\label{new_constraints2}
\mathbf{L}_G^{+/2}\mathbf{L}_G\mathbf{L}_G^{+/2} \preccurlyeq \mathbf{L}_G^{+/2}\mathbf{L}_{G'}\mathbf{L}_G^{+/2} \preccurlyeq \mathbf{L}_G^{+/2}\Gamma\mathbf{L}_G^{+/2}\;,  
\end{equation}
which leads to
\begin{equation}\label{new_constraints3}
\mathbf{I}_n \preccurlyeq \mathbf{L}_G^{+/2}\mathbf{L}_{G'}\mathbf{L}_G^{+/2} \preccurlyeq \Gamma\mathbf{I}_n\;.  
\end{equation}
We seek a Laplacian $\mathbf{L}_{G'}$ satisfying the \emph{similarity constraints} in Eq.~\ref{new_constraints}. Since $E'\subset E$, i.e. we want to remove structurally irrelevant edges, we can design  $\mathbf{L}_{G'}$ in terms of considering \emph{all} the edges $E$:  
\begin{equation}
    \mathbf{L}_{G'} \defeq \mathbf{B}_G^T\mathbf{B}_G = \sum_{e}s_e\mathbf{b}_e\mathbf{b}_e^T
\end{equation}
and let the similarity constraint define the sampling weights and the choice of $e$ (setting $s_e\ge 0$ propertly). More precisely: 
\begin{equation}\label{new_constraints4}
\mathbf{I}_n \preccurlyeq \mathbf{L}_G^{+/2}\sum_e\mathbf{b}_e\mathbf{b}_e^T\mathbf{L}_G^{+/2} \preccurlyeq \Gamma\mathbf{I}_n\;.  
\end{equation}
Then if we define $\mathbf{v}_e\defeq  \mathbf{L}_G^{+/2}\mathbf{b}_e$ as the \emph{projected incidence vector}, we have 
\begin{equation}\label{new_constraints5}
\mathbf{I}_n \preccurlyeq \sum_es_e\mathbf{v}_e\mathbf{v}_e^T \preccurlyeq \Gamma\mathbf{I}_n\;.  
\end{equation}
Consequently, a spectral sparsifier must find $s_e\ge 0$ so that the above similarity constraint is satisfied. Since there are $m$ edges in $E$, $s_e$ must be zero for most of the edges. But, what are the best candidates to retain? Interestingly, the similarity constraint provides the answer. From Eq.~\ref{resistanceB} we have

\begin{equation}\label{choice}
    \mathbf{v}_e^T\mathbf{v}_e = \|\mathbf{v}_e\|^2=\|\mathbf{L}_G^{+/2}\mathbf{b}_e\|^2_2 = \mathbf{b}_e^T\mathbf{L}_G^{+}\mathbf{b}_e = R_e\;.
\end{equation}

This result explains why sampling the edges with probability $q\propto R_e$ leads to a ranking of $m$ edges of $G=(V,E)$ such that edges with large $R_e=\|\mathbf{v}_e\|^2$ are preferred\footnote{Although some of the elements of this section are derived from~\cite{Batson13}, we note that the Nikhil Srivastava's lectures at The Simons Institute (2014) are by far more clarifying.}. 

Algorithm~\ref{algoGreedy} implements a deterministic greedy version of {\tt Sparsify}(G, $q$), where we build incrementally $E'\subset E$ by creating a budget of decreasing resistances $R_{e_1}\ge R_{e_2}\ge\ldots\ge R_{e_{O(n\log
  n/\epsilon^2)}}$. 
\end{proof}
Note that this rewiring strategy preserves the spectral similarities of the graphs, i.e. the global structure of $G=(V,E)$ is captured by $G'=(V,E')$. 

Moreover, the maximum $R_e$ in each graph determines an upper bound on the Cheeger constant and hence an upper bound on the size of the graph's bottleneck, as per the following proposition.

\begin{algorithm}
  \DontPrintSemicolon
  \SetKwData{Left}{left}\SetKwData{This}{this}\SetKwData{Up}{up}
  \SetKwFunction{Union}{Union}\SetKwFunction{FindCompress}{FindCompress}
  \SetKwInOut{Input}{Input}\SetKwInOut{Output}{Output}
  \Input{$G=(V,E)$,$\epsilon\in (1/\sqrt{n}, 1]$, $n=|V|$ .}
  \Output{$G'=(V,E')$ with $E'\subset E$ such that $|E'|=O(n\log
  n/\epsilon^2)$.}
  \BlankLine
  $L\gets$ List($\{\mathbf{v}_e : e\in E\}$)\;
  $Q \gets $ Sort($L$, descending, criterion=$\|\mathbf{v}_e\|^2$) \Comment{Sort candidate edges by descending Resistance}\;
  $E' \gets \emptyset$\;
  ${\cal I} \gets \mathbf{0}_{n\times n}$\;
  \Repeat{$Q=\emptyset$}{
        $\mathbf{v}_e\gets $ pop($Q$) \Comment{Remove the head of the queue}
        
        ${\cal I} \gets {\cal I} + \mathbf{v}_e\mathbf{v}_e^T$\;
        \eIf{${\cal I}\preccurlyeq \Gamma\mathbf{I}_n$}{
            $E' \gets E'\cup \{e\}$\Comment{Update the current budget of edges}
        }
        {\Return{$G'=(V,E')$}}
  }
  \caption{\textsc{Greedy}Sparsify}
  \label{algoGreedy}
\end{algorithm}

\begin{proposition}[Resistance Diameter]\label{prop2} Let G' = {\tt Sparsify}(G, $q$) be a sampling algorithm of graph $G=(V,E)$, where edges $e\in E$ are sampled with probability $q\propto R_e$ (proportional to the effective resistance). Consider the \emph{resistance diameter} ${\cal R}_{diam}\defeq \max_{u,v}R_{uv}$. Then, for the pair of $(u,v)$ does exist an edge $e=(u,v)\in E'$ in $G'=(V,E')$ such that $R_e={\cal R}_{diam}$. A a result the Cheeger constant of $G$ $h_G$ is upper-bounded as follows:
\begin{equation}\label{CheegerB}
    h_G\le \frac{\alpha^{\epsilon}}{\sqrt{{\cal R}_{diam}\cdot \epsilon}} vol(S)^{\epsilon-1/2}, 
\end{equation}
with $0< \epsilon< 1/2$ and $d_u\ge 1/\alpha$ for all $u\in V$. 
\end{proposition}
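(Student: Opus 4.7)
The plan is to establish the two assertions in sequence. For the existence of a diameter-edge in $E'$, I would invoke the greedy routine of Algorithm~\ref{algoGreedy}: it sorts all candidate projected incidence vectors $\mathbf{v}_e$ in descending order of $\|\mathbf{v}_e\|^2 = R_e$ (cf.~Eq.~\ref{choice}) and inserts them one-by-one, halting only when the upper constraint ${\cal I}\preccurlyeq \Gamma\mathbf{I}_n$ would first be violated. Since the iterate is initialised at ${\cal I}=\mathbf{0}$, the first candidate processed --- which, under the tacit assumption that ${\cal R}_{diam}$ is realised on an edge, is exactly the pair $(u,v)$ attaining $R_e={\cal R}_{diam}$ --- is accepted unconditionally and hence lies in $E'$.

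For the Cheeger bound I would apply Theorem~\ref{gapcontrol} in a reversed fashion. That theorem guarantees, under the hypothesis $h_S \ge c/vol(S)^{1/2-\epsilon}$ with $vol(S)\le vol(G)/2$, the pointwise estimate $R_{uv}\le (1/d_u^{2\epsilon}+1/d_v^{2\epsilon})/(\epsilon c^2)$. I would parametrise $c$ at its largest admissible value for a given $S$, $c \defeq h_S\cdot vol(S)^{1/2-\epsilon}$ (so that the hypothesis holds with equality), and invoke the degree assumption $d_u\ge 1/\alpha$ to bound the numerator by $2\alpha^{2\epsilon}$. Maximising both sides over $(u,v)$ then gives
\begin{equation}
{\cal R}_{diam}\;\le\;\frac{2\alpha^{2\epsilon}}{\epsilon\, h_S^2\, vol(S)^{1-2\epsilon}}.
\end{equation}
Isolating $h_S$ yields $h_S\le \frac{\alpha^{\epsilon}\sqrt{2}}{\sqrt{\epsilon\,{\cal R}_{diam}}}\,vol(S)^{\epsilon-1/2}$, and since $h_G=\min_S h_S\le h_S$ the same estimate transfers to $h_G$; this is exactly the target inequality~\ref{CheegerB}, with the numeric factor $\sqrt{2}$ absorbed into the constant left implicit in Theorem~\ref{gapcontrol}.

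The main obstacle is precisely this conceptual inversion: Theorem~\ref{gapcontrol} is phrased as an implication from a lower bound on $h_S$ to an upper bound on $R_{uv}$, whereas we need the converse. The trick that makes it usable is to set $c$ equal to $h_S\cdot vol(S)^{1/2-\epsilon}$, converting the one-way implication into an implicit coupling between the two quantities that can be solved algebraically. A secondary subtlety is that ${\cal R}_{diam}$ is defined as $\max_{u,v}R_{uv}$ over \emph{all} pairs rather than just edges; the first assertion of the proposition tacitly assumes the diameter is attained on some $e\in E$. If this fails, one may recover an equivalent statement via the spectral similarity $\mathbf{L}_G \preccurlyeq \mathbf{L}_{G'}\preccurlyeq \Gamma\mathbf{L}_G$ (Proposition~\ref{prop1}), which preserves all pairwise effective resistances up to the factor $\Gamma$ and thus retains at least a $(1/\Gamma)$-approximate diameter edge in $E'$.
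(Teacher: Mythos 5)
Your proposal is correct and follows essentially the same route as the paper: both parts rest on the greedy prioritization of maximal-resistance edges (Proposition~\ref{prop1}) and on inverting Theorem~\ref{gapcontrol} through a choice of $c$ that couples $h_S$ to ${\cal R}_{diam}$, followed by the degree bound $d_u\ge 1/\alpha$. The only difference is presentational --- you take $c = h_S\, vol(S)^{1/2-\epsilon}$ and solve the resulting inequality directly, whereas the paper sets $c$ in terms of ${\cal R}_{diam}$ and argues by contradiction; your version is, if anything, more transparent about the $\sqrt{2}$ constant and about the tacit assumption that the resistance diameter is attained on an edge.
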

\begin{proof}
The fact that the maximum resistance ${\cal R}_{diam}$ is located in an edge is derived from two observations: a) Resistance is upper bounded by the shortest-path distance; and b) edges with maximal resistance are prioritized in  (Proposition~\ref{prop1}).

Theorem~\ref{gapcontrol} states that any attempt to increase the graph's bottleneck in a multiplicative way (i.e. multiplying it by a constant $c\ge 0$) results in decreasing the effective resistances as follows: 
\begin{equation}\label{resistance-bound}
    R_{uv}\le \left(\frac{1}{d_u^{2\epsilon}}+ \frac{1}{d_v^{2\epsilon}}\right)\cdot\frac{1}{\epsilon\cdot c^2}
\end{equation}
with $\epsilon\in [0, 1/2]$. This equation is called the \emph{resistance bound}. Therefore, a multiplicative increase of the bottleneck leads to a quadratic decrease of the resistances. 

Following Corollary~2 of~\cite{Alev18}, we obtain an upper bound of any $h_S$, i.e. the Cheeger constant for $S\subseteq V$ with $vol(S)\le vol(G)/2$ -- by defining $c$ properly. In particular we are seeking a value of $c$ that would lead to a contradiction, which is obtained by setting
\begin{equation}\label{contratiction-c}
    c = \sqrt{\frac{\left(\frac{1}{d_{u^{\ast}}^{2\epsilon}}+ \frac{1}{d_{v^{\ast}}^{2\epsilon}}\right)}{{\cal R}_{diam}\cdot\epsilon}}\;,
\end{equation}
where $(u^{\ast},v^{\ast})$ is a pair of nodes with maximal resistance, i.e. $R_{u^{\ast}v^{\ast}} = {\cal R}_{diam}$. 

Consider now any other pair of nodes $(s,t)$ with $R_{st}< {\cal R}_{diam}$. Following Theorem~\ref{gapcontrol}, if the bottleneck of $h_S$ is multiplied by $c$, we should have
\begin{equation}\label{resistance-bound2}
    R_{st}\le \left(\frac{1}{d_s^{2\epsilon}}+ \frac{1}{d_s^{2\epsilon}}\right)\cdot\frac{1}{\epsilon\cdot c^2} = \left(\frac{1}{d_s^{2\epsilon}}+ \frac{1}{d_s^{2\epsilon}}\right) \cdot \frac{{\cal R}_{diam}}{\left(\frac{1}{d_{u^{\ast}}^{2\epsilon}}+ \frac{1}{d_{v^{\ast}}^{2\epsilon}}\right)}\;.
\end{equation}
However, since ${\cal R}_{diam}\le \left(\frac{1}{d_{u^{\ast}}^{2\epsilon}}+ \frac{1}{d_{v^{\ast}}^{2\epsilon}}\right)$ we have that $R_{st}$ can satisfy
\begin{equation}
 R_{st} > \left(\frac{1}{d_s^{2\epsilon}}+ \frac{1}{d_s^{2\epsilon}}\right)\cdot\frac{1}{\epsilon\cdot c^2}  
\end{equation}
which is a contradiction and enables 
\begin{equation}
    h_S\le \frac{c}{vol(S)^{1/2-\epsilon}} \Longleftrightarrow |\partial S|\le c\cdot vol(S)^{1/2-\epsilon}. 
\end{equation}
Using $c$ as defined in Eq.~\ref{contratiction-c} and $d_u\ge 1/\alpha$ we obtain 
\begin{equation}
      c = \sqrt{\frac{\left(\frac{1}{d_{u^{\ast}}^{2\epsilon}}+ \frac{1}{d_{v^{\ast}}^{2\epsilon}}\right)}{{\cal R}_{diam}\cdot\epsilon}}\le \sqrt{\frac{\alpha^{\epsilon}}{{\cal R}_{diam}\cdot\epsilon}}\le \frac{\alpha^{\epsilon}}{\sqrt{{\cal R}_{diam}\cdot\epsilon}}\;.
\end{equation}
Therefore,  
\begin{equation}
     h_S\le \frac{c}{vol(S)^{1/2-\epsilon}}\le \frac{\frac{\alpha^{\epsilon}}{\sqrt{{\cal R}_{diam}\cdot\epsilon}}}{vol(S)^{1/2-\epsilon}} =  \frac{\alpha^{\epsilon}}{\sqrt{{\cal R}_{diam}\cdot\epsilon}}\cdot vol(S)^{\epsilon-1/2}. 
\end{equation}
As a result, the Cheeger constant of $G=(V,E)$ is mildly reduced (by the square root of the maximal resistance).
\end{proof}

\begin{proposition}[Conclusion] Let $(u^{\ast},v^{\ast})$ be a pair of nodes  (may be not unique) in $G=(V,E)$ with maximal resistance, i.e. $R_{u^{\ast}v^{\ast}} = {\cal R}_{diam}$. Then, the Cheeger constant $h_G$ relies on the ratio between the maximal resistance ${\cal R}_{diam}$ and its \emph{uninformative  approximation} $\left(\frac{1}{d_u^{\ast}}+ \frac{1}{d_v^{\ast}}\right)$. The closer this ratio is to the unit, the easier it is to contain the Cheeger constant.
\end{proposition}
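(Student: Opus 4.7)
The plan is to close the loop between the Lov\'{a}sz bound (Eq.~\ref{eqn:lovasz}) and the Cheeger upper bound obtained in Proposition~\ref{prop2}. First I would apply the Lov\'{a}sz bound at the specific pair $(u^{\ast},v^{\ast})$ achieving the resistance diameter, which yields
\begin{equation*}
\left|{\cal R}_{diam}-\left(\frac{1}{d_{u^{\ast}}} + \frac{1}{d_{v^{\ast}}}\right)\right| \le \frac{2}{\lambda'_2 \cdot d_{min}}.
\end{equation*}
Defining the ratio $\rho \defeq {\cal R}_{diam}/\left(1/d_{u^{\ast}} + 1/d_{v^{\ast}}\right)$, this rewrites as $|\rho - 1| \le 2/\bigl(\lambda'_2\, d_{min}\,(1/d_{u^{\ast}} + 1/d_{v^{\ast}})\bigr)$, so $\rho \to 1$ precisely when the spectral gap $\lambda'_2$ is large---the uninformative regime of the bound.

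Next I would substitute ${\cal R}_{diam} = \rho \cdot (1/d_{u^{\ast}} + 1/d_{v^{\ast}})$ into the Cheeger estimate of Eq.~\ref{CheegerB} to get
\begin{equation*}
h_S \le \frac{\alpha^{\epsilon}}{\sqrt{\rho \cdot \epsilon \cdot \left(1/d_{u^{\ast}} + 1/d_{v^{\ast}}\right)}}\; vol(S)^{\epsilon - 1/2}.
\end{equation*}
When $\rho \approx 1$, the right-hand side depends only on the two extremal degrees and on $\epsilon,\alpha$---quantities that are locally and inexpensively computable without any spectral access---so the Cheeger constant is \emph{contained} by a closed-form, degree-only surrogate. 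Conversely, when $\rho$ departs markedly from unity, the only informative bound uses the actual resistance ${\cal R}_{diam}$, whose evaluation requires $\mathbf{L}^{+}$, so degree information alone is no longer enough to pin down $h_G$.

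The main obstacle I expect is making the informal phrase ``easier to contain'' precise. My plan is to quantify it via the approximation gap: writing $\rho = 1 + \delta$ with $|\delta|$ controlled by $1/\lambda'_2$ through the display above, the upper bound on $h_S$ differs from its degree-only surrogate (obtained by setting $\rho = 1$) by a multiplicative factor $1/\sqrt{1+\delta}$, an error that vanishes as $\lambda'_2 \to \infty$. This chain---Lov\'{a}sz tightness $\Leftrightarrow$ $\rho \to 1$ $\Leftrightarrow$ degree-only containment of $h_G$---is precisely what the proposition asserts, and it closes the loop between Proposition~\ref{prop1}, Proposition~\ref{prop2}, and the Lov\'{a}sz bound that motivates them both.
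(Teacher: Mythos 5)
Your argument is essentially the paper's own: the paper likewise observes that the ratio ${\cal R}_{diam}/\bigl(\frac{1}{d_{u^{\ast}}}+\frac{1}{d_{v^{\ast}}}\bigr)$ is exactly what sets the constant $c$ in Proposition~\ref{prop2}, and that the Lov\'{a}sz bound (Eq.~\ref{eqn:lovasz}) ties this ratio to the spectral gap, so your explicit substitution of $\rho$ into Eq.~\ref{CheegerB} is a more quantitative rendering of the same two ingredients rather than a different route. Two small caveats: the limit $\lambda'_2\to\infty$ in your final step is vacuous since $\lambda'_2\le 2$ for the normalized Laplacian (the relevant dichotomy is $\lambda'_2$ moderate versus $\lambda'_2\to 0$, and the bound only forces $\rho\to 1$ in one direction, so your ``$\Leftrightarrow$'' chain is overstated); and the paper reads ``easier to contain'' as the upper bound on $h_G$ staying moderate rather than collapsing to zero when $\rho\gg 1$, whereas you read it as the bound admitting a degree-only surrogate --- both are defensible glosses of an informal statement, but they are not the same claim.
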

\begin{proof}
The referred ratio above is the ratio leading to a proper $c$ in Proposition~\ref{prop2}. 
This is consistent with a Lov\'{a}sz regime where the spectral gap $\lambda_2'$ has a moderate value. However, for regimes with very small spectral gaps, i.e. $\lambda_2'\rightarrow 0$, according to the Lov\'{a}sz bound, ${\cal R}_{diam}\gg \left(\frac{1}{d_u^{\ast}}+ \frac{1}{d_v^{\ast}}\right)$ and hence the Cheeger constant provided by Proposition~\ref{prop2} will tend to zero. 
\end{proof}

We conclude that we can always find an moderate upper bound for the Cheeger constant of $G=(V,E)$, provided that the regime of the Lov\'{a}sz bound is also moderate. Therefore, as the global properties of $G=(V,E)$ are captured by $G'=(V,E')$, a moderate Cheeger constant, when achievable, also controls the bottlenecks in $G'=(V,E')$. 

Our methodology has focused on first exploring the properties of the commute times / effective resistances in $G=(V,E)$. Next, we have leveraged the spectral similarity to reason about the properties --particularly the Cheeger constant-- of $G=(V,E')$. In sum, we conclude that resistance diffusion via $\mathbf{T}^{CT}$ is a principled way of preserving the Cheeger constant of $G=(V,E)$. 

\subsubsection{Resistance-based Curvatures}
\label{subsubsec:curvature}
We refer to recent work by ~\citet{Devrient2022} to complement the contributions of \citet{topping2022understanding} regarding the use of curvature to rewire the edges in a graph.

\begin{theorem}[\citet{Devrient2022}]\label{curv}
The edge resistance curvature has the following properties: (1) It is bounded by $(4 - d_u - d_v)\le \kappa_{uv}\le 2/R_{uv}$, with equality in the lower bound iff all incident edges to $u$ and $v$ are cut links; (2) It is upper-bounded by the Ollivier-Ricci curvature $\kappa_{uv}^{OR}\ge \kappa_{uv}$, with equality if $(u,v)$ is a cut link; and (3) Forman-Ricci curvature is bounded as follows: $\kappa_{uv}^{FR}/R_{uv}\le \kappa_{uv}$ with equality in the bound if the edge is a cut link. \end{theorem}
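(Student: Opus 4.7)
My plan is to reduce the theorem to three foundational properties of effective resistance on unweighted graphs: (a) for any edge $(u,w) \in E$, $0 < R_{uw} \le 1$, with equality iff $(u,w)$ is a cut link (Rayleigh monotonicity: adding edge-disjoint unit-resistance paths strictly reduces the resistance below that of the single direct edge); (b) $R$ is a metric on $V$, so the triangle inequality holds; and (c) Foster's identity $\sum_{e \in E} R_e = n-1$. These, combined with the definitions $p_u = 1 - \frac{1}{2}\sum_{w \sim u} R_{uw}$ and $\kappa_{uv} = 2(p_u + p_v)/R_{uv}$, will drive every part of the argument.

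I would first record the pointwise bound $\sum_{w \sim u} R_{uw} \le d_u$ from (a), which yields $p_u \ge 1 - d_u/2$ with equality iff every edge incident to $u$ is a cut link, and analogously for $v$. For part (3), after multiplying $\kappa^{FR}_{uv}/R_{uv} \le \kappa_{uv}$ through by $R_{uv} > 0$ and substituting the standard Forman formula $\kappa^{FR}_{uv} = 4 - d_u - d_v$ for simple unweighted graphs, the inequality reduces to $4 - d_u - d_v \le 2(p_u + p_v)$, which is exactly the pointwise bound; the equality condition is inherited from (a). For the upper bound in part (1), $\kappa_{uv} \le 2/R_{uv}$ is equivalent to $p_u + p_v \le 1$, i.e., $\sum_{w \sim u} R_{uw} + \sum_{w' \sim v} R_{vw'} \ge 2$; I would split off the $2R_{uv}$ contributed by the edge $(u,v)$ to both sums and bound the remaining resistance sums from below using cycle-space / Foster-type counting on the local neighborhood. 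For the lower bound $\kappa_{uv} \ge 4 - d_u - d_v$, which is a delicate refinement that does not follow from $R_{uw} \le 1$ alone when $d_u + d_v > 4$, I would rewrite the target as $4(1 - R_{uv}) + \sum_{w \sim u}(R_{uv} - R_{uw}) + \sum_{w' \sim v}(R_{uv} - R_{vw'}) \ge 0$ and close it by combining the triangle inequality $R_{uw} \le R_{uv} + R_{vw}$ with a random-spanning-tree identity relating $\sum_{w \sim u} R_{vw}$ back to $R_{uv}$ and the local topology; equality forces every incident edge of $u$ and $v$ to be a cut link.

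For part (2), I would use the coupling definition $\kappa_{uv}^{OR} = 1 - W_1(\mu_u, \mu_v)/d(u,v)$ for lazy uniform measures on neighborhoods, and exhibit an explicit transport plan whose cost is at most $(1 - \kappa_{uv})\,d(u,v)$, thereby forcing $\kappa_{uv}^{OR} \ge \kappa_{uv}$. The plan routes mass from each $w \sim u$ to a matching neighbor of $v$ along a resistance-shortest path, with the expected displacement precisely accounting for $(p_u + p_v)/R_{uv}$. When $(u,v)$ is a cut link, every transport path is forced through the single bridge and the inequality tightens to equality.

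The main obstacle will be the coupling construction in part (2), and, to a lesser extent, closing the refined lower bound in part (1) for dense local neighborhoods. Both require a clean bridge between graph-metric quantities (degrees, adjacency, hop distances) and the spectral-metric quantity $R$; I expect the random-spanning-tree representation of effective resistance to be the cleanest tool, since it simultaneously supplies the pathwise coupling underlying $W_1$ and the incident-edge resistance identities needed to absorb the negative tails of the triangle-inequality estimates.
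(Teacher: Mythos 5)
First, a point of reference: the paper does not prove this statement at all --- it is imported verbatim by citation from \citet{Devrient2022}, and Appendix~A only uses it to interpret Figure~\ref{fig:curvature}. So there is no in-paper argument to compare yours against; what you have written is an attempt to re-derive the source result. Within that attempt, the solid parts are: your reduction of part (3) to the pointwise bound $4-d_u-d_v\le 2(p_u+p_v)$, which does follow from $R_{uw}\le 1$ for every edge (with the equality condition inherited from the cut-link characterization of $R_{uw}=1$); your algebraic rewriting of the lower bound in part (1) as $4(1-R_{uv})+\sum_{w\sim u}(R_{uv}-R_{uw})+\sum_{w'\sim v}(R_{uv}-R_{vw'})\ge 0$, which is correct; and your instinct that the uniform-spanning-tree representation $R_e=\Pr[e\in T]$ is the right tool --- indeed the cleanest route to the upper bound in (1) is $\sum_{w\sim u}R_{uw}=\mathbb{E}[\deg_T(u)]\ge 1$, hence $p_u\le 1/2$ and $\kappa_{uv}\le 2/R_{uv}$. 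You should state that lemma explicitly rather than gesture at ``cycle-space / Foster-type counting on the local neighborhood,'' which as written is not an argument. (You also silently correct the paper's typo $p_u=1-\frac12\sum_{u\sim w}R_{uv}$ to $R_{uw}$; that is right.)

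The genuine gaps are in the two places you yourself flag as delicate, and in both cases the plan does not close. For the lower bound of (1) when $d_u+d_v>4$: after your rewriting, the terms $R_{uv}-R_{uw}$ can be negative, and the triangle inequality only gives $R_{uv}-R_{uw}\ge -R_{vw}$, so you are left needing a lower bound on $4(1-R_{uv})-\sum_{w\sim u}R_{vw}-\sum_{w'\sim v}R_{uw'}$ that you never supply; the ``random-spanning-tree identity relating $\sum_{w\sim u}R_{vw}$ back to $R_{uv}$ and the local topology'' is named but not stated, and it is precisely the missing content. For part (2), nothing is proved: the transport plan is asserted to exist with cost ``precisely accounting for $(p_u+p_v)/R_{uv}$,'' but no matching between the neighborhoods of $u$ and $v$ is constructed, no bound on its $W_1$ cost is derived, and the claim is sensitive to the idleness convention in $\mu_u$ (for a bridge in $K_2$ the non-lazy convention gives $\kappa^{OR}_{uv}=0$ while $\kappa_{uv}=2$, so the inequality as stated can fail under the wrong normalization --- you must pin down which Ollivier--Ricci curvature is meant before the coupling argument can even be set up). As it stands, parts (1)-lower and (2) are statements of intent, not proofs; to complete them you would essentially have to reproduce the arguments of \citet{Devrient2022}, which the present paper deliberately does not do.
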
 

The new definition of curvature given in ~\cite{topping2022understanding} is related to the resistance distance and thus it is learnable with the proposed framework (\ctlayer). Actually, the Balanced-Forman curvature (Definition~1 in~\cite{topping2022understanding}) relies on the uniformative approximation of the resistance distance. 

Figure~\ref{fig:curvature} illustrates the relationship between effective resistances / commute times and curvature on an exemplary graph from the \textsc{Collab} dataset. 

\begin{figure}[t]
\captionsetup{singlelinecheck=off}
\begin{center}
	\includegraphics[width=1.0\linewidth]{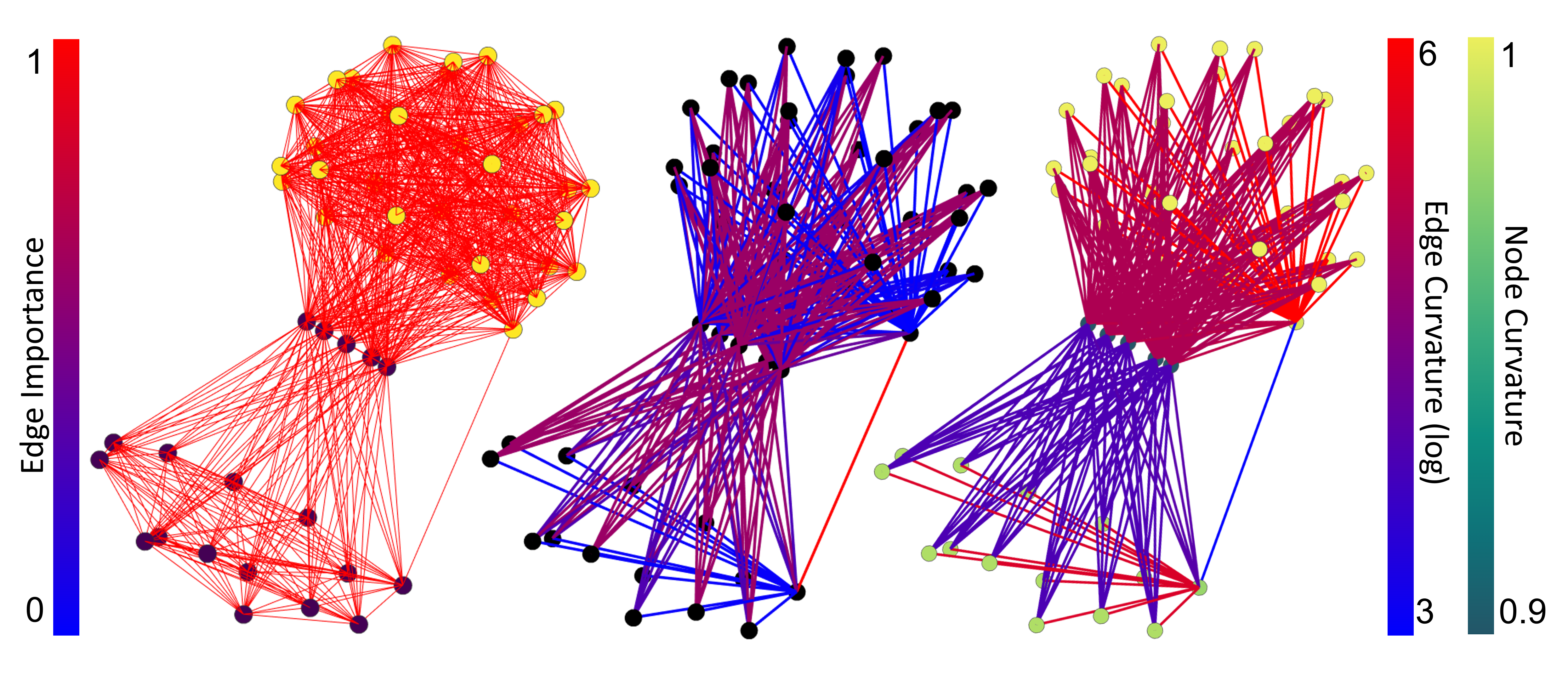}
	\caption{\centering Left: Original graph with nodes colored as Louvain communities. Middle: $\mathbf{T}^{CT}$ learnt by \ctlayer with edges colors as node importance [0,1]. Right: Node and edge curvature: $\mathbf{T}^{CT}$ using $p_u\defeq 1 - \frac{1}{2}\sum_{u\sim w}\mathbf{T}^{CT}_{uv}$ and  $\kappa_{uv}\defeq 2(p_u + p_v)/\mathbf{T}^{CT}_{uv}$} with edge an node curvatures as color. Graph from Reddit-B dataset.
\label{fig:curvature}
\end{center}
\end{figure}

As seen in the Figure, effective resistances prioritize the edges connecting outer nodes with hubs or central nodes, while the intra-community connections are de-prioritized. This observation is consistent with the aforementioned theoretical explanations about preserving the bottleneck while breaking the intra-cluster structure. In addition, we also observe that the original edges between hubs have been deleted o have been extremely down-weighted. 

Regarding curvature, hubs or central nodes have the lowest node curvature (this curvature increases with the number of nodes in a cluster/community). Edge curvatures, which rely on node curvatures, depend on the long-term neighborhoods of the connecting nodes. In general, edge curvatures can be seen as a smoothed version --since they integrate node curvatures-- of the inverse of the resistance distances.

We observe that edges linking nodes of a given community with hubs tend to have similar edge-curvature values. However, edges linking nodes of  different communities with hubs have different edge curvatures (Figure~\ref{fig:curvature}-right). This is due to the different number of nodes belonging to each community, and to their different average degree inside their respective communities (property 1 of Theorem~\ref{curv}).

Finally, note that the range of edge curvatures is larger than that of resistance distances. The sparsifier transforms a uniform distribution of the edge weights into a less entropic one: in the example of Figure~\ref{fig:curvature} we observe a power-law distribution of edge resistances. As a result, $\kappa_{uv}\defeq 2(p_u + p_v)/\mathbf{T}^{CT}_{uv}$ becomes very large on average (edges with infinite curvature are not shown in the plot) and a $\log$ scale is needed to appreciate the differences between edge resistances and edge curvatures.




\subsection{Appendix B: \gaplayer}
\label{subsec:App.gap}
\subsubsection{Spectral Gradients} 
The proposed \gaplayer relies on gradients wrt the Laplacian eigenvalues, and particularly the spectral gap ($\lambda_2$ for $\mathbf{L}$ and $\lambda'_2$ wrt ${\cal L}$). Although the \gaplayer inductively rewires the adjacency matrix $\mathbf{A}$ so that $\lambda_2$ is minimized, the gradients derived in this section may also be applied for gap maximization.

Note that while our cost function $L_{Fiedler} = \|\tilde{\mathbf{A}}-\mathbf{A}\|_F + \alpha(\lambda^{\ast}_2)^2$, with $\lambda^{\ast}_2\in\{\lambda_2,\lambda'_2\}$,  relies on an eigenvalue, we \emph{do not compute it explicitly}, as its computation has a complexity of $O(n^3)$ and would need to be computed in every learning iteration. Instead, we learn an approximation of $\lambda_2$'s eigenvector $\mathbf{f}_2$ and use its Dirchlet energy ${\cal E}(\mathbf{f}_2)$ to approximate the eigenvalue. In addition, since $\mathbf{g}_2=\mathbf{D}^{1/2}\mathbf{f}_2$, we first approximate $\mathbf{g}_2$ and then approximate $\lambda'_2$ from ${\cal E}(\mathbf{g}_2)$.

\paragraph{Gradients of the Ratio-cut Approximation.} Let $\mathbf{A}$ be the adjacency matrix of $G=(V,E)$; and $\tilde{\mathbf{A}}$, a matrix similar to the original adjacency but with minimal $\lambda_2$. Then, the gradient of $\lambda_2$ wrt each component of $\tilde{\mathbf{A}}$ is given by
\begin{equation}\label{derL2}
\nabla_{\tilde{\mathbf{A}}}\lambda_2 \defeq Tr\left[\left(\nabla_{\tilde{\mathbf{L}}}\lambda_2\right)^T\cdot\nabla_{\tilde{\mathbf{A}}}\tilde{\mathbf{L}}\right] = \textrm{diag}(\mathbf{f}_2\mathbf{f}_2^T)\mathbf{1}\mathbf{1}^T -  \mathbf{f}_2\mathbf{f}_2^T\;,
\end{equation}
where $\mathbf{1}$ is the vector of $n$ ones; and $[\nabla_{\tilde{\mathbf{A}}}\lambda_2]_{ij}$ is the gradient of $\lambda_2$ wrt  $\tilde{\mathbf{A}}_{uv}$. The above formula is an instance of the network derivative mining mining approach~\cite{kang2019derivative}. In this framework, $\lambda_2$ is seen as a function of $\tilde{\mathbf{A}}$ and $\nabla_{\tilde{\mathbf{A}}}\lambda_2$, the gradient of $\lambda_2$ wrt $\tilde{\mathbf{A}}$,  comes from the chain rule of the matrix derivative $Tr\left[\left(\nabla_{\tilde{\mathbf{L}}}\lambda_2\right)^T\cdot\nabla_{\tilde{\mathbf{A}}}\tilde{\mathbf{L}}\right]$. More precisely, 
\begin{equation}
\nabla_{\tilde{\mathbf{L}}}\lambda_2 \defeq \frac{\partial \lambda_2}{\partial \tilde{\mathbf{L}}} = \mathbf{f}_2\mathbf{f}_2^T\;,
\end{equation}
is a matrix relying on an outer product (correlation). In the proposed \gaplayer, since $\mathbf{f}_2$ is approximated by: 
\begin{eqnarray}\label{Fiedlerapp2}
  \mathbf{f}_2(u) =  \left\{\begin{array}{cl}
       +1/\sqrt{n}  & \text{if}\;\; u\;\; \text{belongs to the first cluster}\\
       -1/\sqrt{n}  & \text{if}\;\; u\;\; \text{belongs to the second cluster}\\
    \end{array}\;,\right.
\end{eqnarray}
i.e. we discard the $O\left(\frac{\log n}{n}\right)$ from Eq.~\ref{Fiedlerapp2} (the non-liniarities conjectured in~\cite{hoang2021revisiting}) in order to simplify the analysis. After reordering the entries of $\mathbf{f}_2$ for the sake of clarity, $\mathbf{f}_2\mathbf{f}_2^T$ is the following block matrix: 
\begin{equation}
  \mathbf{f}_2\mathbf{f}_2^T = \left[\begin{array}{ c | c }
    1/n & -1/n \\
    \hline
    -1/n & 1/n
  \end{array}\right]\;\text{whose diagonal matrix is}\; \textrm{diag}(\mathbf{f}_2\mathbf{f}_2^T) = 
    \left[\begin{array}{ c | c }
    1/n & 0 \\
    \hline
    0 & 1/n
  \end{array}\right]
\end{equation}
Then, we have
\begin{equation}
\nabla_{\tilde{\mathbf{A}}}\lambda_2 = \left[\begin{array}{ c | c }
    1/n & 1/n \\
    \hline
    1/n & 1/n
  \end{array}\right] - 
  \left[\begin{array}{ c | c }
    1/n & -1/n \\
    \hline
    -1/n & 1/n
  \end{array}\right] = 
  \left[\begin{array}{ c | c }
    0 & 2/n \\
    \hline
    2/n & 0
  \end{array}\right]
\end{equation}

which explains the results in Figure~\ref{fig:rewiring}-left: edges linking nodes belonging to the same cluster remain unchanged whereas inter-cluster edges have a gradient of $2/n$. This provides a simple explanation for  $\mathbf{T}^{GAP} = \tilde{\mathbf{A}}(\mathbf{S}) \odot \mathbf{A}$. The additional masking added by the adjacency matrix ensures that we do not create new links. 

\paragraph{Gradients Normalized-cut Approximation.} Similarly, using $\lambda'_2$ for graph rewiring leads to the following complex expression:  
\begin{eqnarray}\label{derNLb}
    \nabla_{\tilde{\mathbf{A}}}\lambda'_2 \defeq Tr\left[\left(\nabla_{\tilde{\cal L}}\lambda_2\right)^T\cdot\nabla_{\tilde{\mathbf{A}}}\tilde{\cal L}\right] &=& \nonumber\\
    \mathbf{d}'\left\{\mathbf{g}_2^T\tilde{\mathbf{A}}^T\tilde{\mathbf{D}}^{-1/2}\mathbf{g}_2\right\}\mathbf{1}^T + \mathbf{d}'\left\{\mathbf{g}_2^T\tilde{\mathbf{A}}\tilde{\mathbf{D}}^{-1/2}\mathbf{g}_2\right\}\mathbf{1}^T &+& \tilde{\mathbf{D}}^{-1/2}\mathbf{g}_2\mathbf{g}_2^T\tilde{\mathbf{D}}^{-1/2}\;.
\end{eqnarray}
However, since $\mathbf{g}_2=\mathbf{D}^{1/2}\mathbf{f}_2$ and $\mathbf{f}_2=\mathbf{D}^{-1/2}\mathbf{g}_2$, the gradient may be simplified as follows:
\begin{eqnarray}\label{derNLb2}
    \nabla_{\tilde{\mathbf{A}}}\lambda'_2 \defeq Tr\left[\left(\nabla_{\tilde{\cal L}}\lambda_2\right)^T\cdot\nabla_{\tilde{\mathbf{A}}}\tilde{\cal L}\right] &=& \nonumber \\
    \mathbf{d}'\left\{\mathbf{f}_2^T\tilde{\mathbf{D}}^{1/2}\tilde{\mathbf{A}}^T\mathbf{f}_2\right\}\mathbf{1}^T + \mathbf{d}'\left\{\mathbf{f}_2^T\tilde{\mathbf{D}}^{1/2}\tilde{\mathbf{A}}\mathbf{f}_2\right\}\mathbf{1}^T &+& \tilde{\mathbf{D}}^{-1/2}\mathbf{f}_2\mathbf{f}_2^T\tilde{\mathbf{D}}^{-1/2}\;.
\end{eqnarray}
In addition, considering symmetry for the undirected graph case, we obtain: 
\begin{eqnarray}\label{derNLd}
    &\nabla_{\tilde{\mathbf{A}}}\lambda'_2 \defeq Tr\left[\left(\nabla_{\tilde{\cal L}}\lambda_2\right)^T\cdot\nabla_{\tilde{\mathbf{A}}}\tilde{\cal L}\right] = \nonumber\\
    &2\mathbf{d}'\left\{\mathbf{f}_2^T\tilde{\mathbf{D}}^{1/2}\tilde{\mathbf{A}}\mathbf{f}_2\right\}\mathbf{1}^T + \tilde{\mathbf{D}}^{-1/2}\mathbf{f}_2\mathbf{f}_2^T\tilde{\mathbf{D}}^{-1/2}\;.
\end{eqnarray}

where $\mathbf{d}'$ is a $n\times 1$ negative vector including derivatives of degree wrt adjacency and related terms. The obtained gradient is composed of two terms. 

The first term contains the matrix $\tilde{\mathbf{D}}^{1/2}\tilde{\mathbf{A}}$ which is the adjacency matrix weighted by the square root of the degree; $\mathbf{f}_2^T\tilde{\mathbf{D}}^{1/2}\tilde{\mathbf{A}}\mathbf{f}_2$ is a quadratic form (similar to a Dirichlet energy for the Laplacian) which approximates an eigenvalue of $\tilde{\mathbf{D}}^{1/2}\tilde{\mathbf{A}}$. We plan to further analyze the properties of this term in future work. 

The second term, $\tilde{\mathbf{D}}^{-1/2}\mathbf{f}_2\mathbf{f}_2^T\tilde{\mathbf{D}}^{-1/2}$, downweights the correlation term for the Ratio-cut case $\mathbf{f}_2\mathbf{f}_2^T$ by the degrees as in the normalized Laplacian. This results in a normalization of the Fiedler vector: $-1/n$ becomes $-\sqrt{d_ud_v}/n$ at the $uv$ entry and similarly for $1/n$, i.e. each entry contains the average degree assortativity. 

\subsubsection{Beyond the Lov\'asz Bound: the von Luxburg et al. bound}
\label{subsubsec:vonLuxburg}

The Lov\'{a}sz bound was later refined by \citet{vonluxburg14a} via a new, tighter bound which replaces $d_{min}$ by $d_{min}^2$ in Eq.~\ref{eqn:lovasz}. Given that $\lambda'_2\in (0,2]$, as the number of nodes in the graph ($n=|V|$) and the average degree increase, then $R_{uv}\approx 1/d_u + 1/d_v$. This is likely to happen in certain types of graphs, such as Gaussian similarity-graphs --graphs where two nodes are linked if the neg-exponential of the distances between the respective features of the nodes is large enough; 
$\epsilon$-graphs --graphs where the Euclidean distances between the features in the nodes are $\leq\epsilon$; and $k-$NN graphs with large $k$ wrt $n$. 
The authors report a linear collapse of $R_{uv}$ with the density of the graph in scale-free networks, such as social network graphs, whereas a faster collapse of $R_{uv}$ has been reported in community graphs --congruent graphs with Stochastic Block Models (SBMs)~\cite{SBM17}. 

Given the importance of the effective resistance, $R_{uv}$, as a \emph{global} measure of node similarity, the von Luxburg et al.'s refinement motivated the development of \emph{robust effective resistances}, mostly in the form of $p-$resistances given by $R_{uv}^p = \arg\min_{\mathbf{f}}\{\sum_{e\in E} r_e |f_e|^p\}$, where $\mathbf{f}$ is a unit-flow injected in $u$ and recovered in $v$; and $r_e = 1/w_e$ with $w_e$ being the edge's weight~\cite{Alamgir11}. For $p=1$, $R_{uv}^p$ corresponds to the shortest path; $p=2$ results in the effective resistance; and $p\rightarrow \infty$ leads to the inverse of the unweighted $u$-$v$-mincut\footnote{The link between CTs and mincuts is leveraged in the paper as an essential element of our approach.}. Note that the optimal $p$ value depends on the type of graph~\cite{Alamgir11} and $p-$resistances may be studied from the perspective of $p-$Laplacians~\cite{hein09, plaplacians18e}. 

While $R_{uv}$ could be unbounded by minimizing the spectral gap $\lambda'_2$, this approach has received little attention in the literature of mathematical characterization of graphs with small spectral gaps~\cite{Berkolaiko2017}\cite{Stani2013GraphsWS}, i.e., instead of tackling the daunting problem of explicitly minimizing the gap, researchers in this field have preferred to find graphs with small spectral gaps.

\subsection{Appendix C: Experiments}
In this section, we provide details about the graphs contained in each of the datasets used in our experiments, a detailed clarification about architectures and experiments, and, finally, report additional experimental results.

\subsubsection{Datasets Statistics}
\label{subsubsec:datasetstatistics}

Table~\ref{tab:dataset statistics} depicts the number of nodes, edges, average degree, assortativity, number of triangles, transitivity and clustering coefficients (mean and standard deviation) of all the graphs contained in each of the benchmark datasets used in our experiments. As seen in the Table, the datasets are very diverse in their characteristics. In addition, we use two synthetic datasets with 2 classes: Erd\"os-R\'enyi with $p_1 \in [0.3, 0.5]$ and $p_2\in[0.4, 0.8]$ and Stochastic block model (SBM) with parameters $p_1=0.8$, $p_2=0.5$, $q_1 \in [0.1, 0.15]$ and $q_2 \in [0.01, 0.1]$.

\begin{table}[ht]
\caption{Dataset statistics. Parenthesis in \emph{Assortativity} column denotes number of complete graphs (assortativity is undefined).}
\label{tab:dataset statistics}
\begin{adjustwidth}{-3in}{-3in}
\centering
{\small
\begin{tabular}{lrrrrrrr} 
\toprule
                & Nodes                 & Egdes                 & AVG Degree            & Triangles             & Transitivity      & Clustering       & Assortativity  \\
\midrule
REDDIT-B   & 429.6 \stdt{554}      & 497.7 \stdt{622}     & 2.33 \stdt{0.3}        & 24  \stdt{41}          & 0.01 \stdt{0.02}   & 0.04 \stdt{0.06}  & -0.364 \stdt{0.17}   (0)\\
IMDB-B     & 19.7 \stdt{10}        & 96.5 \stdt{105}      & 8.88 \stdt{5.0}       & 391  \stdt{868}         & 0.77 \stdt{0.15}   & 0.94 \stdt{0.03}  & -0.135 \stdt{0.16} (139) \\
COLLAB          & 74.5 \stdt{62}        & 2457 \stdt{6438}     & 37.36 \stdt{44}        & 12{\tiny$\times10^{4}$} \stdt{48$\times10^{4}$} & 0.76 \stdt{0.21}     & 0.89 \stdt{0.08}  & -0.033 \stdt{0.24}  (680) \\
MUTAG           & 2.2 \stdt{0.1}        & 19.8 \stdt{5.6}      & 2.18 \stdt{0.1}        & 0.00 \stdt{0.0}       & 0.00 \stdt{0.00}      & 0.00 \stdt{0.00}   & -0.279 \stdt{0.17} (0) \\
PROTEINS        & 39.1 \stdt{45.8}      & 72.8 \stdt{84.6}     & 3.73 \stdt{0.4}        & 27.4 \stdt{30}        & 0.48 \stdt{0.20}      & 0.51 \stdt{0.23}   & -0.065 \stdt{0.2} (13)   \\
 
\bottomrule
\end{tabular}}
\end{adjustwidth}
\end{table}

In addition, Figure \ref{fig:datasetassortativity} depicts the histograms of the assortativity for all the graphs in each of the eight datasets used in our experiments. As shown in Table~\ref{tab:dataset statistics} assortativity is undefined in complete graphs (constant degree, all degrees are the same). Assortativity is defined as the normalized degree correlation. If the graph is complete, then both correlation and its variance is 0, so assortativity will be 0/0.

\begin{figure}[ht]
\captionsetup{singlelinecheck=off}
\begin{center}
	\begin{subfigure}[t]{0.31\textwidth}
	\includegraphics[width=\linewidth]{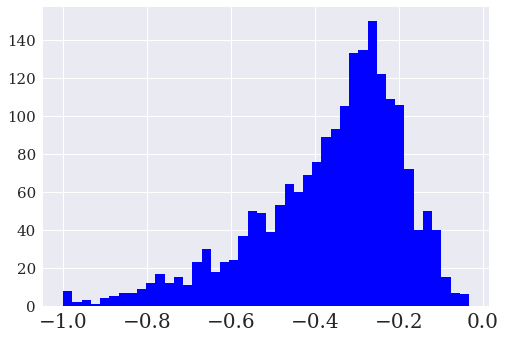}
	\caption{\centering \textsc{Reddit}}
     \end{subfigure}
	\begin{subfigure}[t]{0.31\textwidth}
	\includegraphics[width=\linewidth]{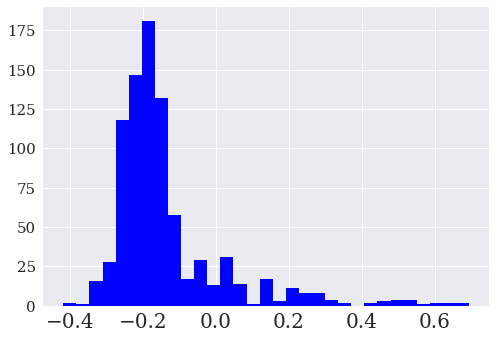}
	\caption{\centering \textsc{Imdb-Binary}}
     \end{subfigure}
     \begin{subfigure}[t]{0.31\textwidth}
	\includegraphics[width=\linewidth]{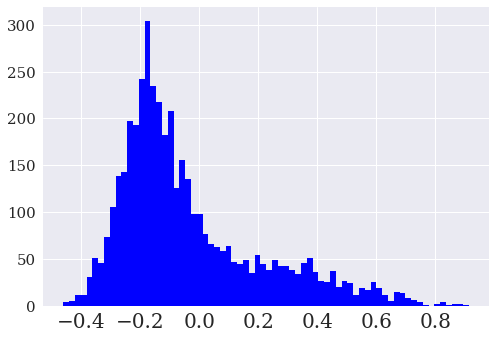}
	\caption{\centering \textsc{Collab}}
     \end{subfigure}
     \begin{subfigure}[t]{0.31\textwidth}
	\includegraphics[width=\linewidth]{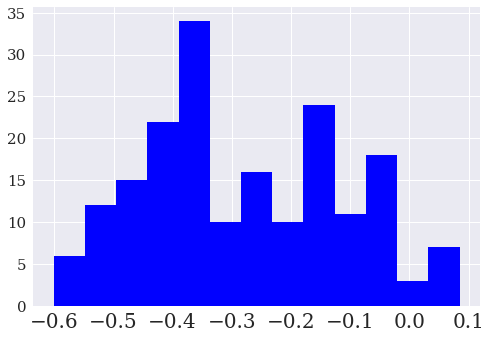} %
	\caption{\centering \textsc{Mutag}}
     \end{subfigure}
     \begin{subfigure}[t]{0.31\textwidth}
	\includegraphics[width=\linewidth]{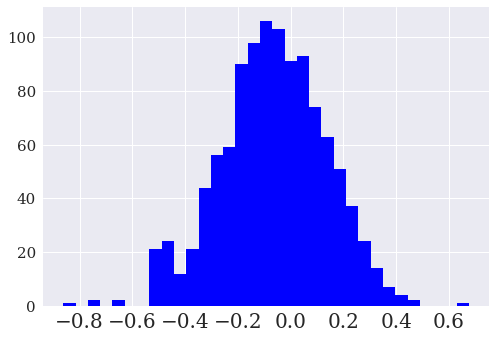}
	\caption{\centering \textsc{Proteins}}
     \end{subfigure}
\caption{\centering Histogram of the Assortativity of all the graphs in each of the datasets.}
\label{fig:datasetassortativity}
\end{center}
\end{figure}

In addition, Figure \ref{fig:datasetDegree} depicts the histograms of the average node degrees for all the graphs in each of the eight datasets used in our experiments. The datasets are also very diverse in terms of topology, corresponding to social networks, biochemical networks and meshes.

\begin{figure}[ht]
\captionsetup{singlelinecheck=off}
\begin{center}
	\begin{subfigure}[t]{0.31\textwidth}
	\includegraphics[width=\linewidth]{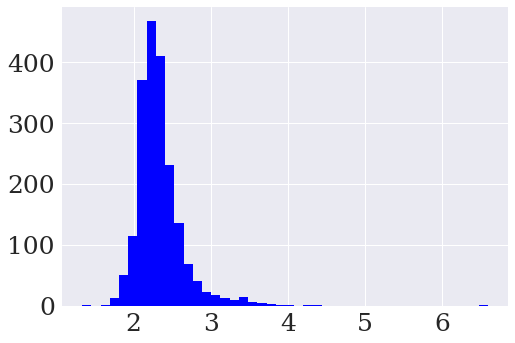}
	\caption{\centering \textsc{Reddit}}
     \end{subfigure}
	\begin{subfigure}[t]{0.31\textwidth}
	\includegraphics[width=\linewidth]{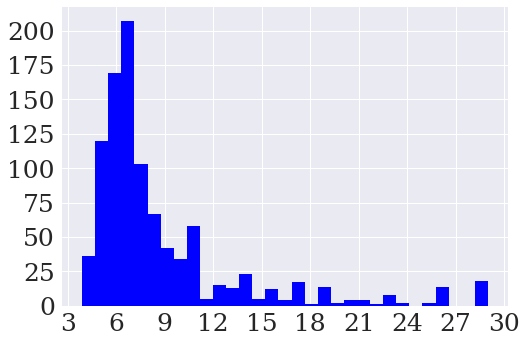}
	\caption{\centering \textsc{Imdb-Binary}}
     \end{subfigure}
     \begin{subfigure}[t]{0.31\textwidth}
	\includegraphics[width=\linewidth]{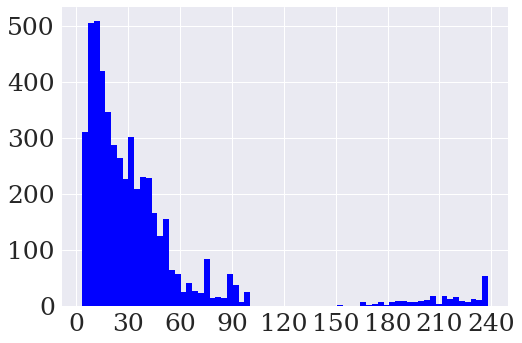}
	\caption{\centering \textsc{Collab}}
     \end{subfigure}
     \begin{subfigure}[t]{0.31\textwidth}
	\includegraphics[width=\linewidth]{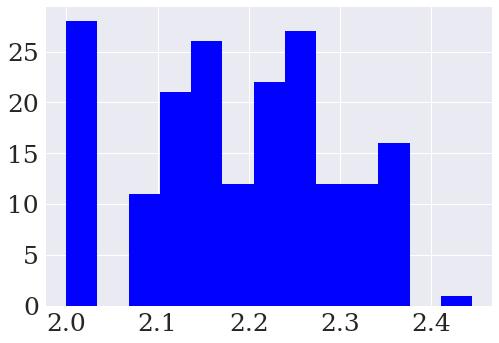} %
	\caption{\centering \textsc{Mutag}}
     \end{subfigure}
     \begin{subfigure}[t]{0.31\textwidth}
	\includegraphics[width=\linewidth]{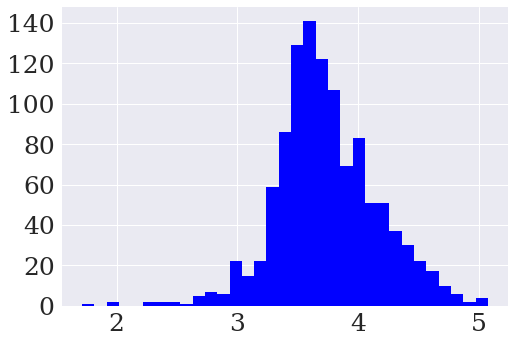}
	\caption{\centering \textsc{Proteins}}
     \end{subfigure}
\caption{\centering Degree histogram of the average degree of all the graphs in each of the datasets.}
\label{fig:datasetDegree}
\end{center}
\end{figure}

\subsubsection{Graph Classification GNN Architectures}
\label{subsubsec:GNN}

Figure \ref{fig:specificGNN} shows the specific GNN architectures used in the experiments explained in section \ref{sec:experiments} in the manuscript. Although the specific calculation of $\mathbf{T}^{GAP}$ and $\mathbf{T}^{CT}$ are given in Theorems \ref{GAP-Layer} and \ref{CT-Layer}, we also provide a couple of pictures for a better intuition.

\begin{figure}[ht]
\captionsetup{singlelinecheck=off}
\begin{center}
	\begin{subfigure}[t]{0.6\textwidth}
	\includegraphics[width=\linewidth]{figs/networks/mincut.pdf}
	\caption{\centering \textsc{MinCut} baseline}
    \label{fig:specificGNN-mincut}
    \end{subfigure}
	\begin{subfigure}[t]{0.6\textwidth}
	\includegraphics[width=\linewidth]{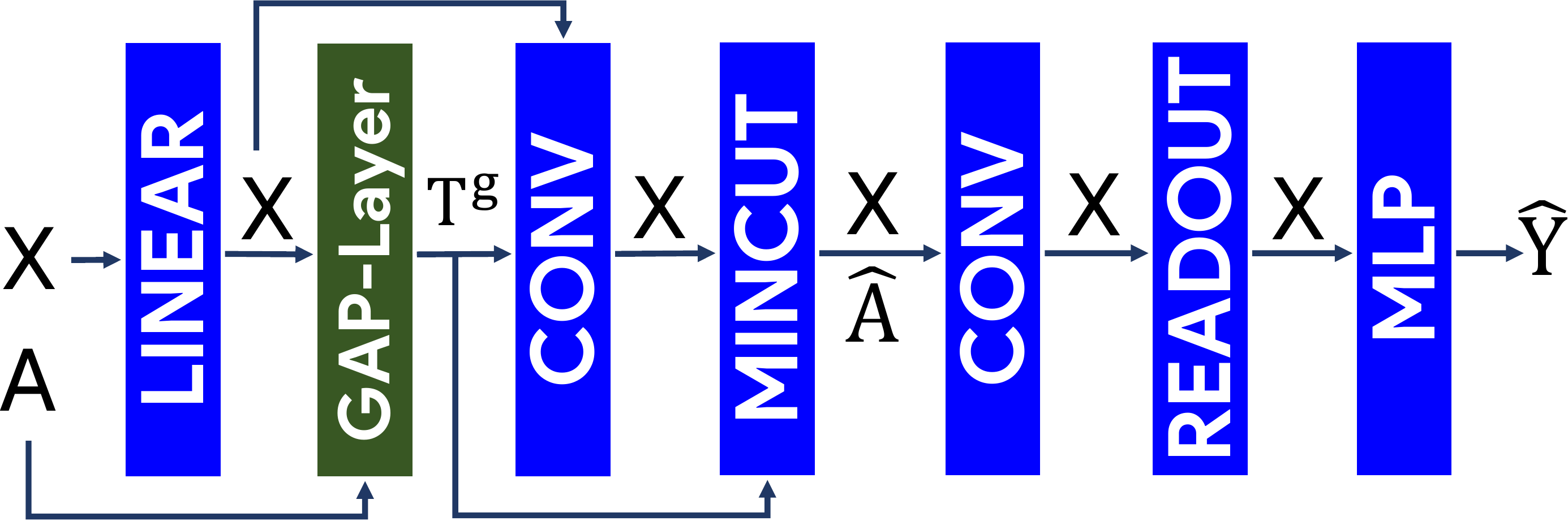}
	\caption{\centering \gaplayer}
    \label{fig:specificGNN-gap}
    \end{subfigure}
    \begin{subfigure}[t]{0.6\textwidth}
	\includegraphics[width=\linewidth]{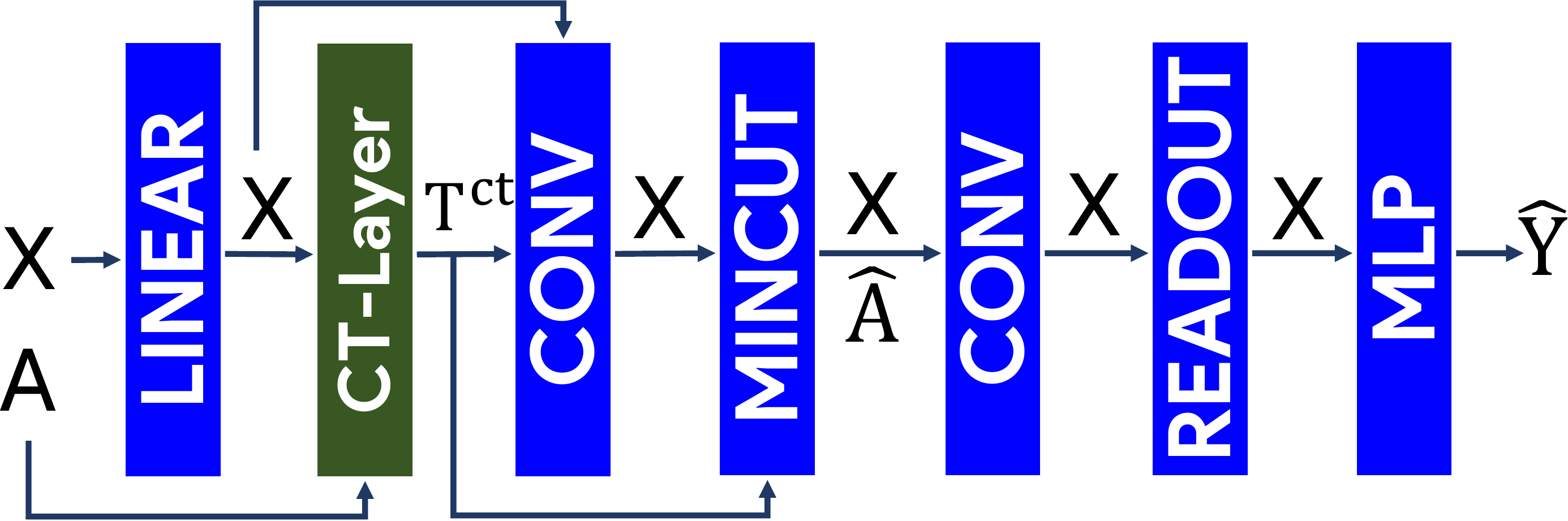}
	\caption{\centering \ctlayer}
    \label{fig:specificGNN-ct}
    \end{subfigure}
\caption{\centering Diagrams of the GNNs used in the experiments.}
\label{fig:specificGNN}
\end{center}
\end{figure}

\subsubsection{Training Parameters}
\label{subsubsec:parameters}

The value of the hyperparameters used in the experiments are the ones by default in the code repository~\footnote{\url{https://github.com/AdrianArnaiz/DiffWire}}. We report average accuracies and standard deviation on 10 random iterations, using different 85/15 train-test stratified split (we do not perform hyperparameter search), training during 60 epochs and reporting the results of the last epoch for each random run. We have used an Adam optimizer, with a learning rate of $5e-4$ and weight decay of $1e-4$. In addition, the batch size used for the experiments are shown in Table~\ref{tab:batchsize}. Regarding the synthetic datasets, the parameters are: Erd\"os-R\'enyi with $p_1 \in [0.3, 0.5]$ and $p_2\in[0.4, 0.8]$ and Stochastic block model (SBM) $p_1=0.8$, $p_2=0.5$, $q_1 \in [0.1, 0.15]$ and $q_2 \in [0.01, 0.1]$.

\begin{table}[ht]
\caption{Dataset Batch size}
\label{tab:batchsize}
\centering
{\small
\begin{tabular}{lrrrrrr} 
\toprule
                & Batch & Dataset size \\
\midrule
REDDIT-BINARY   & 64 & 1000\\
IMDB-BINARY     & 64 & 2000\\
COLLAB          & 64 & 5000\\
MUTAG           & 32 & 188\\
PROTEINS        & 64 & 1113\\
SBM             & 32 & 1000\\
Erd\"os-R\'enyi         & 32 & 1000\\
\bottomrule
\end{tabular}}
\end{table}

For the $k$-nn graph baseline, we choose $k$ such that the main degree of the original graph is maintained, i.e. $k$ equal to average degree. Our experiments also use 2 preprocessing methods DIGL and SDRF. Unlike our proposed methods, both SDRF \cite{topping2022understanding} and DIGL \cite{klicpera2019diffusion} use a set of hyperparamerters to optimize for each specific graph, because both are also not inductive. This approach could be manageable for the task of node classification, where you only have one graph. However, when it comes to graph classification, the number of graphs are huge (\ref{tab:batchsize}) and it is nor computationally feasible optimize parameters for each specific graph. For DIGL, we use a fixed $\alpha=0.001$ and $\epsilon$ based on keeping the same average degree for each graph, i.e., we use a different dynamically chosen $\epsilon$ for each graph in each dataset which maintain the same number of edges as the original graph. In the case of SDRF, the parameters define how stochastic the edge addition is ($\tau$), the graph edit distance upper bound (number of iterations) and optional Ricci upper-bound above which an edge will be removed each iteration ($C^+$). We set the parameters $\tau=20$ (the edge added is always near the edge of lower curvature), $C^+=0$ (to force one edge is removed every iteration), and number of iterations dynamic according to $0.7*|V|$. Thus, we maintain the same number of edges in the new graph ($\tau=20$ and $C^+=0$), i.e., same average degree, and we keep the graph distance to the original bounded by $0.7*|V|$.

\subsubsection{Latent Space Analysis}
\label{subsubsec:latentspace}

In this section, we analyze the two latent spaces produced by the models.
\begin{itemize}
    \item First, we compare the CT Embedding computed spectrally ($\mathbf{Z}$ in equation~\ref{CTE}) with the CT Embedding predicted by our \ctlayer ($\mathbf{Z}$ in definition~\ref{CT-Layer}) for a given graph, where each point is a node in the graph.
    \item Second, we compare the graph readout output for every model defined in the experiments (Figure~\ref{fig:GNN}) where each point is a graph in the dataset.
\end{itemize}

\paragraph{Spectral CT Embedding vs CT Embeddings Learned by \ctlayer} The well-known embeddings based on the Laplacian positional encodings (PE) are typically computed beforehand and appended to the input vector $\mathbf{X}$ as additional features~\cite{velingker2022affinity, dwivedi2021generalization}. This task requires an expensive computation $O(n^3)$ (see equation~\ref{CTE}). Conversely, we propose a GNN Layer that learns how to predict the CT embeddings (CTEs) for unseen graphs (definition~\ref{CT-Layer} and Figure~\ref{fig:ctlayer}) with a loss function that optimizes such CTEs. Note that we do not explicitly use the CTE features (PE) for the nodes, but we use the CTs as a new diffusion matrix for message passing (given by $\mathbf{T^{CT}}$ in Definition~\ref{CT-Layer}). Note that we could also use $\mathbf{Z}$ as positional encodings in the node features, such that \ctlayer may be seen as a novel approach to learn Positonal Encodings.

In this section, we perform a comparative analysis between the spectral commute times embeddings (spectral CTEs, $\mathbf{Z}$ in equation~\ref{CTE}) and the CTEs that are predicted by our \ctlayer ($\mathbf{Z}$ in definition~\ref{CT-Layer}). As seen in Figure~\ref{fig:cte-embeddings} (top), both embeddings respect the original topology of the graph, but they differ due to (1) orthogonality restrictions, and more interestingly to (2) the simplification of the original spectral loss function in \citet{Alev18}: the spectral CTEs minimize the trace of a quotient, which involves computing an inverse, whereas the CTEs learned in \ctlayer minimize the quotient of two traces which is computationally simpler (see $L_{CT}$ loss in Definition 1). Two important properties of the first term in Definition~\ref{CT-Layer} are: (1) the learned embedding $\mathbf{Z}$ has minimal Dirichlet energy (numerator) and (2) large degree nodes will be separated (denominator). Figure~\ref{fig:cte-embeddings}~(top) illustrates how the CTEs that are learned in \ctlayer are able to better preserve the original topology of the graph (note how the nodes are more compactly embedded when compared to the spectral CTEs).

Figure~\ref{fig:cte-embeddings} (bottom) depicts a histogram of the effective resistances or commute times (CTs) (see Section~\ref{subsec:CTLAYER} in the paper) of the edges according to \ctlayer or the spectral CTEs. The histogram is computed from the upper triangle of the $\mathbf{T^{CT}}$ matrix defined in Definition~\ref{CT-Layer}. Note that the larger the effective resistance of an edge, the more important that edge will be considered (and hence the lower the probability of being removed~\cite{Randic93}). We observe how in the histogram of CTEs that are learned in \ctlayer there is a ‘small club’ of edges with very large values and a large number of edges with low values yielding a power-law-like profile. However, the histogram of the effective resistances computed by the spectral CTEs exhibits a profile similar to a Gaussian distribution. From this result, we conclude that the use of $L_{CT}$ in the learning process of the \ctlayer shifts the distribution of the effective resistances of the edges towards an asymmetric distribution where few edges have very large weights and a majority of edges have low weights.

\begin{figure}[ht]
\begin{center}
\includegraphics[width=0.95\textwidth]{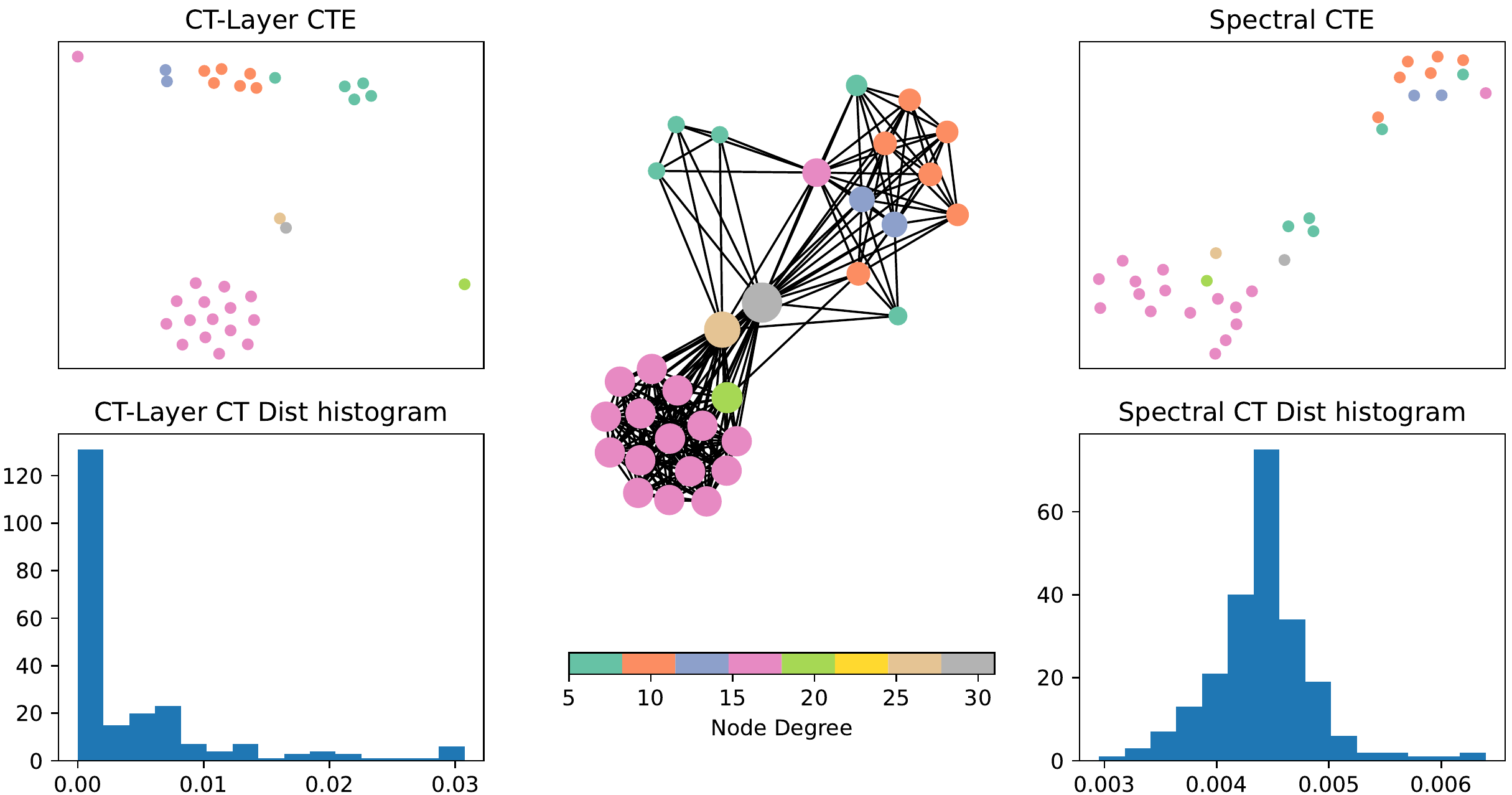}
\caption{Top: CT embeddings predicted by \ctlayer (left) and spectral CT embeddinggs (right). Bottom: Histogram of normalized effective resistances (i.e., CT distances or upper triangle in $\mathbf{T^{CT}}$) computed from the above CT embeddings. Middle: original graph from the COLLAB dataset. Colors correspond to node degree. \ctlayer CTEs reduced from 75 to 32 dimensions using Johnson-Lindenstrauss. Finally, both CTEs reduced from 32 to 2 dimensions using T-SNE.}
\label{fig:cte-embeddings}
\end{center}
\end{figure}

\paragraph{Graph Readout Latent Space Analysis} To delve into the analysis of the latent spaces produced by our layers and model, we also inspect the latent space produced by the models (Figure~\ref{fig:GNN}) that use \textsc{MinCutPool} (Figure~\ref{fig:specificGNN-mincut}), \gaplayer (Figure~\ref{fig:specificGNN-gap}) and \ctlayer (Figure~\ref{fig:specificGNN-ct}). Each point is a graph in the dataset, corresponding to the graph embedding of the readout layer. We plot the output of the readout layer for each model, and then perform dimensionality reduction with TSNE.

Observing the latent space of the \textsc{Reddit-Binary} dataset (Figure \ref{fig:embeddings}), \ctlayer creates a disperse yet structured latent space for the embeddings of the graphs. This topology in latent spaces show that this method is able to capture different topological details. The main reason is the expressiveness of the commute times as a distance metric when performing rewiring, which has been shown to be a optimal metric to measure node structural similarity. In addition, \gaplayer creates a latent space where, although the 2 classes are also separable, the embeddings are more compressed, due to a more aggressive --yet still informative-- change in topology. This change in topology is due to the change in bottleneck size that \gaplayer applies to the graph. Finally, \textsc{MinCut} creates a more squeezed and compressed embedding, where both classes lie in the same spaces and most of the graphs have collapsed representations, due to the limited expressiveness of this architecture.

\begin{figure}[ht]
\captionsetup{singlelinecheck=off}
\begin{center}
	\begin{subfigure}[t]{0.325\textwidth}
	\includegraphics[width=\linewidth]{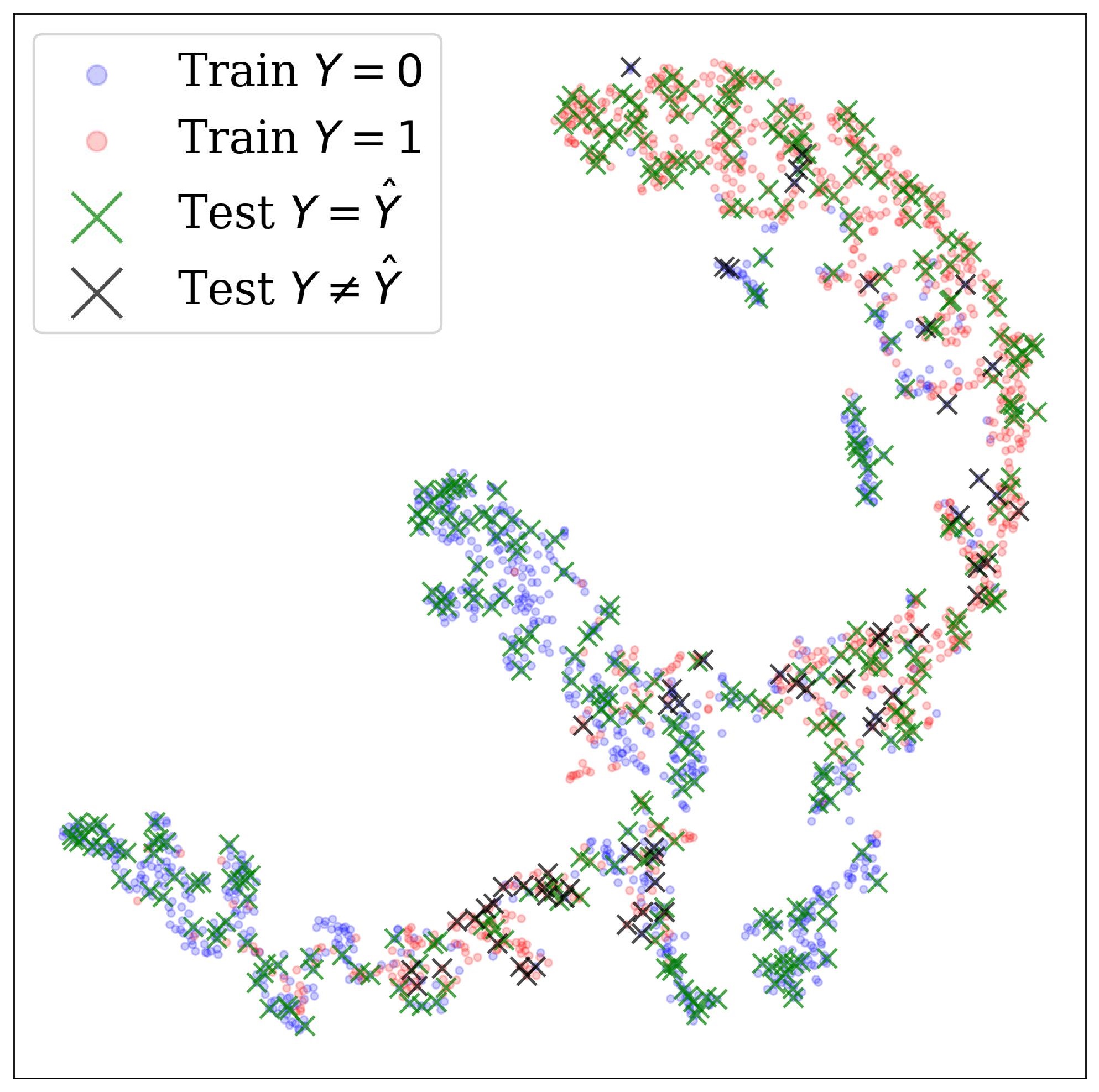}
	\caption{\centering \ctlayer}
     \end{subfigure}
	\begin{subfigure}[t]{0.325\textwidth}
	\includegraphics[width=\linewidth]{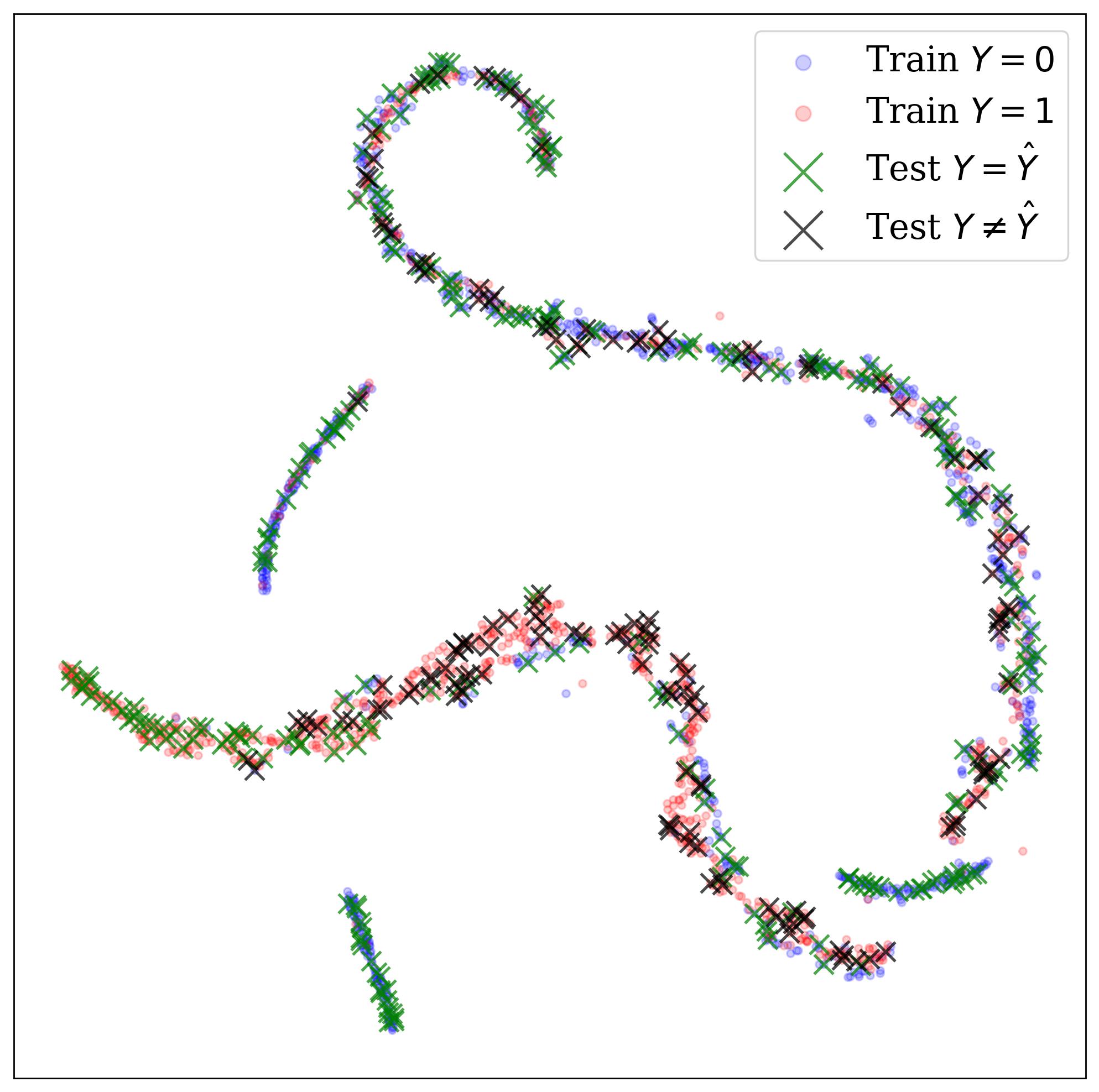}
	\caption{\centering MinCut}
     \end{subfigure}
     \begin{subfigure}[t]{0.325\textwidth}
	\includegraphics[width=\linewidth]{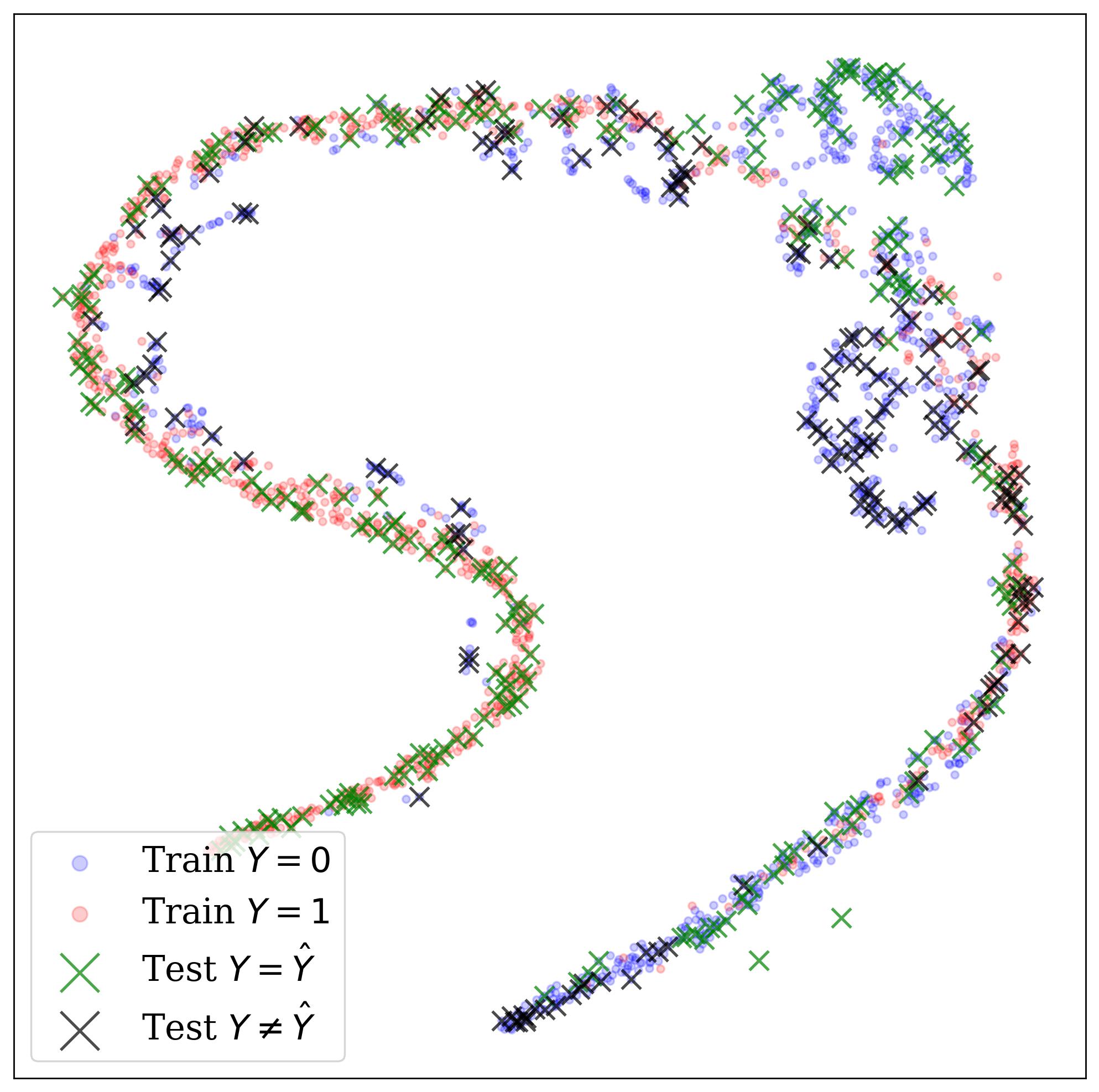}
	\caption{\centering \gaplayer}
     \end{subfigure}
\caption{\textsc{Reddit} embeddings produced by \gaplayer(Ncut)  \ctlayer and \textsc{MinCut}.}
\label{fig:embeddings}
\end{center}
\end{figure}

\subsubsection{Architectures and Details of Node Classification Experiments}
\label{subsubsec:nodeclf}

The application of our framework for a node classification task entails several considerations. First, this first implementation of our method works with dense $\mathbf{A}$ and $\mathbf{X}$ matrices, whereas node classification typically uses sparse representations of the edges. Thus, the implementation of our proposed layers is not straightforward for sparse graph representations. We are planning to work on the sparse version of this method in future work.

Note that we have chosen benchmark datasets that are manageable with our dense implementation. In addition, we have chosen a basic baseline with 1 GCN layer to show the ability of the approaches to avoid under-reaching, over-smoothing and over-squashing.

The baseline GCN is a 1-layer-GCN, and the 2 compared models are:
\begin{itemize}
    \item 1 \ctlayer for calculating $\mathbf{Z}$ followed by 1 GCN Layer using $\mathbf{A}$ for message passing and $\mathbf{X}\parallel\mathbf{Z}$ as features. This approach is a combination of~\citet{velingker2022affinity} and our method. See Figure~\ref{fig:nodeGNN-z}.
    \item 1 \ctlayer for calculating $\mathbf{T^{CT}}$ followed by 1 GCN Layer using that $\mathbf{T^{CT}}$ for message passing and $\mathbf{X}$ as features. See Figure~\ref{fig:nodeGNN-a}.
\end{itemize}

\begin{figure}[ht]
\captionsetup{singlelinecheck=off}
\begin{center}
	\begin{subfigure}[t]{0.18\textwidth}
	\includegraphics[width=\linewidth]{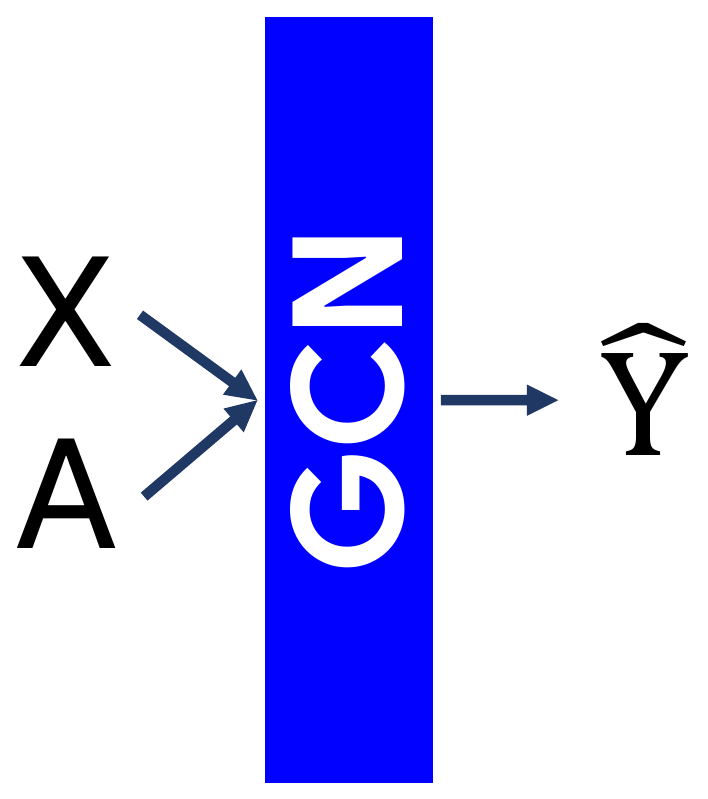}
	\caption{\centering GCN baseline}
    \label{fig:nodeGNN-gcn}
    \end{subfigure}\hspace{7mm}
    \begin{subfigure}[t]{0.26\textwidth}
	\includegraphics[width=\linewidth]{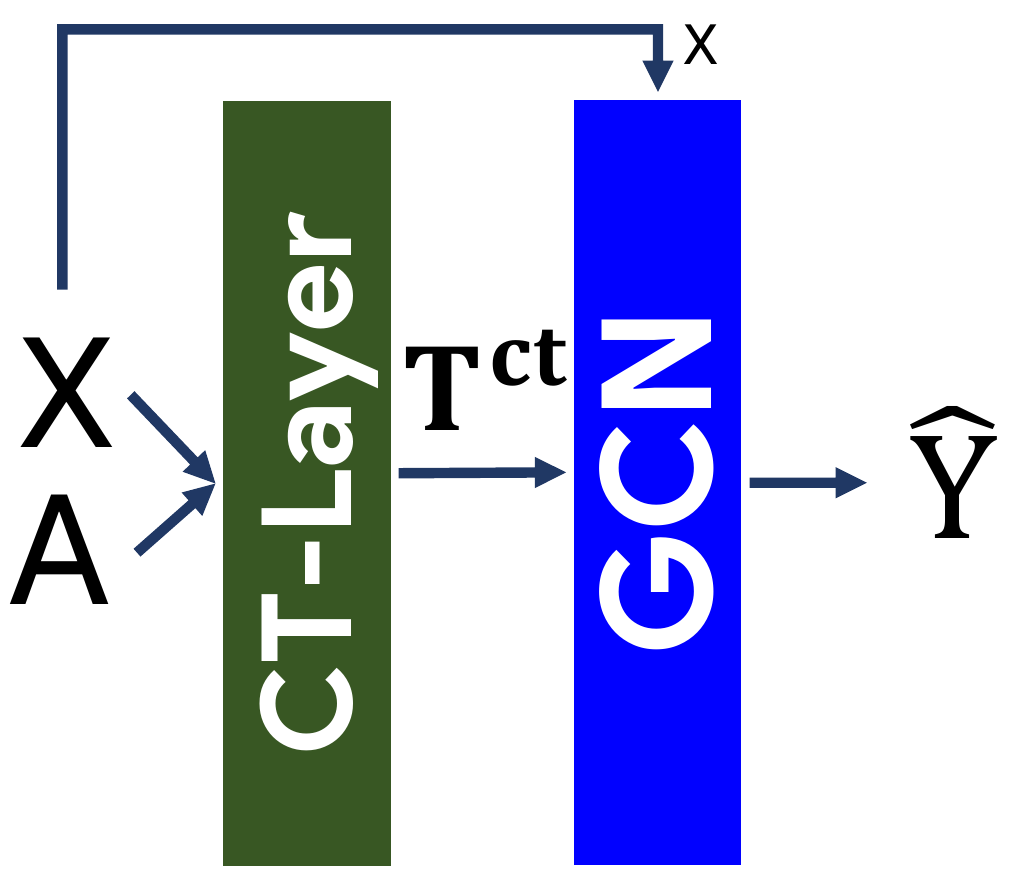}
	\caption{\centering $\mathbf{A}=\mathbf{T^{CT}}$}
    \label{fig:nodeGNN-a}
    \end{subfigure}\hspace{7mm}
	\begin{subfigure}[t]{0.26\textwidth}
	\includegraphics[width=\linewidth]{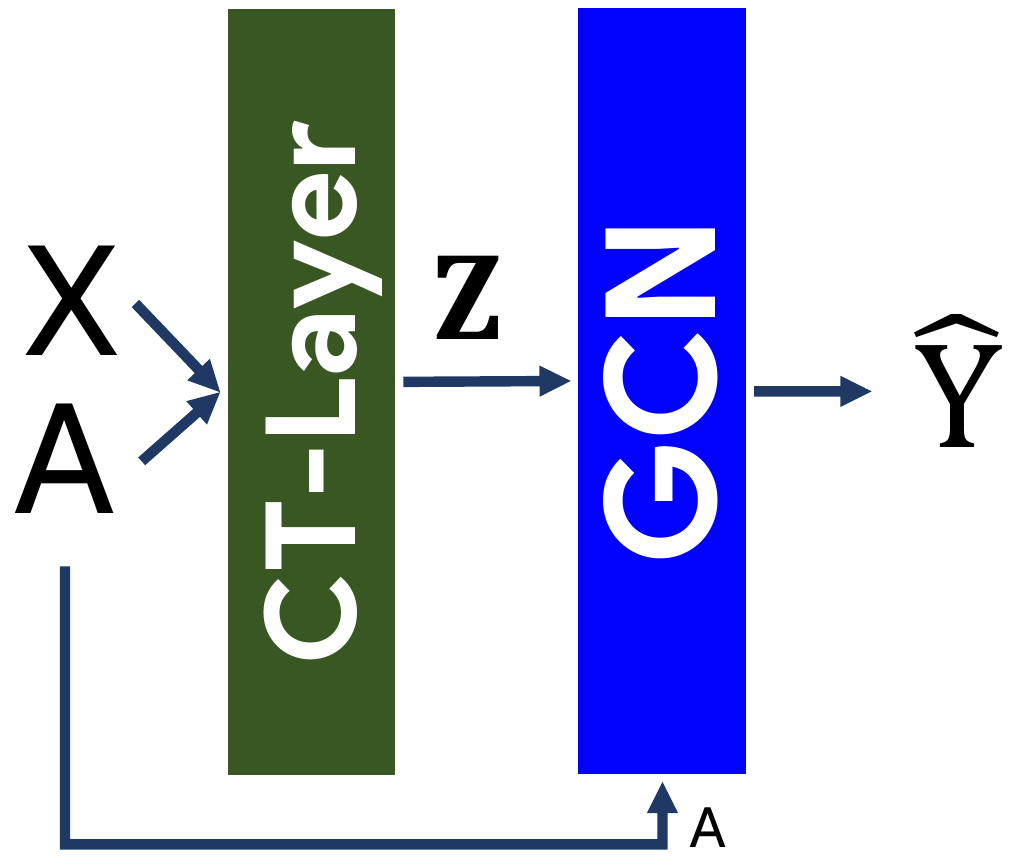}
	\caption{\centering $\mathbf{X}\parallel\mathbf{Z}$}
    \label{fig:nodeGNN-z}
    \end{subfigure}
\caption{\centering Diagrams of the GNNs used in the experiments for node classification.}
\label{fig:nodeGNN}
\end{center}
\end{figure}

A promising direction of future work would be to explore how to combine both approaches to leverage the best of each of the methods on a wide range of graphs for node classification tasks. In addition, using this learnable CT distance for modulating message passing in more sophisticated ways is planned for future work.

\subsubsection{Analysis of Correlation between Structural Properties and \ctlayer Performance}
\label{subsubsec:correlation}

To analyze the performance of our model in graphs with different structural properties, we analyze the correlation between accuracy, the graph’s assortativity, and the graph’s bottleneck ($\lambda_2$)  in COLLAB and REDDIT datasets. If the error is consistent along all levels of accuracy and gaps, the layer can generalize along different graph topologies.

As seen in Figure~\ref{fig:accuracycorrelationscatter}, Figure~\ref{fig:accuracycorrelationhistcollab}~(middle), and Figure~\ref{fig:accuracycorrelationhistreddit}~(middle), we do not identify any correlation or systematic pattern between graph classification accuracy, assortativity, and bottleneck with \ctlayer-based rewiring, since the proportion of wrong and correct predictions are regular for all levels of assortativity and bottleneck size. 

In addition, note that while there is a systematic error of the model over-predicting class 0 in the COLLAB dataset (see Figure~\ref{fig:accuracycorrelationhistcollab}), this behavior is not explained by assortativity or bottleneck size, but by the unbalanced number of graphs in each class.

\begin{figure}[ht]
\begin{center}
\includegraphics[width=0.9\textwidth]{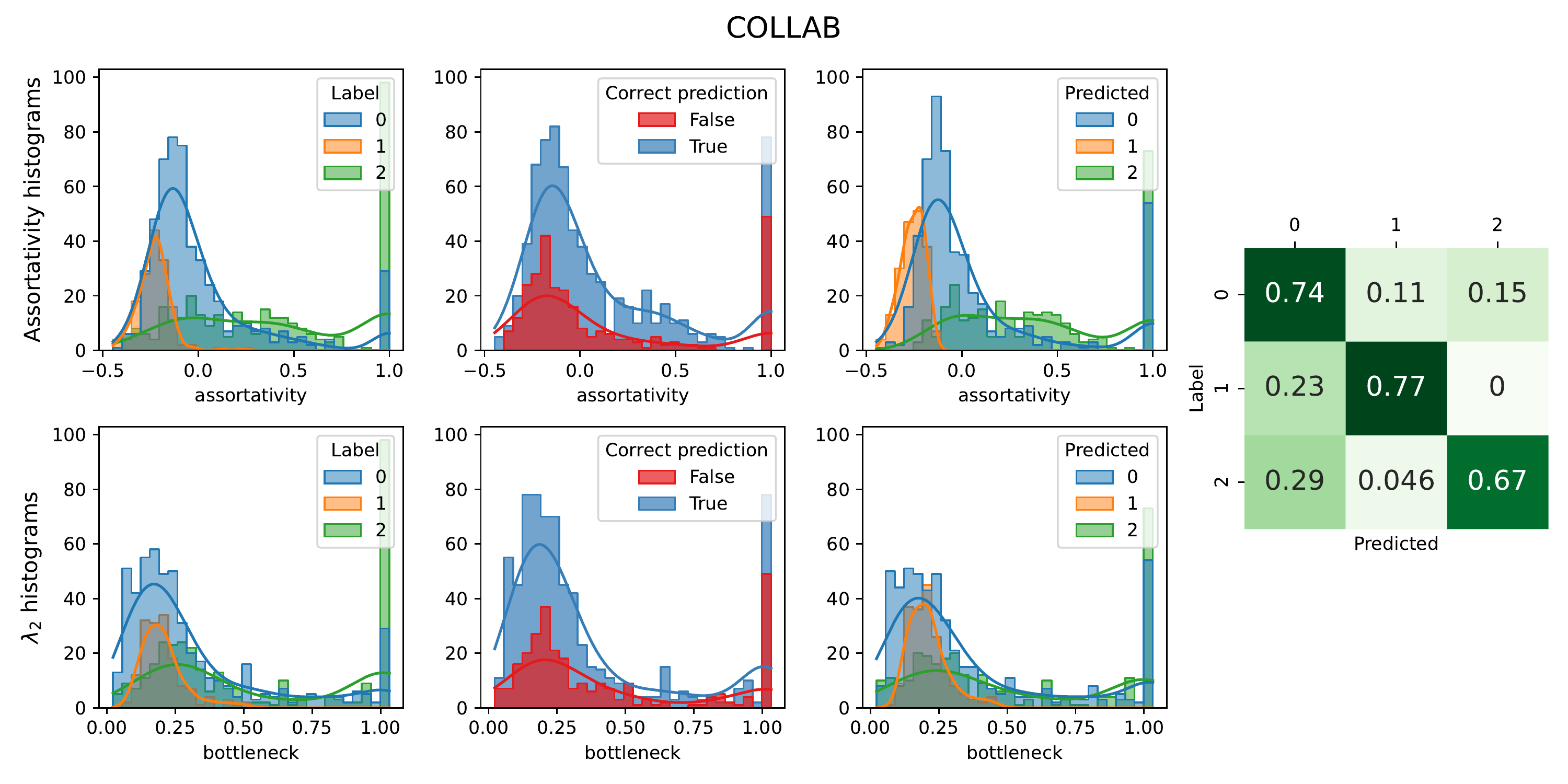}
\caption{Analysis of assortativity, bottleneck and accuracy for COLLAB dataset. Top: Histograms of assortativity. Bottom: Histograms of bottleneck size ($\lambda_2$). Both are grouped by actual label of the graph (left), by correct or wrong predictions (middle) and by predicted label (right).}
\label{fig:accuracycorrelationhistcollab}
\end{center}
\end{figure}

\begin{figure}[ht]
\begin{center}
\includegraphics[width=0.9\textwidth]{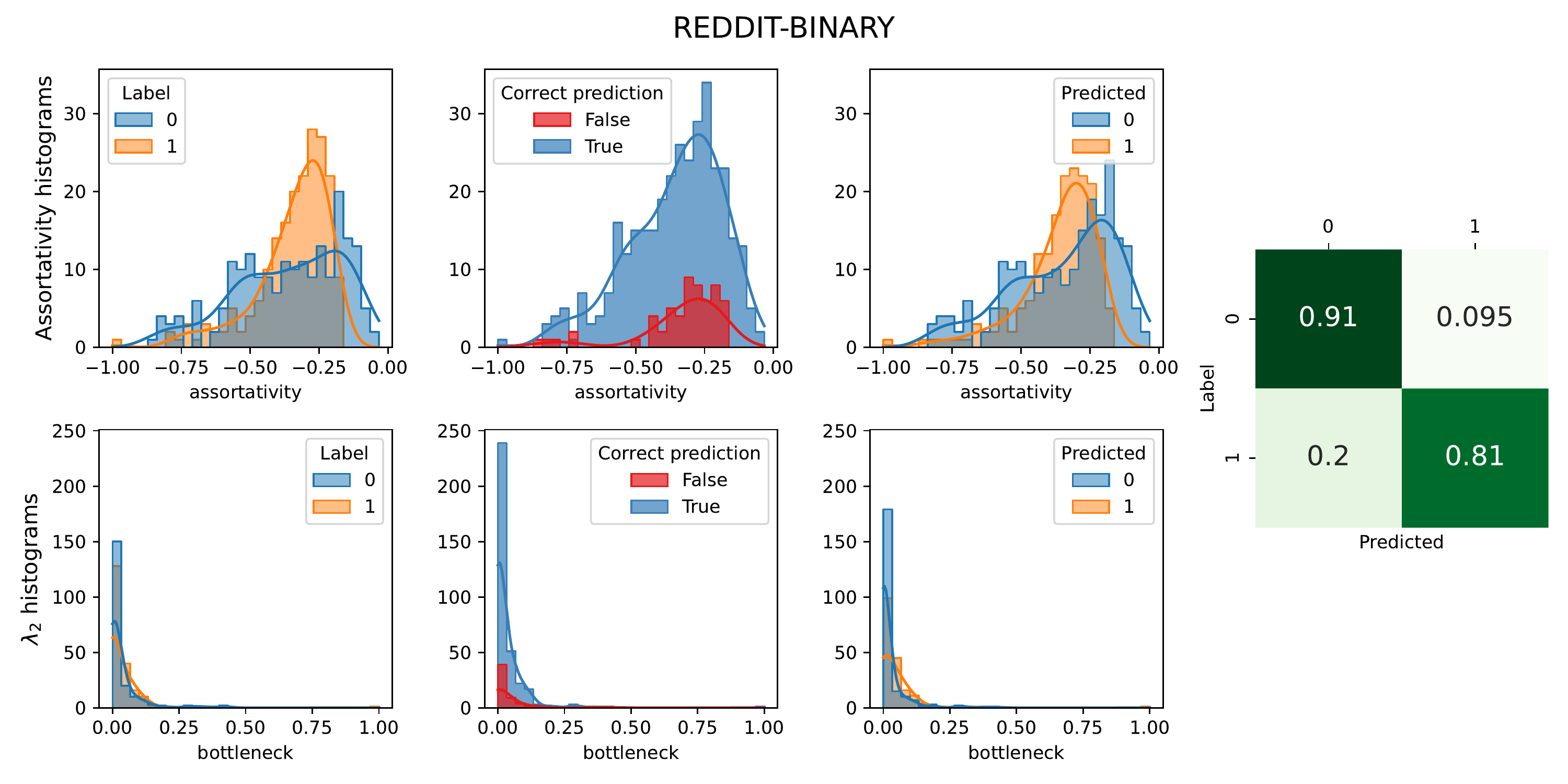}
\caption{Analysis of assortativity, bottleneck and accuracy for REDDIT-B dataset. Top: Histograms of assortativity. Bottom: Histograms of bottleneck size ($\lambda_2$). Both are grouped by actual label of the graph (left), by correct or wrong predictions (middle) and by predicted label (right).}
\label{fig:accuracycorrelationhistreddit}
\end{center}
\end{figure}

\begin{figure}[ht]
\captionsetup{singlelinecheck=off}
\begin{center}
	\begin{subfigure}[t]{0.4\textwidth}
	\includegraphics[width=\linewidth]{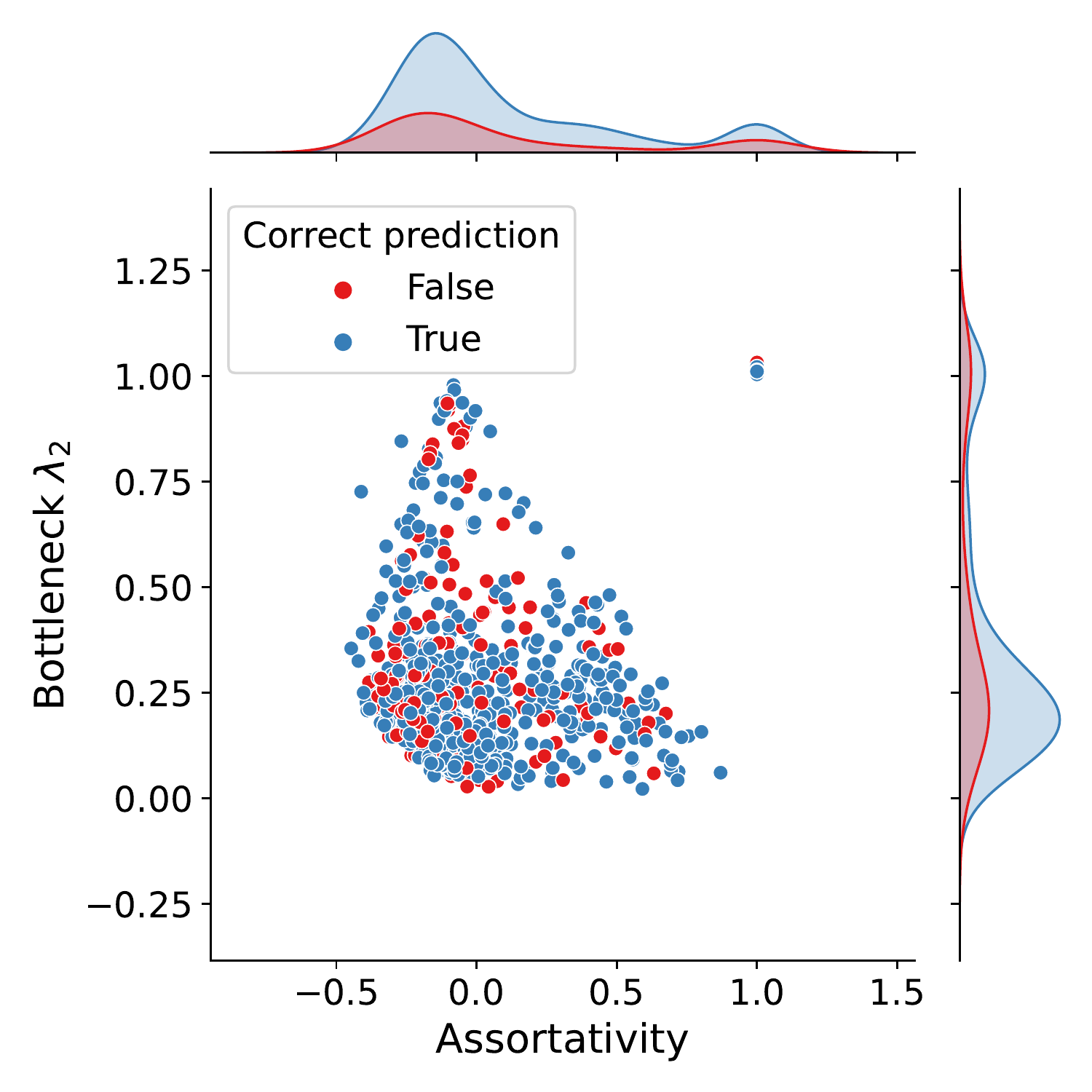}
	\caption{\centering COLLAB}
     \end{subfigure}
	\begin{subfigure}[t]{0.4\textwidth}
	\includegraphics[width=\linewidth]{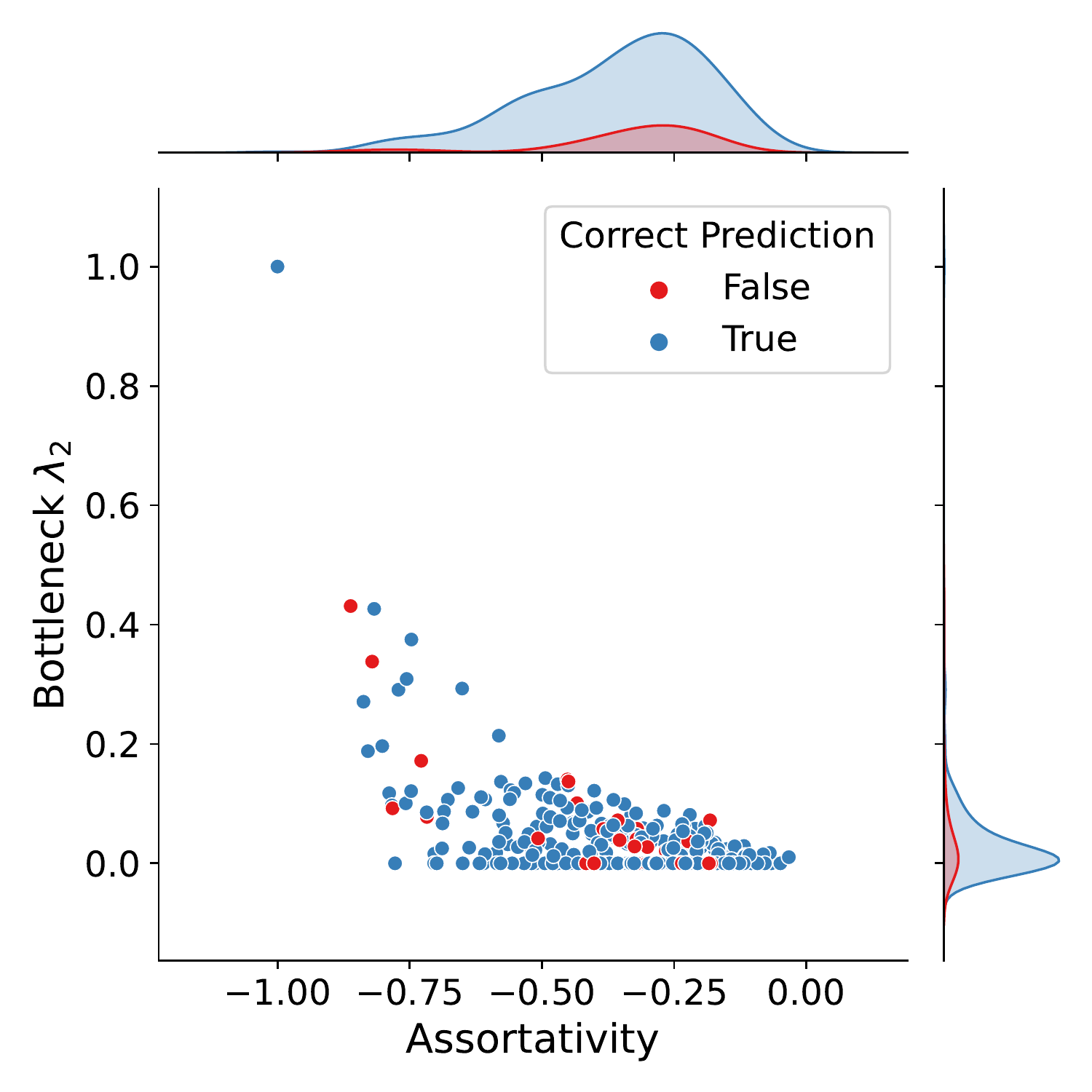}
	\caption{\centering REDDIT-B}
     \end{subfigure}
\caption{Correlation between assortativity, $\lambda_2$ and accuracy for \ctlayer. Histograms shows that the proportion of correct and wrong predictions are regular for all levels of assortativity (x axis) and bottleneck size (y axis). For the sake of clarity, these visualizations, a and b, are the combination of the 2 histograms in the middle column of Figure~\ref{fig:accuracycorrelationhistcollab} and Figure~\ref{fig:accuracycorrelationhistreddit} respectively.}
\label{fig:accuracycorrelationscatter}
\end{center}
\end{figure}

\subsubsection{Computing Infrastructure}

Table \ref{tab:computationResources} summarizes the computing infrastructure used in our experiments.

\begin{table}[ht]
\centering
\caption{Computing infrastructure.}
\label{tab:computationResources}
\begin{tabular}{lc} 

\toprule
Component & Details \\
\toprule
GPU & 2x A100-SXM4-40GB\\
RAM & 1 TiB\\
CPU & 255x AMD  7742 64-Core @ 2.25 GHz\\
OS  & Ubuntu 20.04.4 LTS\\
\bottomrule
\end{tabular}
\end{table}

\end{document}